\title{Outlier-Robust Training of Machine Learning Models}
\author{\name Rajat Talak \email talak@mit.edu \\
       \addr Laboratory of Information \& Decision Systems\\
       Massachusetts Institute of Technology\\
       Cambridge, MA 02139, USA       
       \AND
       \name Charis Georgiou \email cgeo@mit.edu \\
       \addr Department of Electrical Engineering and Computer Science\\
       Massachusetts Institute of Technology\\
       Cambridge, MA 02139, USA  
       \AND 
       \name Jingnan Shi \email jnshi@mit.edu \\
       \addr Laboratory of Information \& Decision Systems\\
       Massachusetts Institute of Technology\\
       Cambridge, MA 02139, USA  
       \AND
       \name Luca Carlone \email lcarlone@mit.edu \\
       \addr Laboratory of Information \& Decision Systems\\
       Massachusetts Institute of Technology\\
       Cambridge, MA 02139, USA  
       }
\begin{document}

\newtheorem{theorem}{Theorem}
\newtheorem{problem}{Problem}
\newtheorem{trule}[theorem]{Rule}
\newtheorem{corollary}[theorem]{Corollary}
\newtheorem{procedure}{\textbf{Procedure}}
\newtheorem{conjecture}[theorem]{Conjecture}
\newtheorem{lemma}[theorem]{Lemma}
\newtheorem{assumption}[theorem]{Assumption}
\newtheorem{definition}[theorem]{Definition}
\newtheorem{proposition}[theorem]{Proposition}
\newtheorem{remark}[theorem]{Remark}
\newtheorem{example}[theorem]{Example}

\newcommand{\cf}{\emph{cf.}\xspace}
\newcommand{\Figure}{Fig.}
\newcommand{\bdmath}{\begin{dmath}}
\newcommand{\edmath}{\end{dmath}}
\newcommand{\beq}{\begin{equation}}
\newcommand{\eeq}{\end{equation}}
\newcommand{\bdm}{\begin{displaymath}}
\newcommand{\edm}{\end{displaymath}}
\newcommand{\bea}{\begin{eqnarray}}
\newcommand{\eea}{\end{eqnarray}}
\newcommand{\beal}{\beq \begin{array}{ll}}
\newcommand{\eeal}{\end{array} \eeq}
\newcommand{\beas}{\begin{eqnarray*}}
\newcommand{\eeas}{\end{eqnarray*}}
\newcommand{\ba}{\begin{array}}
\newcommand{\ea}{\end{array}}
\newcommand{\bit}{\begin{itemize}}
\newcommand{\eit}{\end{itemize}}
\newcommand{\ben}{\begin{enumerate}}
\newcommand{\een}{\end{enumerate}}
\newcommand{\expl}[1]{&&\qquad\text{\color{gray}(#1)}\nonumber}

\makeatletter
\let\proof\thmt@original@proof
\xpatchcmd{\proof}{\itshape}{\bfseries}{}{}
\let\thmt@original@proof\proof
\makeatother

\newcommand{\newDay}[1]{
	\noindent\rule{\linewidth}{1pt}
	\texttt{Date: #1} \\
	\rule{\linewidth}{1pt}
}

\newcommand{\calA}{{\cal A}}
\newcommand{\calB}{{\cal B}}
\newcommand{\calC}{{\cal C}}
\newcommand{\calD}{{\cal D}}
\newcommand{\calE}{{\cal E}}
\newcommand{\calF}{{\cal F}}
\newcommand{\calG}{{\cal G}}
\newcommand{\calH}{{\cal H}}
\newcommand{\calI}{{\cal I}}
\newcommand{\calJ}{{\cal J}}
\newcommand{\calK}{{\cal K}}
\newcommand{\calL}{{\cal L}}
\newcommand{\calM}{{\cal M}}
\newcommand{\calN}{{\cal N}}
\newcommand{\calO}{{\cal O}}
\newcommand{\calP}{{\cal P}}
\newcommand{\calQ}{{\cal Q}}
\newcommand{\calR}{{\cal R}}
\newcommand{\calS}{{\cal S}}
\newcommand{\calT}{{\cal T}}
\newcommand{\calU}{{\cal U}}
\newcommand{\calV}{{\cal V}}
\newcommand{\calW}{{\cal W}}
\newcommand{\calX}{{\cal X}}
\newcommand{\calY}{{\cal Y}}
\newcommand{\calZ}{{\cal Z}}

\newcommand{\setA}{\textsf{A}}
\newcommand{\setB}{\textsf{B}}
\newcommand{\setC}{\textsf{C}}
\newcommand{\setD}{\textsf{D}}
\newcommand{\setE}{\textsf{E}}
\newcommand{\setF}{\textsf{F}}
\newcommand{\setG}{\textsf{G}}
\newcommand{\setH}{\textsf{H}}
\newcommand{\setI}{\textsf{I}}
\newcommand{\setJ}{\textsf{J}}
\newcommand{\setK}{\textsf{K}}
\newcommand{\setL}{\textsf{L}}
\newcommand{\setM}{\textsf{M}}
\newcommand{\setN}{\textsf{N}}
\newcommand{\setO}{\textsf{O}}
\newcommand{\setP}{\textsf{P}}
\newcommand{\setQ}{\textsf{Q}}
\newcommand{\setR}{\textsf{R}}
\newcommand{\setS}{\textsf{S}}
\newcommand{\setT}{\textsf{T}}
\newcommand{\setU}{\textsf{U}}
\newcommand{\setV}{\textsf{V}}
\newcommand{\setW}{\textsf{W}}
\newcommand{\setX}{\textsf{X}}
\newcommand{\setY}{\textsf{Y}}
\newcommand{\setZ}{\textsf{Z}}

\newcommand{\smallheading}[1]{\textit{#1}: }
\newcommand{\algostep}[1]{{\small\texttt{#1:}}\xspace}
\newcommand{\etal}{\emph{et~al.}\xspace}
\newcommand{\setal}{~\emph{et~al.}\xspace}
\newcommand{\eg}{\emph{e.g.,}\xspace}
\newcommand{\ie}{\emph{i.e.,}\xspace}
\newcommand{\myParagraph}[1]{\textbf{#1.}\xspace}
\newcommand{\M}[1]{{\bm #1}} %
\renewcommand{\boldsymbol}[1]{{\bm #1}}
\newcommand{\algoname}[1]{\textsc{#1}} %

\newcommand{\towrite}{{\color{red} To write}\xspace}
\newcommand{\xxx}{{\color{red} XXX}\xspace}
\newcommand{\AC}[1]{{\color{red} \textbf{AC}: #1}}
\newcommand{\LC}[1]{{\color{red} \textbf{LC}: #1}}
\newcommand{\FM}[1]{{\color{red} \textbf{FM}: #1}}
\newcommand{\RT}[1]{{\color{blue} \textbf{RT}: #1}}
\newcommand{\VT}[1]{{\color{blue} \textbf{VT}: #1}}
\newcommand{\hide}[1]{}
\newcommand{\wrt}{w.r.t.\xspace}
\newcommand{\highlight}[1]{{\color{red} #1}}
\newcommand{\tocheck}[1]{{\color{brown} #1}}
\newcommand{\grayout}[1]{{\color{gray} #1}}
\newcommand{\grayText}[1]{{\color{gray} \text{#1} }}
\newcommand{\grayMath}[1]{{\color{gray} #1 }}
\newcommand{\hiddenText}{{\color{gray} hidden text.}}
\newcommand{\hideWithText}[1]{\hiddenText}

\newcommand{\NA}{{\sf n/a}}
\newcommand{\versus}{\scenario{VS}\xspace}

\newcommand{\kron}{\otimes}
\newcommand{\dist}{\mathbf{dist}}
\newcommand{\iter}{\! \rm{iter.} \;}
\newcommand{\leqt}{\!\!\! < \!\!\!}
\newcommand{\geqt}{\!\!\! > \!\!\!}
\newcommand{\mysetminus}{-} %
\newcommand{\powerset}{\mathcal{P}}
\newcommand{\Int}[1]{ { {\mathbb Z}^{#1} } }
\newcommand{\Natural}[1]{ { {\mathbb N}^{#1} } }
\newcommand{\Complex}[1]{ { {\mathbb C}^{#1} } }
\newcommand{\one}{ {\mathbf{1}} }
\newcommand{\subject}{\text{ subject to }}
\newcommand{\normsq}[2]{\left\|#1\right\|^2_{#2}}
\newcommand{\norm}[1]{\left\| #1 \right\|}
\newcommand{\normsqs}[2]{\|#1\|^2_{#2}}
\newcommand{\infnorm}[1]{\left\|#1\right\|_{\infty}}
\newcommand{\zeronorm}[1]{\|#1\|_{0}}
\newcommand{\onenorm}[1]{\|#1\|_{1}}
\newcommand{\lzero}{\ell_{0}}
\newcommand{\lone}{\ell_{1}}
\newcommand{\linf}{\ell_{\infty}}

\newcommand{\E}{{\mathbb{E}}}
\newcommand{\EV}{\mathbb{E}}
\newcommand{\erf}{{\mathbf{erf}}}
\newcommand{\prob}[1]{{\mathbb P}\left(#1\right)}
\newcommand{\tran}{^{\mathsf{T}}}
\newcommand{\traninv}{^{-\mathsf{T}}}
\newcommand{\diag}[1]{\mathrm{diag}\left(#1\right)}
\newcommand{\trace}[1]{\mathrm{tr}\left(#1\right)}
\newcommand{\conv}[1]{\mathrm{conv}\left(#1\right)}
\newcommand{\polar}[1]{\mathrm{polar}\left(#1\right)}
\newcommand{\rank}[1]{\mathrm{rank}\left(#1\right)}
\newcommand{\e}{{\mathrm e}}
\newcommand{\inv}{^{-1}}
\newcommand{\pinv}{^\dag}
\newcommand{\until}[1]{\{1,\dots, #1\}}
\newcommand{\ones}{{\mathbf 1}}
\newcommand{\zero}{{\mathbf 0}}
\newcommand{\eye}{{\mathbf I}}
\newcommand{\vect}[1]{\left[\begin{array}{c}  #1  \end{array}\right]}
\newcommand{\matTwo}[1]{\left[\begin{array}{cc}  #1  \end{array}\right]}
\newcommand{\matThree}[1]{\left[\begin{array}{ccc}  #1  \end{array}\right]}
\newcommand{\dss}{\displaystyle}
\newcommand{\Real}[1]{ { {\mathbb R}^{#1} } }
\newcommand{\reals}{\Real{}}
\newcommand{\opt}{^{\star}}
\newcommand{\only}{^{\alpha}}
\newcommand{\copt}{^{\text{c}\star}}
\newcommand{\atk}{^{(k)}}
\newcommand{\att}{^{(t)}}
\newcommand{\at}[1]{^{(#1)}}
\newcommand{\attau}{^{(\tau)}}
\newcommand{\atc}[1]{^{(#1)}}
\newcommand{\atK}{^{(K)}}
\newcommand{\atj}{^{(j)}}
\newcommand{\projector}{{\tt projector}}
\newcommand{\setdef}[2]{ \{#1 \; {:} \; #2 \} }
\newcommand{\smalleye}{\left(\begin{smallmatrix}1&0\\0&1\end{smallmatrix}\right)}

\newcommand{\SEtwo}{\ensuremath{\mathrm{SE}(2)}\xspace}
\newcommand{\SE}[1]{\ensuremath{\mathrm{SE}(#1)}\xspace}
\newcommand{\SEthree}{\ensuremath{\mathrm{SE}(3)}\xspace}
\newcommand{\SOtwo}{\ensuremath{\mathrm{SO}(2)}\xspace}
\newcommand{\SOthree}{\ensuremath{\mathrm{SO}(3)}\xspace}
\newcommand{\Othree}{\ensuremath{\mathrm{O}(3)}\xspace}
\newcommand{\SOn}{\ensuremath{\mathrm{SO}(n)}\xspace}
\newcommand{\SO}[1]{\ensuremath{\mathrm{SO}(#1)}\xspace}
\newcommand{\On}{\ensuremath{\mathrm{O}(n)}\xspace}
\newcommand{\sotwo}{\ensuremath{\mathrm{so}(2)}\xspace}
\newcommand{\sothree}{\ensuremath{\mathrm{so}(3)}\xspace}
\newcommand{\intexpmap}[1]{\mathrm{Exp}\left(#1\right)}
\newcommand{\intlogmap}[1]{\mathrm{Log}\left(#1\right)}
\newcommand{\logmapz}[1]{\mathrm{Log}_0(#1)}
\newcommand{\loglikelihood}{\mathrm{log}\calL}
\newcommand{\intprinlogmap}{\mathrm{Log}_0}
\newcommand{\expmap}[1]{\intexpmap{#1}}
\newcommand{\expmaps}[1]{\langle #1 \rangle_{2\pi}}
\newcommand{\biggexpmap}[1]{\left\langle #1 \right\rangle_{2\pi}}
\newcommand{\logmap}[1]{\intlogmap{#1}}
\newcommand{\round}[1]{\mathrm{round}\left( #1 \right)}
\newcommand{\Rthree}{\ensuremath{\mathbb{R}^3}\xspace}
\newcommand{\Rtwo}{\ensuremath{\mathbb{R}^{2}}\xspace}
\newcommand{\R}[1]{\ensuremath{\mathbb{R}^{#1}}\xspace}

\newcommand{\MA}{\M{A}}
\newcommand{\MB}{\M{B}}
\newcommand{\MC}{\M{C}}
\newcommand{\MD}{\M{D}}
\newcommand{\ME}{\M{E}}
\newcommand{\MJ}{\M{J}}
\newcommand{\MK}{\M{K}}
\newcommand{\MG}{\M{G}}
\newcommand{\MM}{\M{M}}
\newcommand{\MN}{\M{N}}
\newcommand{\MP}{\M{P}}
\newcommand{\MQ}{\M{Q}}
\newcommand{\MU}{\M{U}}
\newcommand{\MR}{\M{R}}
\newcommand{\MS}{\M{S}}
\newcommand{\MI}{\M{I}}
\newcommand{\MV}{\M{V}}
\newcommand{\MF}{\M{F}}
\newcommand{\MH}{\M{H}}
\newcommand{\ML}{\M{L}}
\newcommand{\MO}{\M{O}}
\newcommand{\MT}{\M{T}}
\newcommand{\MX}{\M{X}}
\newcommand{\MY}{\M{Y}}
\newcommand{\MW}{\M{W}}
\newcommand{\MZ}{\M{Z}}
\newcommand{\MSigma}{\M{\Sigma}}
\newcommand{\MOmega}{\M{\Omega}}
\newcommand{\MPhi}{\M{\Phi}}
\newcommand{\MPsi}{\M{\Psi}}
\newcommand{\MDelta}{\M{\Delta}}
\newcommand{\MLambda}{\M{\Lambda}}

\newcommand{\vzero}{\boldsymbol{0}}
\newcommand{\vone}{\boldsymbol{1}}
\newcommand{\va}{\boldsymbol{a}}
\newcommand{\vh}{\boldsymbol{h}}
\newcommand{\vb}{\boldsymbol{b}}
\newcommand{\vc}{\boldsymbol{c}}
\newcommand{\vd}{\boldsymbol{d}}
\newcommand{\ve}{\boldsymbol{e}}
\newcommand{\vf}{\boldsymbol{f}}
\newcommand{\vg}{\boldsymbol{g}}
\newcommand{\vk}{\boldsymbol{k}}
\newcommand{\vl}{\boldsymbol{l}}
\newcommand{\vn}{\boldsymbol{n}}
\newcommand{\vo}{\boldsymbol{o}}
\newcommand{\vp}{\boldsymbol{p}}
\newcommand{\vq}{\boldsymbol{q}}
\newcommand{\vr}{\boldsymbol{r}}
\newcommand{\vs}{\boldsymbol{s}}
\newcommand{\vu}{\boldsymbol{u}}
\newcommand{\vv}{\boldsymbol{v}}
\newcommand{\vt}{\boldsymbol{t}}
\newcommand{\vxx}{\boldsymbol{x}}
\newcommand{\vy}{\boldsymbol{y}}
\newcommand{\vw}{\boldsymbol{w}}
\newcommand{\vzz}{\boldsymbol{z}}
\newcommand{\vdelta}{\boldsymbol{\delta}}
\newcommand{\vgamma}{\boldsymbol{\gamma}}
\newcommand{\vlambda}{\boldsymbol{\lambda}}
\newcommand{\vtheta}{\boldsymbol{\theta}}
\newcommand{\valpha}{\boldsymbol{\alpha}}
\newcommand{\vbeta}{\boldsymbol{\beta}}
\newcommand{\vnu}{\boldsymbol{\nu}}
\newcommand{\vmu}{\boldsymbol{\mu}}
\newcommand{\vepsilon}{\boldsymbol{\epsilon}}
\newcommand{\vtau}{\boldsymbol{\tau}}

\newcommand{\Rtheta}{\boldsymbol{R}}
\newcommand{\symf}{f} %

\newcommand{\angledomain}{(-\pi,+\pi]}

\newcommand{\MCB}{\mathsf{MCB}}
\newcommand{\FCM}{\mathsf{FCM}}
\newcommand{\FCB}{\mathsf{FCB}}
\newcommand{\cyclemap}[1]{\calC^{\calG}\left(#1\right)}
\newcommand{\incidencemap}[1]{\calA^{\calG}\left(#1\right)}
\newcommand{\cyclemapk}{\calC^{\calG}_{k}}
\newcommand{\incidencemapij}{\calA^{\calG}_{ij}}
\renewcommand{\ij}{_{ij}}
\newcommand{\foralledges}{\forall(i,j) \in \calE}
\newcommand{\sumalledges}{
     \displaystyle
     \sum_{(i,j) \in \calE}}
\newcommand{\sumalledgesm}{
     \displaystyle
     \sum_{i=1}^{m}}
\newcommand{\T}{\mathsf{T}}
\newcommand{\To}{\T_{\rm o}}
\newcommand{\Tm}{\T_{\rm m}}
\newcommand{\MCBa}{\MCB_{\mathsf{a}}}
\newcommand{\FCBo}{\FCB_{\mathsf{o}}}
\newcommand{\FCBm}{\FCB_{\mathsf{m}}}

\newcommand{\algoonlyname}{MOLE2D}
\newcommand{\algoml}{{\smaller\sf \algoonlyname}\xspace}
\newcommand{\algocyclebasis}{\algoname{compute-cycle-basis}}
\newcommand{\scenario}[1]{{\smaller \sf#1}\xspace}
\newcommand{\toro}{{\smaller\sf Toro}\xspace}
\newcommand{\gtwoo}{{\smaller\sf g2o}\xspace}
\newcommand{\gtwooST}{{\smaller\sf g2oST}\xspace}
\newcommand{\gtwood}{{\smaller\sf g2o{10}}\xspace}
\newcommand{\gtsam}{{\smaller\sf gtsam}\xspace}
\newcommand{\isam}{{\smaller\sf iSAM}\xspace}
\newcommand{\lago}{{\smaller\sf LAGO}\xspace}
\newcommand{\egtwoo}{{\smaller\sf \algoonlyname+g2o}\xspace}

\newcommand{\rim}{\scenario{rim}}
\newcommand{\cubicle}{\scenario{cubicle}}
\newcommand{\sphere}{\scenario{sphere}}
\newcommand{\sphereHard}{\scenario{sphere-a}}
\newcommand{\garage}{\scenario{garage}}
\newcommand{\torus}{\scenario{torus}}
\newcommand{\oneloop}{\scenario{circle}}
\newcommand{\intel}{\scenario{INTEL}}
\newcommand{\bovisa}{\scenario{Bovisa}}
\newcommand{\bov}{\scenario{B25b}}
\newcommand{\fra}{\scenario{FR079}}
\newcommand{\frb}{\scenario{FRH}}
\newcommand{\csail}{\scenario{CSAIL}}
\newcommand{\Ma}{\scenario{M3500}}
\newcommand{\Mb}{\scenario{M10000}}
\newcommand{\ATE}{\scenario{ATE}}
\newcommand{\CVX}{\scenario{CVX}}
\newcommand{\NEOS}{\scenario{NEOS}}
\newcommand{\sdptThree}{\scenario{sdpt3}}
\newcommand{\MOSEK}{\scenario{MOSEK}}
\newcommand{\NESTA}{\scenario{NESTA}}
\newcommand{\vertigo}{\scenario{Vertigo}}
\newcommand{\SDPA}{\scenario{SDPA}}

\newcommand{\cvx}{{\sf cvx}\xspace}

\newcommand{\blue}[1]{{\color{blue}#1}}
\newcommand{\green}[1]{{\color{green}#1}}
\newcommand{\red}[1]{{\color{red}#1}}

\newcommand{\linkToPdf}[1]{\href{#1}{\blue{(pdf)}}}
\newcommand{\linkToPpt}[1]{\href{#1}{\blue{(ppt)}}}
\newcommand{\linkToCode}[1]{\href{#1}{\blue{(code)}}}
\newcommand{\linkToWeb}[1]{\href{#1}{\blue{(web)}}}
\newcommand{\linkToVideo}[1]{\href{#1}{\blue{(video)}}}
\newcommand{\linkToMedia}[1]{\href{#1}{\blue{(media)}}}
\newcommand{\award}[1]{\xspace} %

\newcommand{\vpose}{\boldsymbol{x}}
\newcommand{\vz}{\boldsymbol{z}}
\newcommand{\vDelta}{\boldsymbol{\Delta}}
\newcommand{\vposesub}{\hat{\vpose}}
\newcommand{\vpossub}{\hat{\vpos}}
\newcommand{\vthetasub}{\hat{\vtheta}}
\newcommand{\Pthetasub}{\MP_\vtheta}
\newcommand{\thetasub}{\hat{\theta}}
\newcommand{\vposecorr}{\tilde{\vpose}}
\newcommand{\vposcorr}{\tilde{\vpos}}
\newcommand{\vthetacorr}{\tilde{\vtheta}}
\newcommand{\vposestar}{{\vpose}^{\star}}
\newcommand{\vposstar}{{\vpos}^{\star}}
\newcommand{\vthetastar}{{\vtheta}^{\star}}
\newcommand{\vcthetastar}{{\vctheta}^{\star}}
\newcommand{\pose}{\boldsymbol{x}}
\newcommand{\pos}{\boldsymbol{p}}
\newcommand{\mease}{z} %
\newcommand{\meas}{\boldsymbol{z}} %
\newcommand{\meashat}{\hat{\meas}}

\newcommand{\identity}[1]{\ensuremath{\mathbb{I}\{#1\}}\xspace}

\newtheorem{algo}{Algorithm}

\newcommand{\Nin}{\ensuremath{n_{I}}\xspace}
\newcommand{\Nout}{\ensuremath{n_{O}}\xspace}
\newcommand{\FracOut}{\ensuremath{\lambda}\xspace}

\newcommand{\cei}{\ensuremath{f^{\text{CE}}_i(\vw)}\xspace}

\newcommand{\Lp}{\ensuremath{l_p}\xspace}

\newcommand{\algoName}{\ensuremath{\text{AAA}}\xspace}
\newcommand{\algoNameLong}{adaptive alternation algorithm\xspace}
\newcommand{\algoNameLongCaps}{Adaptive Alternation Algorithm\xspace}
\newcommand{\algoNameT}{\ensuremath{\text{AAA}_T}\xspace}
\newcommand{\algoNameO}{\ensuremath{\text{AAA}_1}\xspace}
 
\maketitle

\begin{abstract}%
Robust training of machine learning models in the presence of outliers has garnered attention across various domains. 
The use of robust losses is a popular approach and is known to mitigate the impact of outliers. 
We bring to light two literatures that have diverged in their ways of designing robust losses: one using M-estimation, which is popular in robotics and computer vision, and another using a risk-minimization framework, which is popular in deep learning. 
We {first} show that a simple modification of the Black-Rangarajan duality provides a unifying view.
The modified duality brings out a definition of a \emph{robust loss kernel} $\sigma$ that is satisfied by robust losses in both the literatures.
{Secondly,} using the modified duality, we propose an
\emph{\algoNameLongCaps} (\algoName) for training machine learning models with outliers. 
The algorithm iteratively trains the model by using a weighted version of the non-robust loss, while updating the weights at each iteration.
The algorithm 
is 
augmented with a novel 
parameter update rule by interpreting the weights as inlier probabilities, and obviates the need for complex parameter tuning. 
{Thirdly},
we investigate convergence 
of the 
\algoNameLong
to outlier-free optima.
Considering arbitrary outliers (\ie with no distributional assumption on the outliers), we show that the use of robust loss kernels $\sigma$ increases the region of convergence. 
We experimentally show the efficacy of our algorithm on regression, classification, and neural scene reconstruction problems.%
\footnote{We release our implementation code: \href{https://github.com/MIT-SPARK/ORT}{https://github.com/MIT-SPARK/ORT}.}
\end{abstract}

\section{Introduction}
\label{sec:intro}
Humans are good at detecting and isolating outliers~\citep{Chai20icmd-HumanloopOutlier}. 
This is not the case when it comes to training machine learning models~\citep{Sukhbaatar15iclr-TrainingConvolutional, Wang24cvpr-BenchmarkingRobustness, Sabour23cvpr-robustnerf}.
Robustly training deep learning models in the presence of outliers is an important challenge. In particular, it can offset the high cost of obtaining accurate annotations. Many works now implement automatic or semi-automatic annotation pipelines which can be leveraged to train models \citep{Armeni16cvpr-3DsemanticParsing, %
Chang173dv-Matterport3D, Tkachenko20misc-LabelStudio, Yang21arxiv-Auto4DLearning, Gadre23nips-datacomp}. %
Recent efforts in robotics envision robots that
 can self-train their models by collecting and self-annotating data  \citep{Schmidt20rr-SelfdirectedLifelong, Deng20icra-Selfsupervised6D, Lu22iros-selftraining, Talak23tro-c3po, Shi23rss-ensemble, Jawaid24iros-cep, Wang24taes-domaingapSatellite}. %
\begin{wrapfigure}{r}{0.53\textwidth}
\centering
\vspace{-3mm}
\begin{subfigure}{.17\textwidth}
  \centering
  \includegraphics[width=\linewidth]{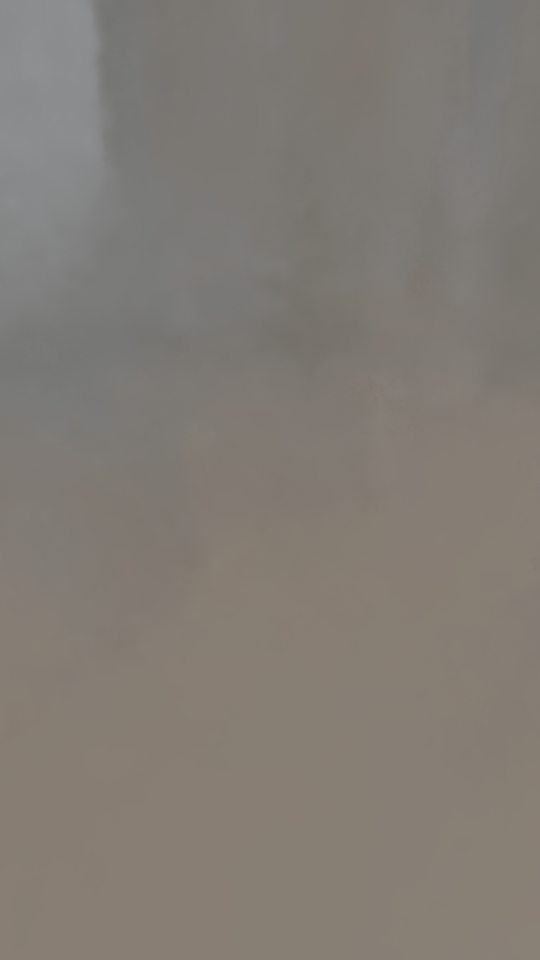}
  \label{fig:tl00}
\end{subfigure}
\begin{subfigure}{.17\textwidth}
  \centering
  \includegraphics[width=\linewidth]{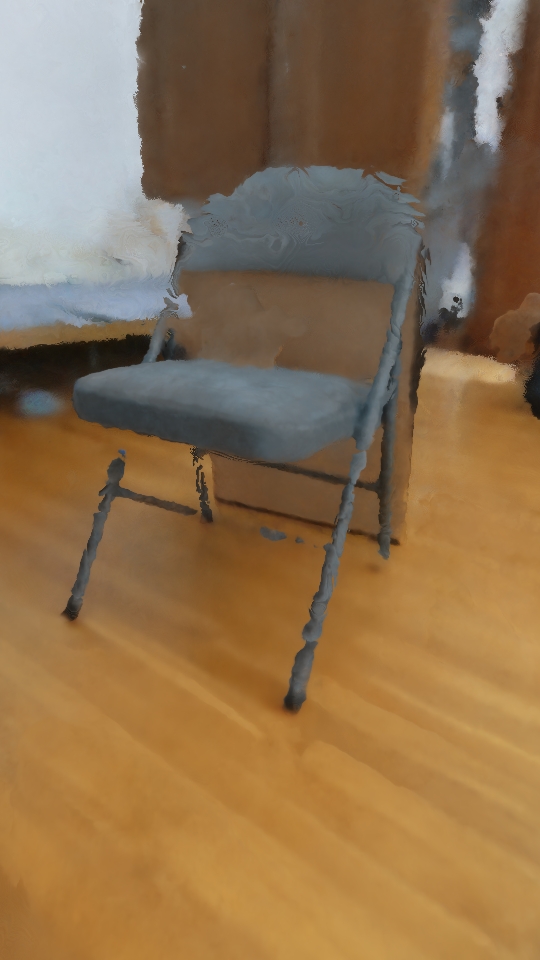}
  \label{fig:adam00}
\end{subfigure}
\begin{subfigure}{.17\textwidth}
  \centering
  \includegraphics[width=\linewidth]{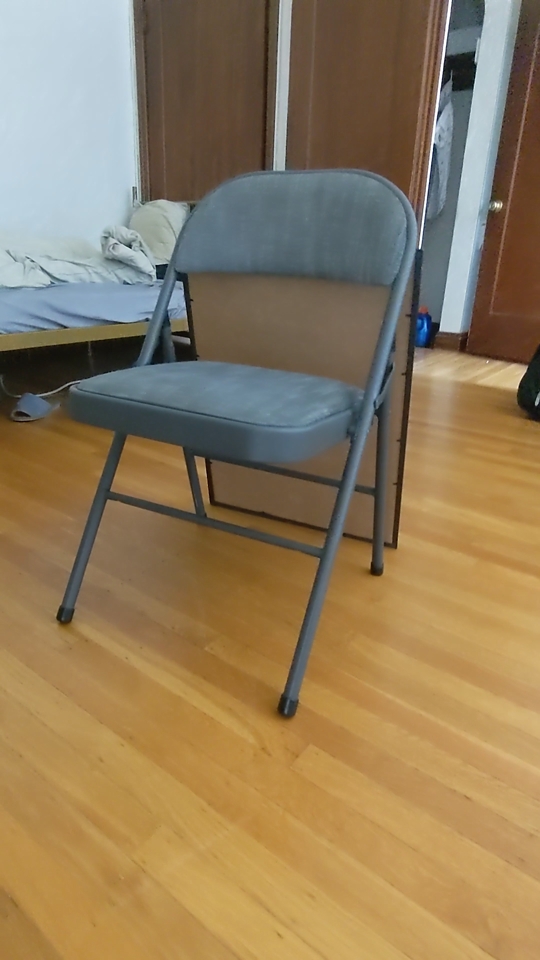}
  \label{fig:gt00}
\end{subfigure}
\vspace{-3mm}
\caption{
Nerfacto~\citep{Tancik23siggraph-nerfstudio} reconstruction results after $80\%$ of the training pixels have been perturbed by outliers. 
(left) Training with the original Adam optimizer. (middle) Training with our \algoNameLongCaps with Truncated Loss. (right) Ground truth. \label{fig:opening-nerfacto}
}
\vspace{-10mm}
\end{wrapfigure} 
In typical learning problems one computes the unknowns (\eg network weights) by optimizing a loss function $f_i$ for each training sample $i$:
\begin{equation}
\begin{aligned}
\label{eq:intro-original-obj}
& \underset{\vw \in \setW}{\text{Minimize}}
& &\sum_{i =1}^{n} f_i(\vw),
\end{aligned}
\end{equation}
where $\setW$ is the set of allowed parameters.
For instance, $f_i(\cdot)$ may be the cross-entropy loss or the $\ell_2$ norm squared measuring the mismatch between the $i$-th training label and the corresponding network prediction.

M-estimation~\citep{Huber81} suggests that in the presence of outliers, one needs to wrap typical losses into a robust loss function $\rho$:
\begin{equation}
\begin{aligned}
\label{eq:intro-m-est}
& \underset{\vw \in \setW}{\text{Minimize}}
& &\sum_{i =1}^{n} \rho(f_i(\vw)),
\end{aligned}
\end{equation}
where $\rho$ is responsible for mitigating the impact of terms with high loss (\ie high $f_i(\vw)$).
Many robust losses have been proposed in the literature to mitigate the effect of outliers. Recent works in robust estimation in robotics have shown that using a parameterized robust loss $\rho$, with adaptive parameter tuning during training, yields better outlier mitigation (see Section~\ref{sec:lit-robust-cv}). 
Many robust losses have also been proposed in training deep learning models for the task of multi-label classification (see Section~\ref{sec:lit-robust-dl}). However, we observe a divergence in the principles that govern the design of robust losses in (a) robotics and computer vision, where works mostly use robust estimation frameworks, and in (b) training deep learning models, which mostly relies on  risk-minimization frameworks (see Section~\ref{sec:back}). 

Robust estimation as applied in robotics and computer vision often relies on the insight that problem~\eqref{eq:intro-m-est} can be written down as a weighted least squares problem
\begin{equation}
\begin{aligned}
\label{eq:intro-m-est-dual}
& \underset{\vw \in \setW,~u_i \in [0, 1]}{\text{Minimize}}
& &\sum_{i =1}^{n} u_i \cdot f^2_i(\vw) + \Psi_{\rho}(u_i),
\end{aligned}
\end{equation}
where $\Psi_\rho(u)$ is an outlier process that is determined by the Black-Rangarajan duality~\citep{Black96ijcv-unification}. The equivalence between~\eqref{eq:intro-m-est} and~\eqref{eq:intro-m-est-dual} is useful for robotics and computer vision applications as common robust estimation problems can be re-written as weighted non-linear least squares, which are typically easier to solve. 
However, this framework cannot be applied directly to machine learning problems. For example, if $f_i(\vw)$ is the cross-entropy loss, minimizing the squared cross-entropy loss does not make an equal sense. 

On the other hand, when we consider classification problems in machine learning, 
the literature uses a risk-minimization framework to develop the notion of noise-tolerant loss~\citep{Ghosh15arxiv-MakingRisk, Ghosh17arxiv-RobustLoss}.  
Let model weight $\vw_\FracOut$ minimize risk when there are $\FracOut$ fraction of outliers. \cite{Ghosh15arxiv-MakingRisk, Ghosh17arxiv-RobustLoss} define a loss to be noise-tolerant when $\vw_\FracOut = \vw_0$ (\ie equal to the optimal weights $\vw_0$ when there are no outliers).
Several noise-tolerant losses have been proposed since then that have shown improved performance at mitigating the presence of outliers in the training data. These losses include generalized cross entropy, symmetric cross entropy, reverse cross entropy, Taylor cross entropy, among others (see Section~\ref{sec:lit-robust-dl}). While the setup has some advantages, it suffers from some limitations; for instance, one has to assume an outlier distribution to derive the noise-tolerant loss. 
As an instance of triviality that results from this, one can show that the mean square error (MSE) loss is noise-tolerant under arbitrarily severe zero-mean outliers. However, it is well known that MSE is not robust for finite sample problems (\ie $n$ in~\eqref{eq:intro-original-obj} is finite), and even one outlier can significantly degrade the corresponding estimate \citep{Huber81}.

Algorithms have been proposed to train deep learning models in the presence of outliers in the training data (see Sections~\ref{sec:lit-robust-dl} and~\ref{sec:lit-sgd-convergence}). These have been either heuristic approaches applied on the specific task of multi-label classification~\citep{Elesedy23arxiv-uclip, Li20iclr-dividemix}, or algorithms for solving a general stochastic optimization problem, albeit with outliers~\citep{Menon20iclr-clippinglabelnoise, Merad24tmlr-RobustStochastic, Chhabra24arxiv-OutlierGradient, Hu24pmlr-OutlierRobust, Shen19icml-LearningBad, Shah20pmlr-ChoosingSample, Prasad20jrss-robustEstimation}.  
These heuristics do not provide theoretical guarantees. 
The methods based on stochastic optimization analyze their convergence properties assuming an outlier distribution.
To the best of our knowledge, there are no works that analyze the region of convergence for stochastic gradient-based algorithms under arbitrary outliers.
See Section~\ref{sec:lit} for a detailed overview of the related works.

\begin{figure*}
\centering
\begin{subfigure}{.32\textwidth}
  \centering
  \includegraphics[trim={30 23 40 30},clip,width=\linewidth]{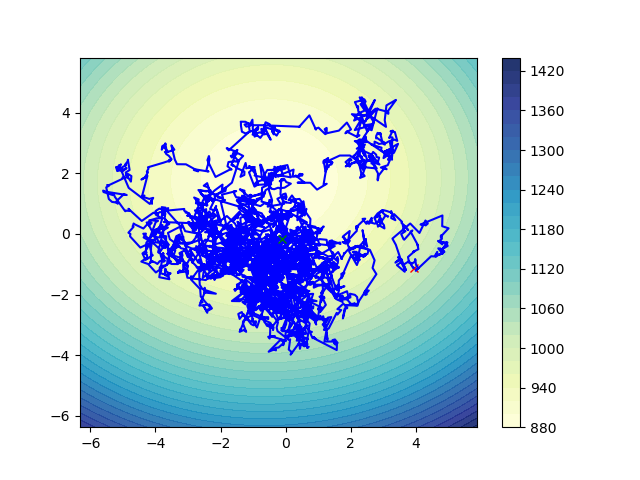}
  \caption{}
  \label{fig:sub1}
\end{subfigure}
\begin{subfigure}{.32\textwidth}
  \centering
  \includegraphics[trim={30 23 40 30},clip,width=\linewidth]{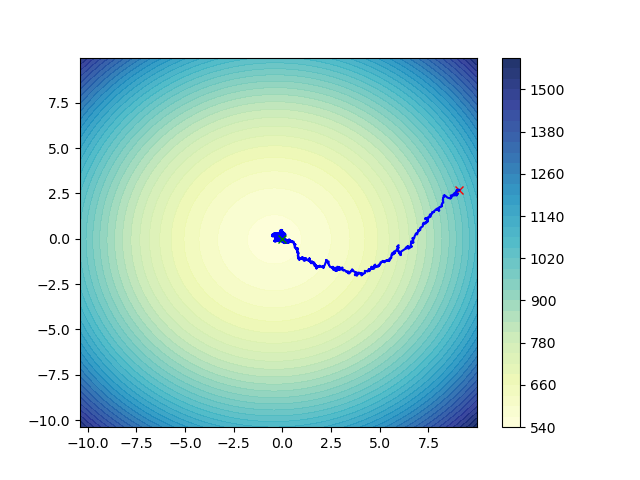}
  \caption{}
  \label{fig:sub2}
\end{subfigure}
\begin{subfigure}{.32\textwidth}
  \centering
  \includegraphics[trim={30 23 40 30},clip,width=\linewidth]{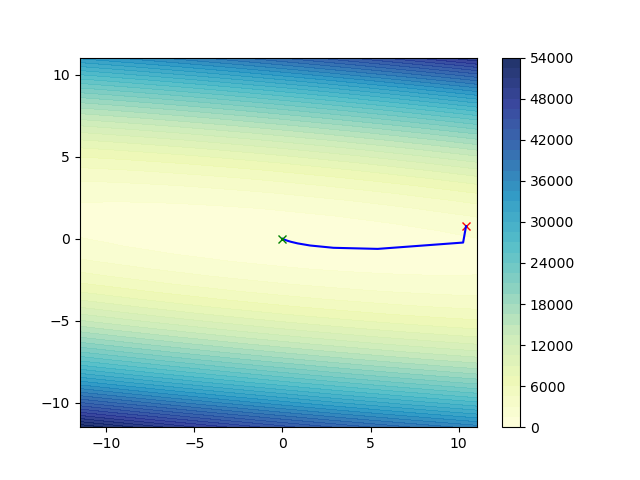}
  \caption{}
  \label{fig:sub4}
\end{subfigure}
\caption{Trajectory of (a) SGD (batch size = 1), (b) \algoNameLongCaps with Truncated Loss (batch size = 1), and (c) Gradient Descent, for a linear regression problem with zero-mean outliers. The presence of outliers in the training data introduces large perturbations into SGD. Our algorithm stabilizes the descent and the variance in the gradient estimate is lower (Lemma~\ref{lem:training-algo-variance}). We observe its behavior to be close to the full gradient descent, where the gradient estimate is exact, given zero-mean outliers.}
\label{fig:training-algo-variance}
\end{figure*}
\subsection{Contribution}
This paper makes the following key contributions:
\begin{enumerate}
	\item We expose two divergent approaches to designing robust loss functions, for training machine learning models in the presence of outliers. The first, based on a robust estimation framework, and the second, based on a risk minimization framework. We highlight that the standard Black-Rangarajan duality, proposed in the context of robust estimation, is not directly applicable to the risk-minimization setting, as it reformulates the M-estimation problem as a weighted least squares problem.
	
	\item We show that a simple modification of the Black-Rangarajan duality preserves the problem structure and makes it applicable to machine learning problems. The modification ensures that the square term $f_i^2(\vw)$ in~\eqref{eq:intro-m-est-dual} becomes linear, \ie $f_i(\vw)$. %
		Most importantly, the modified Black-Rangarajan duality gives rise to a definition of a robust loss kernel $\sigma$. We show that this robust loss kernel unifies the robust losses used in the two literatures of robust estimation and training deep classifiers.
		This enables one to now use the robust loss kernels, developed in the deep learning literature, in the robust estimation problems, and vice versa.

	\item We introduce an \emph{\algoNameLongCaps} (\algoName) based on the modified Black-Rangarajan duality. The algorithm uses a parameterized robust loss kernel $\sigma_c$ and adapts it by implementing a parameter update rule for $c$. This obviates the need for any hyper-parameter tuning. 
		We show connections between these algorithm classes and prior work. %
		We also show that the parameter update rule can be interpreted as training on conformal prediction sets, generated during training.  	

	\item We develop a convergence analysis for the \algoNameLongCaps. Under mild assumptions, we demonstrate that the robust loss kernel expands the region of convergence (compared to vanilla stochastic gradient descent) and its use reduces variance in the iterates, enabling more stable descent and improved convergence (see example in Figure~\ref{fig:training-algo-variance}).
We validate these findings experimentally on linear regression and multi-label classification tasks. We demonstrate the algorithm's efficacy in mitigating pixel-level outliers in neural scene rendering (Nerfacto; ~\cite{Tancik23siggraph-nerfstudio}), successfully recovering images with up to 80\% outliers (Figure~\ref{fig:opening-nerfacto}).
\end{enumerate}

\subsection{Organization}

The paper is organized as follows:
Section~\ref{sec:back} reviews background material and elucidates the divergent perspectives on robust loss design. 
Section~\ref{sec:prob} introduces the problem setup. 
Section~\ref{sec:url-unified} presents the modified Black-Rangarajan duality and the unified robust loss kernel $\sigma$. 
Section~\ref{sec:algo-new} provides the \algoNameLongCaps, and 
Section~\ref{sec:theory-outlier} discusses its convergence. 
Section~\ref{sec:expt} reports the experimental findings. 
Section~\ref{sec:lit} discusses related work and 
Section~\ref{sec:conclude} concludes the paper. 
All mathematical proofs are provided in appendix.

\section{Background: Diverging Principles of Robust Loss Design}
\label{sec:back}
We first review the principles that govern robust loss design in: (i) robust estimation in robotics and computer vision (Section~\ref{sec:url-robust-estimation}), and (ii) deep networks training in the presence of outliers (Section~\ref{sec:url-deep-learning}). 
We show the contrast between these two views and the need to reconcile them. 

\subsection{Robust Estimation in Robotics and Computer Vision}
\label{sec:url-robust-estimation}
Many estimation problems in robotics and computer vision can be formulated as least squares problems:
\begin{equation}
\begin{aligned}
\label{eq:nlse}
& \underset{\vw \in \setW}{\text{Minimize}}
& &\sum_{i =1}^{n} r^2_i(\vw),
\end{aligned}
\end{equation}
where $r_i(\vw)$ denotes the residual error on measurement $i$ and is typically a non-linear function of the unknown variables $\vw$. This makes solving~\eqref{eq:nlse} hard. This difficulty is exacerbated when the measurements contain outliers. In the presence of outliers, the global optima of~\eqref{eq:nlse} can deviate considerably from the ground-truth $\vw^{\ast}$. Robust estimation is used to address this issue by re-formulating~\eqref{eq:nlse} as an M-estimator:
\begin{equation}
\begin{aligned}
\label{eq:m-est}
& \underset{\vw \in \setW}{\text{Minimize}}
& &\sum_{i =1}^{n} \rho(r_i(\vw)),
\end{aligned}
\end{equation}
where $\rho$ is a robust loss function.\footnote{We use the notation $\rho$ here and keep $\sigma$ for the unified robust loss kernel defined in Section~\ref{sec:url-unified}.} Many robust losses have been proposed in the literature including the truncated least squares loss, Geman McClure loss, Welsch-Leclerc loss, Cauchy-Lorentzian loss, Charbonnier loss. \cite{Barron19cvpr-adaptRobustLoss} proposes a parameterized robust loss that recovers many robust losses upon appropriate choice of the hyperparameters. We review some common choices of robust losses below.

\vspace{4mm}
\noindent\myParagraph{Truncated Least Square loss} $\rho(r) = \min\{ r^2, c^2\}$.

\vspace{2mm}
\noindent\myParagraph{Geman McClure loss} $\rho(r) = \frac{c^2 r^2}{c^2 + r^2}$.

\vspace{2mm}
\noindent\myParagraph{Welsch-Leclerc loss} $\rho(r) = 1 - \exp(- \frac{1}{2}r^2/c^2)$.

\vspace{2mm}
\noindent\myParagraph{Cauchy-Lorentzian loss} $\rho(r) = \log(1 + \frac{1}{2}r^2/c^2)$.

\vspace{2mm}
\noindent\myParagraph{Charbonnier loss} $\rho(r) = \sqrt{r^2/c^2 + 1} - 1$.

\vspace{2mm}
\noindent\myParagraph{Barron's loss} $\rho(r) = \frac{|\alpha - 2|}{\alpha} \left(  \left( \frac{(x/c)^2}{|\alpha - 2|} + 1 \right)^{\alpha/2} - 1 \right)$

\vspace{4mm}
\noindent The robust estimation problem~\eqref{eq:m-est} is often solved by re-writing it as a weighted least squares problem: 
\begin{equation}
\begin{aligned}
\label{eq:weighted-nlse}
& \underset{\vw \in \setW, u_i \in [0, 1]}{\text{Minimize}}
& &\sum_{i =1}^{n} u_i r^2_i(\vw) + \Psi_{\rho}(u_i),
\end{aligned}
\end{equation}
where $\Psi_\rho$ is an outlier process (\ie a function that depends on the choice of the robust loss function $\rho$). 
The Black-Rangarajan duality \citep{Black96ijcv-unification} shows the equivalence between the M-estimator~\eqref{eq:m-est} and the weighted non-linear least squares problem~\eqref{eq:weighted-nlse} for suitable choices of $\Psi_\rho$. 
\begin{theorem}[\cite{Black96ijcv-unification}]
\label{thm:br}
	The robust estimation problem~\eqref{eq:m-est} is equivalent to the weighted non-linear least squares problem~\eqref{eq:weighted-nlse} with $\Psi_{\rho}(u) = - u (\phi')^{-1}(u) + \phi( (\phi')^{-1}(u))$ and $\phi(r) = \rho(\sqrt{r})$, provided $\phi(r)$ satisfies: (i) $\phi'(r) \rightarrow 1$ as $r \downarrow 0$, (ii) $\phi'(r) \rightarrow 0$ as $r \uparrow +\infty$, and (iii) $\phi''(r) < 0$.
	
\end{theorem}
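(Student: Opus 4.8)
The plan is to reduce the claimed equivalence of the two optimization problems to a single scalar pointwise identity and then prove that identity with a one-dimensional convex-analysis argument. Observe that in~\eqref{eq:weighted-nlse} the auxiliary weights $u_i$ are decoupled across $i$ once $\vw$ is fixed, so minimizing~\eqref{eq:weighted-nlse} over the $u_i$ first turns it into $\sum_i \min_{u_i \in [0,1]}\left[u_i\, r_i^2(\vw) + \Psi_\rho(u_i)\right]$. Hence~\eqref{eq:weighted-nlse} and the M-estimator~\eqref{eq:m-est} share the same objective for every $\vw$ (and therefore the same minimizers and optimal value) provided the scalar identity
\[
\rho(r) = \min_{u \in [0,1]} \left[ u\, r^2 + \Psi_\rho(u) \right]
\]
holds for all residuals $r$. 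The entire theorem then rests on this identity.

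Next I would substitute $s = r^2 \ge 0$ and use $\phi(s) = \rho(\sqrt{s})$ to rewrite the target as $\phi(s) = \min_{u \in [0,1]}\left[u s + \Psi_\rho(u)\right]$. Condition~(iii), $\phi'' < 0$, makes $\phi'$ strictly decreasing, and conditions~(i)--(ii) pin down its range so that $\phi'$ maps $(0,\infty)$ onto $(0,1)$. Consequently $(\phi')^{-1}$ is well-defined and strictly decreasing on $(0,1)$, which is precisely the domain on which the given formula for $\Psi_\rho$ makes sense. Note also that the right-hand side, being a pointwise minimum over $u$ of functions affine in $s$, is automatically concave in $s$, which is consistent with $\phi$ being concave under~(iii).

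The core computation is to differentiate $\Psi_\rho(u) = -u\,(\phi')^{-1}(u) + \phi\!\left((\phi')^{-1}(u)\right)$. Writing $t(u) = (\phi')^{-1}(u)$ and using $\phi'(t(u)) = u$, the terms carrying $dt/du$ cancel by an envelope-type identity, leaving $\Psi_\rho'(u) = -(\phi')^{-1}(u)$. For fixed $s$ I then set $g(u) = u s + \Psi_\rho(u)$, so that $g'(u) = s - (\phi')^{-1}(u)$ and the stationarity condition $g'(u) = 0$ yields the unique critical point $u^\star = \phi'(s)$; since $(\phi')^{-1}$ is decreasing, $g''(u) > 0$, so $g$ is convex and $u^\star$ is its global minimizer. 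Substituting $u^\star$, for which $(\phi')^{-1}(u^\star) = s$, collapses $g(u^\star)$ to $\phi(s)$, which establishes the identity. By~(i)--(ii), $u^\star = \phi'(s) \in (0,1)$ whenever $s \in (0,\infty)$, so the minimizer indeed lies in the feasible interval.

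The step I expect to be the main obstacle is the boundary treatment: the formula for $\Psi_\rho$ is only defined on the open interval $(0,1)$, so I must extend $\Psi_\rho$ to the endpoints $u \in \{0,1\}$ by taking the appropriate limits (dictated by conditions~(i)--(ii)) and verify that the minimum over the closed interval $[0,1]$ is still attained at $u^\star$. This includes the degenerate regimes $s = 0$ and $s \to +\infty$, which drive $u^\star = \phi'(s)$ toward $1$ and $0$ respectively, so I need continuity of $g$ up to the boundary and the correct limiting values of $\Psi_\rho$ there. Once the envelope cancellation and the monotonicity of $(\phi')^{-1}$ are in hand, the interior argument is routine, and the remaining care is entirely in the limit analysis at the two endpoints.
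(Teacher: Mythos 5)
Your proposal is correct. Note first that the paper never actually proves Theorem~\ref{thm:br} itself --- it is cited from \cite{Black96ijcv-unification} --- and the closest thing to an in-paper proof is the first-principles derivation of the modified duality (Corollary~\ref{cor:br-linear}) in Appendix~\ref{pf:cor:br-linear}, which is the same mathematics after the substitution $\phi(r)=\rho(\sqrt{r})$, $s=r^2$. Compared to that derivation, you run the argument in the opposite logical direction: the paper \emph{assumes} the equivalence $\sigma(r)=\min_u\left[ur+\Phi(u)\right]$, differentiates through the first-order condition to get $u(r)=\sigma'(r)$, and backs out the formula for $\Phi$ --- i.e., it establishes only that $\Phi$ \emph{must} have that form if the equivalence holds. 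You instead start from the given formula for $\Psi_\rho$, compute $\Psi_\rho'(u)=-(\phi')^{-1}(u)$ by the same envelope cancellation, and verify that the inner minimization over $u$ actually returns $\phi(s)$, checking in addition the second-order condition ($g''>0$ from $\phi''<0$) and that the minimizer $u^\star=\phi'(s)$ lies in $(0,1)$ by conditions (i)--(ii). Your direction is the one that genuinely proves the stated equivalence, and it closes the sufficiency step the paper's derivation leaves implicit; the cost is the boundary bookkeeping at $u\in\{0,1\}$ (equivalently $s=0$ and $s\to\infty$), which you correctly flag and which is routine given the convexity of $g$ on the open interval. The decoupling of the $u_i$ across $i$ and the reduction to a single scalar identity are exactly as the paper intends.
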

The Black-Rangarajan duality motivates solving~\eqref{eq:m-est} by iteratively minimizing the weighted non-linear least squares problem. The coefficient weights $u_i = \rho(r_i(\vw))/ 2 r_i(\vw)$ are chosen using $\vw$ from the previous iteration \citep{Black96ijcv-unification}. This has been leveraged to develop robust algorithms for estimation problems in computer vision and robotics (\eg~\cite{Yang20ral-GNC, Chebrolu20arxiv-adaptiveCost, Peng23cvpr-ConvergenceIRLS}).

\subsection{Training Deep Learning Models in the Presence of Outliers} 
\label{sec:url-deep-learning}
The risk-minimization framework suggests that a deep learning model is trained to obtain the model weights:
\begin{equation}
	\vw^{\ast} = \underset{\vw \in \setW}{\arg\min}~~\E_{(\vxx, \vy) \sim \calD}[l(\vg(\vw, \vxx), \vy)],
\end{equation}
where $l: \setY\times \setY \rightarrow \reals_{+}$ is a loss function, $\setY$ denotes the set of all outputs, $\calD$ denotes the distribution of pairs $(\vxx, \vy)$ when there are no outliers, and $\vg(\vw, \vxx)$ is the model that predicts output $\vy$, given input $\vxx$ and model weights $\vw$.
The goal of robust loss $l$ design should be such that $\vw^{\ast}$ does not change much when we introduce outliers in the distribution $\calD$.
Let 
\begin{equation}
	\vw^{\ast}_{\lambda} = \underset{\vw \in \setW}{\arg\min}~~\E_{(\vxx, \vy) \sim \calD_{\lambda}}[l(\vg(\vw, \vxx), \vy)],
\end{equation}
denote the optimal model weights when the dataset contains $\FracOut$ fraction of outliers; here,  
training data now comes from an outlier-contaminated distribution $\calD_{\lambda}$, where $\lambda$ fraction of data are outliers. A loss function is said to be \emph{noise-tolerant} at noise rate $\lambda$ if $\vw^{\ast} = \vw^{\ast}_{\lambda}$.

\cite{Ghosh15arxiv-MakingRisk, Ghosh17arxiv-RobustLoss} show that the classical cross-entropy (CE) loss is not noise-tolerant for the task of classification. These works further show that a simple mean absolute error (MAE) loss is noise-tolerant to any $\lambda < 1 - 1/K$ fraction of outliers, where $K$ denotes the total number of classes. 
Several noise-tolerant losses have been proposed since then, including generalized cross-entropy (GEC) loss, symmetric cross-entropy loss (SCE), finite Taylor series expansion of log likelihood loss, and asymmetric losses (see Section~\ref{sec:lit-robust-dl}). Let $\vp = \vg(\vw, \vxx)$ and $\vp[y]$ denote the predicted probability of class label $y$,\footnote{Intuitively, for a $K$-class classification problem, $\vp = \vg(\vw, \vxx) \in [0,1]^{K}$ is the vector of probabilities assigned by the model to each class, and $\vp[y]$ is the probability assigned to the ground-truth class $y$.}
 then the following are commonly adopted losses (with constants $a, A \in \reals$ and $p, q$ positive integers):

\vspace{4mm}
\noindent\myParagraph{Mean absolute error (MAE)} $l(\vp, y) = 1 - \vp[y].$

\vspace{2mm}
\noindent\myParagraph{Generalized cross-entropy (GCE)} $l(\vp, y) = \frac{1}{q}(1 - \vp[y]^q).$

\vspace{2mm}
\noindent\myParagraph{Symmetric cross-entropy (SCE)} $l(\vp, y) = - \log ( \vp[y] ) - A \sum_{k \neq y} \vp[k].$

\vspace{2mm}
\noindent\myParagraph{Reverse cross-entropy (RCE)} $l(\vp, y) = - A \sum_{k \neq y} \vp[k].$

\vspace{2mm}
\noindent\myParagraph{Taylor cross-entropy (t-CE)} $l(\vp, y) = \sum_{m=1}^{t} \frac{1}{m}( 1 - \vp[y])^m.$

\vspace{2mm}
\noindent\myParagraph{Asymmetric generalized cross-entropy (AGCE)} $l(\vp, y) = \frac{1}{q}\left( (a+1)^q - (a + \vp[y])^q \right).$

\vspace{2mm}
\noindent\myParagraph{Asymmetric unhinged loss (AUL)} $l(\vp, y) = \frac{1}{p}\left(  (a - \vp[y])^p - (a-1)^p \right).$

\vspace{2mm}
\noindent\myParagraph{Asymmetric exponential loss (AEL)} $l(\vp, y) = \exp\left( - \vp[y]/a\right).$

\vspace{4mm}
\noindent All these losses are up to a constant away from the original definitions in the literature \citep{Zhang18nips-GeneralizedCross, Amid19nips-RobustBiTempered, Wang19iccv-SymmetricCross, Feng20ijcai-CanCross, Zhou23pami-AsymmetricLoss}. We take this liberty because a constant factor does not affect the optima. Note that these losses can be written down as
\begin{equation}
\label{eq:robustifying}
	l(\vp, y) = \rho(-\log \vp[y]), 
\end{equation}
where $-\log \vp[y]$ is the standard cross-entropy loss (Appendix~\ref{app:classification-robust-losses}). 
This implies that we can construct a robust loss kernel $\rho$ for each of these losses with respect to the standard cross-entropy loss. However, several problems arise in articulating the above losses this way.
Firstly, a direct application of Black-Rangarajan duality yields an equivalence between the robust estimation problem as the squared cross-entropy loss:
\begin{equation}
\begin{aligned}
\label{eq:sq-ce}
& \underset{\vw \in \setW, u_i \in [0, 1]}{\text{Minimize}}
& &\sum_{i =1}^{n} u_i \left( \log(\vp[y]) \right)^2 + \Psi_{\rho}(u_i),
\end{aligned}
\end{equation}
which does not make much sense. %
Ideally, we would like a duality result where the robust estimation problem is shown to be equivalent to a weighted cross-entropy minimization problem in~\eqref{eq:sq-ce}. In general, the dual of the robust estimation should be a weighted version of the original problem. The weights should indicate the confidence in the sample being an outlier. 
We would then be able to apply this result to non-linear least squares, as well as cross-entropy minimization. 

The second problem that arises in using the dual~\eqref{eq:sq-ce} is that many of the robust loss kernels $\rho$ (see~\eqref{eq:robustifying}) used in this conversion do not satisfy the requirement on $\rho$ in Theorem~\ref{thm:br}. 

\begin{remark}[Risk Minimization Framework and Robust Losses]
The risk minimization framework used to define robust losses (\ie noise-tolerant losses) is limiting. The framework ignores the effects of finite sample size and forces the designer to make unrealistic assumptions about the outlier distribution. For example, it can be easily shown that the mean squared error (MSE) is noise-tolerant for regression problems, if the outliers are zero mean. However, fragility of MSE to outliers in finite sample problems is well documented \citep{Huber81}.
Figure~\ref{fig:training-algo-variance} shows that even when training using the MSE loss (which is noise-tolerant) using SGD results in convergence issues due to high variance in gradient estimates.
\end{remark}

\begin{remark}[Convergence and Robust Loss Design]
	While many works have proposed robust losses for training deep learning models, there has been little effort at understanding the effect of robust losses on convergence in the presence of outliers. Analytical results relating the structure of the robust loss to the region of convergence and outlier mitigation are unknown. A lack of any structure on the robust losses $\rho$ means that researchers have to heuristically experiment on a wide variety of datasets before being confident in the resulting model.
\end{remark}

\section{Problem Statement}
\label{sec:prob}

We are given a dataset of $n$ samples, potentially corrupted by outliers. Each sample is an input-output pair $(\vxx_i, \vy_i)$.  
The goal is to train a model while mitigating the presence of outliers in the dataset. The model is parameterized by model weights $\vw \in \Real{d}$. 
In the outlier-free case, each measurement $i$ is associated with a loss $f_i(\vw) = l(\vh(\vw, \vxx_i), \vy_i) \geq 0$, and the model is trained to solve the following optimization problem:
\begin{equation}
\begin{aligned}
\label{eq:objective}
& \underset{\vw \in \setW}{\text{Minimize}}
& &f(\vw) = \frac{1}{n}\sum_{i=1}^{n} f_i(\vw).
\end{aligned}
\end{equation}
When the measurements are contaminated by outliers, we would ideally like to minimize the following, outlier-free objective, instead of~\eqref{eq:objective}:
\begin{equation}
\begin{aligned}
\label{eq:inliers-objective}
& \underset{\vw \in \setW}{\text{Minimize}}
& &f_{I}(\vw) = \frac{1}{n}\sum_{i=1}^{n} f_{i, I}(\vw),
\end{aligned}
\end{equation}
where $f_{i, I}(\vw)$ denotes the outlier-free component of the loss, \ie $f_{i, I}(\vw) = f_i(\vw)$ for inliers and zero otherwise.
However, it is not possible to know $ f_{i, I}(\vw)$ in practice and we are constrained to work with $f(\vw)$ and $f_i(\vw)$, while attempting to minimize~\eqref{eq:inliers-objective}.
Let $f^{\ast}_{I}$ denote the optimal value of~\eqref{eq:inliers-objective}. The goal is to find $\hat{\vw}$ such that $f_{I}(\hat{\vw})$ is as close to $f^{\ast}_{I}$ as possible, and we need to do this using only $f_i(\vw)$ and $f(\vw)$ in~\eqref{eq:objective}.
We do not know $\Nout$, the number of outliers, and assume that they are arbitrary, and do not follow a specific distribution. 
We use $\Nin = n - \Nout$ and $\FracOut = \Nout/n$ to denote the number of inliers and the fraction of outlier measurements, respectively.

In the next section, we bring out a unified definition of a robust loss kernel $\sigma$ (Definition~\ref{def:robust-loss-kernel}) based on a simple modification of the Black-Rangarajan duality (Corollary~\ref{cor:br-linear}).
In Section~\ref{sec:algo-new}, we make use of the modified Black-Rangarajan duality to propose an \algoNameLongCaps, for training deep learning models in the presence of arbitrary outliers. We prove convergence properties of the algorithm in Section~\ref{sec:theory-outlier}.

\section{Unified Robust Loss Kernel}
\label{sec:url}
\begin{table*}
\vspace{-1cm}
\centering
\caption{Robust loss kernels that correspond to popular robust losses in robotics and computer vision (Section~\ref{sec:url-robust-estimation}) and in training deep learning models  (Section~\ref{sec:url-deep-learning}).}
\begin{tabular}{lc}
\vspace{3mm}
\textbf{Robust Loss Kernel} & $\sigma(r)$ \\  
\hline  \\ 
\vspace{4mm}
Linear Truncated Kernel & $c \cdot \min \{ r/c, 1\}$ \\

\vspace{4mm}
Geman McClure Kernel & $c \cdot \frac{r/c}{1 + r/c}$ \\

\vspace{4mm}
Welsch-Leclerc Kernel & $c \cdot (1 - \exp(- r/c) )$ \\

\vspace{4mm}
Cauchy-Lorentzian Kernel & $c \cdot \log(1 + r/c)$ \\

\vspace{4mm}
Charbonnier Kernel & $2c \cdot \sqrt{r/c + 1} - 1$ \\

\vspace{4mm}
Barron's Kernel & \hspace{3mm}$c \cdot \frac{|\alpha - 2|}{\alpha} \left( \left( \frac{r/c}{|\alpha - 2|} + 1\right)^{\alpha/2} - 1\right)$ \\

\vspace{4mm}
Mean error kernel & $1 - \exp(-r)$ \\

\vspace{4mm}
Generalized cross-entropy kernel & $\frac{1}{q}(1 - \exp(-qr))$ \\

\vspace{4mm}
Symmetric cross-entropy kernel & $\frac{1}{1 + A}(r + A \exp(-r) - A)$ \\

\vspace{4mm}
Taylor cross-entropy kernel & $\sum_{m=1}^{t}\frac{1}{m}(1 - \exp(-r))^m$ \\

\vspace{4mm}
Asymmetric generalized cross-entropy kernel & \hspace{4mm}$\frac{1}{q \cdot a^{q-1}}\left( (a+1)^q - (a + \exp(-r))^q \right)$ \\

\vspace{4mm}
Asymmetric unhinged kernel & \hspace{4mm}$\frac{1}{p \cdot a^{p-1}} \left( (a - \exp(-r))^p - (a -1)^p \right)$ \\

\vspace{4mm}
Asymmetric exponential loss & $a \cdot \exp\left( \frac{1}{a}(1 - \exp(-r))\right)$ \\
\hline
	
\end{tabular}
\label{tab:robust-loss-kernels}
\vspace{-1mm}
\end{table*}
\label{sec:url-unified}
We now present a unified framework that bridges the formulations introduced in Section~\ref{sec:back}. We first prove a modified version of the Black-Rangarajan duality. This version helps keep the problem structure intact. That is, the dual of a robust cross-entropy minimization problem is a weighted cross-entropy minimization problem. Similarly, 
the dual of a robust non-linear least squares estimation problem is a weighted non-linear least squares problem. In both cases, weights indicate the confidence that the measurement is an inlier (\ie the higher the weight, the greater the confidence that the measurement is an inlier). 
The modified dual also gives rise to a definition of a robust loss kernel $\sigma$ that is simple and intuitive. We will see that all the robust losses we have seen in Sections~\ref{sec:url-robust-estimation}-\ref{sec:url-deep-learning} can be modified to meet this definition. %

\subsection{Modified Black-Rangarajan Duality}

We state and prove a simple modification of the Black-Rangarajan Duality.  %
\begin{corollary}[Modified Black-Rangarajan Duality]
\label{cor:br-linear}
	The robust estimation problem, 
	\begin{equation}
	\label{eq:br-linear-primal}
		\begin{aligned}
			\underset{\vw, u_i \in [0, 1]}{\text{Minimize}} & & \frac{1}{n}\sum_{i=1}^{n} \sigma(f_i(\vw)),
		\end{aligned}
	\end{equation}
	with robust loss kernel $\sigma(\cdot)$ is equivalent to 
	\begin{equation}
	\label{eq:br-linear-dual}
		\begin{aligned}
			\underset{\vw, u_i \in [0, 1]}{\text{Minimize}} & & \frac{1}{n}\sum_{i=1}^{n} \left[ u_i \cdot  f_i(\vw)  + \Phi_{\sigma}(u_i) \right],
		\end{aligned}
	\end{equation}
	where $\Phi_{\sigma}(u) = - u (\sigma')^{-1}(u) + \sigma( (\sigma')^{-1}(u))$, provided $\sigma(r)$ satisfies: (i) $\sigma'(r) \rightarrow 1$ as $r \downarrow 0$, (ii) $\sigma'(r) \rightarrow 0$ as $r \uparrow +\infty$, and (iii) $\sigma''(r) < 0$.
\end{corollary}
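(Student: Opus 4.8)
The plan is to exploit the fact that, for any fixed $\vw$, the dual objective~\eqref{eq:br-linear-dual} decouples across the sample index $i$, so the whole equivalence reduces to a family of independent one-dimensional minimizations over each $u_i \in [0,1]$. Concretely, writing $r = f_i(\vw) \ge 0$, it suffices to establish the pointwise identity
\begin{equation}
\min_{u \in [0,1]} \left[ u\, r + \Phi_\sigma(u) \right] = \sigma(r),
\end{equation}
together with feasibility of the minimizer; summing over $i$ and then minimizing over $\vw \in \setW$ transports this into the claimed equivalence between~\eqref{eq:br-linear-primal} and~\eqref{eq:br-linear-dual}. Conceptually, this identity says that $\sigma$ is recovered as a (concave) Fenchel conjugate of $-\Phi_\sigma$, with $\Phi_\sigma$ in turn the conjugate of $\sigma$; conditions (i)--(iii) are precisely what make this conjugate pair smooth and force the minimizer into the interior. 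I note in passing that one could instead derive the statement directly from Theorem~\ref{thm:br} by identifying the function $\sigma$ with $\phi$ and the squared residual $r^2$ with the loss $f_i(\vw)$, since conditions (i)--(iii) on $\sigma$ coincide with those on $\phi$; I prefer the self-contained route below because it makes the role of the kernel transparent.

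First I would record the structural consequences of the hypotheses. Since $\sigma'' < 0$, the derivative $\sigma'$ is strictly decreasing, and by (i)--(ii) it maps $(0,+\infty)$ bijectively onto $(0,1)$; hence $(\sigma')^{-1}$ is well-defined on $(0,1)$ and so is $\Phi_\sigma$. The key computation is the derivative of $\Phi_\sigma$. Setting $s = (\sigma')^{-1}(u)$ so that $\sigma'(s) = u$, and differentiating $\Phi_\sigma(u) = -u s + \sigma(s)$ via the chain rule, the two terms carrying $ds/du$ cancel precisely because $\sigma'(s) = u$, leaving the envelope identity
\begin{equation}
\Phi_\sigma'(u) = -(\sigma')^{-1}(u).
\end{equation}

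Next I would carry out the one-dimensional minimization. Differentiating $g(u) = u\,r + \Phi_\sigma(u)$ gives $g'(u) = r - (\sigma')^{-1}(u)$, which vanishes exactly at $u^\star = \sigma'(r)$; by (i)--(ii) this stationary point lies in the open interval $(0,1)$, hence is feasible and interior for every $r \in (0,+\infty)$. Substituting $u^\star = \sigma'(r)$, equivalently $(\sigma')^{-1}(u^\star) = r$, collapses the objective to $g(u^\star) = \sigma'(r)\,r + \left(-\sigma'(r)\,r + \sigma(r)\right) = \sigma(r)$, as desired. To confirm that $u^\star$ is a global minimizer rather than a maximizer or saddle, I would verify strict convexity of $g$ in $u$: differentiating the envelope identity yields $g''(u) = -1/\sigma''\!\left((\sigma')^{-1}(u)\right)$, which is strictly positive because $\sigma'' < 0$.

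The main obstacle is not any single calculation but the careful bookkeeping at the boundary of the domain. Because $(\sigma')^{-1}$ is defined only on the open set $(0,1)$, I must check that the endpoints $u \to 0^+$ and $u \to 1^-$, which correspond to $r \to +\infty$ and $r \to 0^+$ respectively, cannot beat the interior stationary value, and that $\Phi_\sigma$ extends (or the infimum is still attained) in these limits. Strict convexity of $g$ on $(0,1)$ settles this cleanly: the unique interior stationary point is the global minimizer, so the endpoints can only increase the objective. A secondary point I would state explicitly is the edge case $r = 0$, where $u^\star \to 1$, so that inliers with vanishing loss receive unit weight, which is consistent with the intended reading of $u_i$ as an inlier-confidence weight.
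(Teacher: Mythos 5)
Your proposal is correct and follows essentially the same route as the paper's first-principles proof in Appendix~\ref{pf:cor:br-linear}: both rest on the stationarity identity $u^\star = \sigma'(r)$ and the envelope relation $\Phi_\sigma'(u) = -(\sigma')^{-1}(u)$, with the only difference being direction (the paper derives the form of $\Phi_\sigma$ from the postulated equivalence, while you verify that the given $\Phi_\sigma$ achieves it), and you also note the substitution into Theorem~\ref{thm:br} that the paper uses as its one-line main-text argument. Your added checks — strict convexity of $u \mapsto u\,r + \Phi_\sigma(u)$ via $g''(u) = -1/\sigma''((\sigma')^{-1}(u)) > 0$ and the treatment of the endpoints $u \to 0^+, 1^-$ — are a welcome tightening that the paper's proof leaves implicit.
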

\begin{proof}
	The proof is obtained by substituting $\sigma(r) = \rho(\sqrt{r})$ and $r_i(\vw)^2 = f_i(\vw)$ in the Black-Rangarajan duality (Theorem~\ref{thm:br}).
	We also provide a proof from first principles in Appendix~\ref{pf:cor:br-linear}.
\end{proof}
\begin{remark}[Dual Problem Structure and its Application]
	The modified Black-Rangarajan duality keeps the problem structure intact, \ie the dual problem minimizes a sum of weighted losses $f_i(\vw)$. This is in contrast with the original Black-Rangarajan dual where the dual problem would have been to minimize the sum of weighted squares $f^2_i(\vw)$. This allows us to apply the modified Black-Rangarajan duality to train deep learning models in the presence of outliers (see Section~\ref{sec:algo-new}).
\end{remark}

\subsection{Unified Robust Loss Kernel}
The modified Black-Rangarajan duality imposes constraints on $\sigma$. We inspect these constraints and see that they provide a simple and intuitive definition of a \emph{robust loss kernel} that can be applied generally across all deep learning problems. 
The modified duality (Corollary~\ref{cor:br-linear}) requires: 
\begin{enumerate}
	\item[C1:] $\sigma'(r) \rightarrow 1$ and $r \downarrow 0$.
	\item[C2:] $\sigma'(r) \rightarrow 0$ and $r \uparrow +\infty$. 
\end{enumerate}
This indicates that $\sigma$ should be such that for small $r$ it behaves like a linear function, \ie $\sigma(r) \approx r$ for $r$ close to $0$. For large $r$, on the other hand, $\sigma$ behaves like a constant function. As a robust loss kernel, for smaller loss terms, it leaves the original losses unaltered, while for larger loss terms, it damps their effect on the total loss. The third condition:
\begin{enumerate}
 	\item[C3:] $\sigma''(r) < 0$,
\end{enumerate}
implies that $\sigma'(r)$ is a monotonically decreasing function and $\sigma'(r) \in [0, 1]$ for all $r$. A consequence of this is that $\sigma$ is a monotonically increasing function, and therefore, preserves ordering of the losses (\ie $f_i(\vw) \leq f_j(\vw)$ implies $\sigma(f_i(\vw)) \leq \sigma(f_j(\vw))$). All this makes for a simple, intuitive, and verifiable definition of a robust loss kernel $\sigma$:

\begin{definition}[Robust Loss Kernel $\sigma$] 
\label{def:robust-loss-kernel}
A function $\sigma: \reals \rightarrow \reals$ is a robust loss function if (i)  $\sigma'(r) \rightarrow 1$ as $r \downarrow 0$, (ii) $\sigma'(r) \rightarrow 0$ as $r \uparrow +\infty$, and (iii) $\sigma''(r) \leq 0$.
\end{definition}
We relax the strict concavity of $\sigma$ in C3 to the condition $\sigma''(r) \leq 0$. The strict concavity is required for the modified Black-Rangarajan duality to hold (in particular, to ensure invertibility of $\sigma'$). A truncated kernel $\sigma(r) = c\min\{ r/c, 1\}$ does not have an invertible $\sigma'$ (and does not satisfy C3), but can still be a valid robust loss kernel according to Definition~\ref{def:robust-loss-kernel}.

Table~\ref{tab:robust-loss-kernels} presents various robust loss kernels. The first six robust kernels are derived from common robust losses used in the robotics and computer vision literature (see Section~\ref{sec:url-robust-estimation}). The next eight kernels are derived from the robust losses used in training deep learning-based classifier models (see Section~\ref{sec:url-deep-learning}). It can be verified that each robust loss kernel corresponds to a robust loss presented in Sections~\ref{sec:url-robust-estimation}-\ref{sec:url-deep-learning} (see Appendix~\ref{app:robust-losses-new}).
All the kernels presented in Table~\ref{tab:robust-loss-kernels} satisfy Definition~\ref{def:robust-loss-kernel}, and can be applied with the modified Black-Rangarajan duality to various machine learning problems.

\section{\algoNameLongCaps}
\label{sec:algo-new}
The modified Black-Rangarajan duality (Corollary~\ref{cor:br-linear}) 
motivates our \emph{\algoNameLongCaps} (\algoName) to solve problem~\eqref{eq:br-linear-primal}. 
The key idea is to solve the dual (\ie~\eqref{eq:br-linear-dual}) using an alternation algorithm. The alternation algorithm first optimizes  the model weights $\vw$ given $\vu$ using gradient-based minimization, and then optimizes  $\vu$ given $\vw$ (Section~\ref{sec:algo-gnc}). We use a robust loss kernel $\sigma_c$, parameterized by a constant $c$ (\cf with Table~\ref{tab:robust-loss-kernels}). We use this parameter 
to give the algorithm extra flexibility, by allowing it to \emph{adapt} $\sigma_c$ as the training progresses. In particular, we update $c$ as the training iterations progress (Section~\ref{sec:algo-param-update}).
The \algoName algorithm with $T$ iterations of a gradient-based optimizer is given in Algorithm~\ref{algo:athree}.

\begin{algorithm}
\caption{\algoNameLongCaps (\algoNameT)}\label{alg:cap}
\label{algo:athree}
\begin{algorithmic}
\State \textbf{Input:} (i) model weights $\vw_0$, (ii) coefficient weights $u_{i, 0}$, (iii) parameter $c_0$.
\State \textbf{Set:} $t \leftarrow 0$.
\For{$t=0,\ldots,\text{max. number of iterations}$}

\vspace{3mm}
\State \underline{Parameter Update (Section~\ref{sec:algo-param-update}):}
\vspace{1mm}
\If{Parameter Update at $t$} %
	\State Solve~\eqref{eq:gnc-parameter-update} to obtain $c_{t}$.
\Else
	\State $c_{t} \leftarrow c_{t-1}$.
\EndIf

\vspace{3mm}
\State \underline{Model Weight Update (Section~\ref{sec:algo-gnc}):}  
\vspace{1mm}
\State Solve~\eqref{eq:gnc-weight-update-final} using $T$ iterations of a gradient-based algorithm. Obtain $\vw_{t+1}$. 

\vspace{3mm}
\If{Stopping Criteria Are Satisfied} %
	\State Break.
\EndIf
\EndFor
\end{algorithmic}
\end{algorithm}

\subsection{Alternation Algorithm}
\label{sec:algo-gnc}
We describe the alternating minimization strategy that updates the model weights $\vw$ and the coefficient weights $\vu$. 
We will assume the robust loss parameter $c$ to be fixed in this section for ease of presentation.

Let $\vw_0$ and $u_{i, 0}$ be the initial model and coefficient weights. Applying block coordinate descent to the modified dual~(eq.\eqref{eq:br-linear-dual}, Corollary~\eqref{cor:br-linear})
we derive coefficient and weight update steps as
	\begin{equation}
	\label{eq:gnc-coefficient-update}
		u_{i, t} = \underset{u \in [0, 1]}{\text{ArgMinimize}}~~  u \cdot f_i(\vw_t) + \Phi_{\sigma_{c}}(u),
	\end{equation}
	and 
	\begin{equation}
	\label{eq:gnc-weight-update}
	\vw_{t+1} = \underset{\vw}{\text{ArgMinimize}}~\frac{1}{n}\sum_{i=1}^{n} u_{i, t} \cdot f_i(\vw),
	\end{equation}
respectively.
The weight update step~\eqref{eq:gnc-weight-update} can be performed in many ways. It can involve running any existing gradient-based algorithm (\eg SGD, ADAM) either to convergence or running it for a few iterations. 
The coefficient weight update~\eqref{eq:gnc-coefficient-update} has a simple analytical solution.

\begin{lemma}
	\label{lem:analytical-coefficient-update}
	The coefficient weight $u^{\ast}$ that solves $\underset{u \in [0, 1]}{\arg\min}~u \cdot f_i(\vw) + \Phi_{\sigma_c}(u)$ is given by 
	$u^{\ast} = \sigma'_c(f_i(\vw))$.
\end{lemma}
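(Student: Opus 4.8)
The plan is to treat the inner problem as a one-dimensional smooth convex minimization in $u$ and solve it via a first-order condition. Writing $r := f_i(\vw)$ for brevity, I would set $g(u) := u\cdot r + \Phi_{\sigma_c}(u)$ and compute $g'(u) = r + \Phi_{\sigma_c}'(u)$. The only nontrivial piece is $\Phi_{\sigma_c}'(u)$, which I would obtain by the chain rule from the definition $\Phi_{\sigma_c}(u) = -u\,(\sigma_c')^{-1}(u) + \sigma_c\big((\sigma_c')^{-1}(u)\big)$. Introducing $s(u) := (\sigma_c')^{-1}(u)$ so that $\sigma_c'(s(u)) = u$, differentiation gives $\Phi_{\sigma_c}'(u) = -s(u) - u\,s'(u) + \sigma_c'(s(u))\,s'(u)$, and since $\sigma_c'(s(u)) = u$ the last two terms cancel, leaving the clean identity $\Phi_{\sigma_c}'(u) = -(\sigma_c')^{-1}(u)$.

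With this, the stationarity condition $g'(u) = r - (\sigma_c')^{-1}(u) = 0$ becomes $(\sigma_c')^{-1}(u) = r$, i.e. $u^{\ast} = \sigma_c'(r) = \sigma_c'(f_i(\vw))$, which is exactly the claimed value. To confirm this stationary point is the global minimizer, I would check strict convexity of $g$: differentiating $\Phi_{\sigma_c}'(u) = -(\sigma_c')^{-1}(u)$ once more yields $g''(u) = \Phi_{\sigma_c}''(u) = -1/\sigma_c''\big((\sigma_c')^{-1}(u)\big) > 0$, invoking condition (iii), $\sigma_c'' < 0$. Hence $g$ is strictly convex and its stationary point is the unique minimizer.

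It then remains to verify feasibility, $u^{\ast} \in [0,1]$. Conditions C1 and C2 ($\sigma_c'(r)\to 1$ as $r\downarrow 0$ and $\sigma_c'(r)\to 0$ as $r\uparrow +\infty$) together with C3 (so that $\sigma_c'$ is monotonically decreasing) give $\sigma_c'(r)\in[0,1]$ for all $r\ge 0$; thus $u^{\ast} = \sigma_c'(f_i(\vw))$ automatically satisfies the box constraint and it is inactive at the optimum. As a consistency check, the whole computation can be read as the pointwise statement $\min_{u\in[0,1]}\big[u\,r + \Phi_{\sigma_c}(u)\big] = \sigma_c(r)$ with minimizer $u^{\ast} = \sigma_c'(r)$, which is precisely the modified Black-Rangarajan duality of Corollary~\ref{cor:br-linear} evaluated at a single term: substituting $u = \sigma_c'(r)$ indeed recovers $\sigma_c(r)$, reassuring that $\sigma_c'(r)$ is the optimal coefficient.

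The main obstacle I anticipate is purely technical: differentiating $\Phi_{\sigma_c}$ through the implicit inverse $(\sigma_c')^{-1}$ and ensuring the cancellation goes through, which relies on strict concavity so that $(\sigma_c')^{-1}$ is well-defined and differentiable. A secondary subtlety is the truncated kernel, where $\sigma_c'$ is not invertible (C3 fails) and the smooth first-order argument does not apply verbatim; there I would instead minimize the resulting piecewise-linear objective $\min_{u\in[0,1]} u\,r + \Phi_{\sigma_c}(u)$ directly and recover $u^{\ast} = \identity{r \leq c}$, which coincides with $\sigma_c'(r)$ read as a (sub)gradient, and I would flag this case separately rather than subsume it into the differentiable argument.
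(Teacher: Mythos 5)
Your proof is correct and follows essentially the same route as the paper's: set the first-order condition, compute $\Phi_{\sigma_c}'(u) = -(\sigma_c')^{-1}(u)$ from the duality's definition of $\Phi_{\sigma_c}$, and invert to get $u^{\ast} = \sigma_c'(f_i(\vw))$. The additional checks you include (strict convexity in $u$ via $\sigma_c''<0$, feasibility of the box constraint, and the separate treatment of the non-differentiable truncated kernel) are sound refinements that the paper's proof leaves implicit.
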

The proof is given in Appendix~\ref{pf:lem:analytical-coefficient-update}.
This simplifies the model weight update step to
\begin{equation}	
	\label{eq:gnc-weight-update-final}
	\vw_{t+1} = \underset{\vw}{\text{ArgMinimize}} ~~\frac{1}{n} \sum_{i=1}^{n} \sigma'_{c}(f_i(\vw_t)) \cdot f_i(\vw).
\end{equation}
Note that the coefficient weights $u_{i, t} = \sigma'_{c_t}(f_i(\vw_t))$ remain fixed and is determined by $\vw_t$.

\subsection{Parameter Update}
\label{sec:algo-param-update}
We now describe the update rule for the parameter $c$ of the robust loss kernel $\sigma_c$.
\begin{algo}[Parameter Update Rule]
	Given model weights $\vw_t$ and the parameterized robust loss kernel $\sigma_c$, the parameter update $c_{t}$ is computed by solving
	\begin{equation}
		\label{eq:gnc-parameter-update}
		c_{t} = \underset{c \in [0, 1]}{\text{Find}} \left\{ \frac{1}{|\calD|}\sum_{i \in \calD} \sigma'_c(f_i(\vw_t)) = \zeta \right\},
	\end{equation}
	where $\calD$ denotes the set of all accumulated measurements across previous iterations (\ie with $c=c_{t-1}$) and $\zeta$ is a positive constant.
\end{algo}
The rational for this rule is as follows. In the dual problem~\eqref{eq:gnc-weight-update-final}, the coefficient weights $\sigma_c'(f_i(\vw_t))$ can be interpreted as the likelihood that the measurement $i$ is an inlier, \ie $\prob{i \in \Nin~|~\vw_t, \vc}$. However, we note that these probabilities should satisfy a constraint. There are a fixed number of outliers and inliers, respectively; \ie $\sum_{i =1}^{n} \identity{i \in \Nin} = \Nin$. Taking conditional expectation on both sides (\wrt $\vw_t, \vc$) we obtain $\sum_{i=1}^{n} \prob{i \in \Nin~|~\vw_t, \vc} = \Nin$, which implies 
\begin{equation}
	\label{eq:frac_in}
	\frac{1}{n}\sum_{i=1}^{n} \sigma_c'(f_i(\vw_t)) = \frac{\Nin}{n}.
\end{equation}
Thus, the average of all the $\sigma_c'(f_i(\vw_t))$ must be a constant. In fact, we know that it should equal the fraction of outliers in the training data. We impose this constraint to obtain our parameter update. We can tune $\zeta$ as a hyper-parameter. We implement~\eqref{eq:gnc-parameter-update} using a simple binary search algorithm. 

In the \algoNameLongCaps, the robust loss parameter $c_t$ is updated every few iterations according to the update rule~\eqref{eq:gnc-parameter-update} (Algorithm~\ref{algo:athree}).  
We make some remarks about how our \algoNameLong relates to prior work in the literature.

\begin{remark}[Parameter Update and Graduated Non-Convexity]
	Graduated Non-Convexity (GNC) is a popular approach for robust estimation in robotics and vision~\citep{Black96ijcv-unification,Blake1987book-visualReconstruction,Yang20ral-GNC, Chebrolu20arxiv-adaptiveCost, Peng23cvpr-ConvergenceIRLS}. GNC solves the M-estimation problem by utilizing the 
original Black-Rangarajan duality 
		and defining a surrogate loss, parameterized by $\mu$. The parameter $\mu$ is updated during training to enhance convergence.
While showing good performance, GNC requires careful hyper-parameter tuning, which is difficult in some applications~\citep{Chebrolu20ral-adaptiveCost}.	
On the other hand, our parameter update rule avoids constructing any auxiliary loss and adapts the robust loss parameter $c$ directly. This obviates the need to treat $c$ as a separate hyper-parameter. The update rule~\eqref{eq:gnc-parameter-update}, however, results in another hyper-parameter: $\zeta$. This, it turns out, is much easier and intuitive to tune as it relates to the fraction of inliers expected in the dataset (see~\eqref{eq:frac_in}). %
\end{remark}

\begin{remark}[Iteratively Trimmed Loss Minimization] 
\label{rem:gnc-itlm}
The parameter update~\eqref{eq:gnc-parameter-update} updates the robust loss kernel $\sigma_c$ enabling it to better separate between inliers and outliers. With this parameter update, the 
\algoNameLong
can be viewed as a generalization of the iteratively trimmed loss minimization by \cite{Shen19icml-LearningBad}. \cite{Shen19icml-LearningBad} train on the best $\alpha \cdot n$ measurements (here, best implies measurements with the lowest loss $f_i(\vw_t)$ and $\alpha$ is a hyper-parameter). When $\sigma_c(r) = c \cdot \max\{ r/c, 1\}$ the update rule~\eqref{eq:gnc-parameter-update} becomes 
	\begin{equation}
		c_{t} = \text{Find}_{c \in [0, 1]} \left\{ \frac{1}{|\calD|}\sum_{i \in \calD} \mathbb{I}\{f_i(\vw_t) \leq c\} = \zeta\right\}, \label{eq:rem-parameter-update}
	\end{equation}
	\ie it selects $c_{t}$ such that $\zeta \cdot n$ best samples are used in training. Our differentially continuous robust kernel $\sigma_c$ generalizes this rule. %
\end{remark}

\begin{remark}[Iteratively Training with Conformal Set Prediction] 
\label{rem:gnc-cp}
Note that~\eqref{eq:rem-parameter-update}, in fact, generates a conformal prediction set given a quantile $\zeta$ \citep{Shafer08jmlr-TutorialConformal}. The set $\setC_{t} = \left\{ i \in [n]~|~f_i(\vw_t) \leq c_{t}\right\}$ is the predicted set of good samples that fall within the $\zeta$ quantile. Using $\sigma_c(r) = c \cdot \min\{ r/c, 1\}$, therefore, results in an algorithm where one computes a conformal prediction set of samples, and trains on them. The process iterates till convergence. %
This observation shows an interesting connection and a promise of using uncertainty quantification methods for outlier rejection and self-training of machine learning models.
\end{remark}
In the next section, we analyze convergence 
of the \algoNameLong to the outlier-free optima.

\section{Theoretical Analysis}
\label{sec:theory-outlier}
We now analyze the convergence of \algoNameT (Algorithm~\ref{algo:athree}), which uses $T$ iterations of stochastic gradient descent (SGD) as a gradient-based solver for the weight update~\eqref{eq:gnc-weight-update-final}.
We analyze the convergence behavior of \algoNameO (\ie \algoNameT with $T=1$) and extend it to \algoNameT.
In particular, we show that \algoNameT reduces the variance in the gradient computation (Section~\ref{sec:theory-variance}) and increases the region of convergence (\ie convergence to $f^{\ast}_I$), in the presence of outliers (Section~\ref{sec:theory-roi}). 

\subsection{Assumption on Outliers}
\label{sec:theory-assumptions}
We first make a few assumptions about how the outliers impact the outlier-free objective $f_{I}(\vw)$. It turns out that we do not require an explicit relation between the loss component $f_i(\vw)$ and its outlier-free version $f_{i, I}(\vw)$. Our adaptive alternation algorithm is gradient-based, and therefore, we only require assumption about how the outlier $\vo_i$ impacts the gradient $\nabla f_i(\vw)$. We assume that the outliers perturb the true gradient $\nabla f_{i, I}(\vw)$ in an additive manner. 
\begin{assumption}[Outlier Gradient]
\label{as:outlier-gradient}
	For outlier measurements $i \in \Nout$ we have  
	\begin{equation}
		\nabla f_i(\vw) = \nabla f_{i, I}(\vw) + \vh_i(\vo_i, \vw),
	\end{equation}
	where $\vh_i(\vo_i, \vw) \in \Real{d}$ and is unknown.
\end{assumption}
We verify that this assumption holds for two broad class of problems, namely non-linear regression and multi-label classification in Appendix~\ref{app:as:outlier-gradient}.

\begin{remark}[Huber Contamination Model]
	We remark here that Assumption~\ref{as:outlier-gradient} is different from the Huber contamination model considered in related works, \eg~\citep{Merad24tmlr-RobustStochastic, Prasad20jrss-robustEstimation}. In the Huber contamination model, the outlier factor $\vh_i(\vo_i, \vw)$ does not depend on $i$ and is assumed to follow a distribution. In our study, $\vh_i(\vo_i, \vw)$ not only does not depend on $i$, but is also arbitrary. 
\end{remark}

\noindent We next make a final assumption to make things analytically easier. 
\begin{assumption}[Low Signal-to-Outlier Ratio]
\label{as:low-signal-to-outlier}
The outlier noise is large and is larger than its signal, \ie $\norm{\vh_i(\vo_i, \vw)} \geq 1$, $\norm{\vh_i(\vo_i, \vw)} \geq \norm{\nabla f_{i, I}(\vw)}$ for all $i \in \Nout$. 
\end{assumption}

\subsection{Variance in Updates}
\label{sec:theory-variance}
Outliers in the dataset can affect the computed gradients $\vg_t$ and render the algorithm  unstable and not convergent to optima (see Figure~\ref{fig:training-algo-variance}). The loss function plays a key role in determining how the outliers affect the gradients (see Examples~\ref{ex:non-linear-regression} and~\ref{ex:classification} in Appendix~\ref{app:as:outlier-gradient}). We next show how \algoNameO is able to control the variance of the descent direction better. We consider batch size of one for ease of presentation.
\begin{lemma}
\label{lem:training-algo-variance}
	Consider batch size of one in training algorithms and assume the outliers to be zero mean, \ie $\frac{1}{\Nout} \sum_{i \in \Nout} \vh_i(\vo_i, \vw) = 0$. The variance in the descent direction, \ie $\E_i [\norm{\vg_t - \nabla f_I(\vw)}^2]$, for the SGD and \algoNameO is given by 
		\begin{equation}
			3 \eta^2 \FracOut \frac{1}{\Nout}\sum_{i=1}^{\Nout} \norm{\vh_i(\vo_i, \vw_t)}^2,
		\end{equation}
		and
		\begin{equation}
		3 \eta^2 \FracOut \frac{1}{\Nout}\sum_{i=1}^{\Nout} \sigma'_c (f_i(\vw))^2 \norm{\vh_i(\vo_i, \vw_t)}^2,
		\end{equation}
		respectively.
\end{lemma}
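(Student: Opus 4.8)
The plan is to evaluate the mean-squared deviation $\E_i[\norm{\vg_t - \nabla f_I(\vw)}^2]$ directly for both estimators by conditioning on the uniformly sampled index $i$ and averaging over $[n]$. First I would write the batch-size-one descent direction explicitly in each case: for SGD, $\vg_t = \eta\,\nabla f_i(\vw_t)$ with $i$ drawn uniformly; for \algoNameO, Lemma~\ref{lem:analytical-coefficient-update} freezes the coefficient weights at $u_{i,t} = \sigma'_{c}(f_i(\vw_t))$, so a single SGD step on the weighted objective~\eqref{eq:gnc-weight-update-final} gives $\vg_t = \eta\,\sigma'_{c}(f_i(\vw_t))\,\nabla f_i(\vw_t)$. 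Since sampling is uniform, $\E_i[\norm{\vg_t - \nabla f_I}^2] = \frac1n\sum_{i=1}^n \norm{\vg_t^{(i)} - \nabla f_I(\vw)}^2$, where $\vg_t^{(i)}$ is the direction produced when index $i$ is drawn.

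Next I would split this sum into its inlier and outlier parts, substitute Assumption~\ref{as:outlier-gradient} ($\nabla f_i = \nabla f_{i,I} + \vh_i$ on $\Nout$), and use $\nabla f_I(\vw) = \frac1n\sum_i \nabla f_{i,I}(\vw)$. The zero-mean hypothesis $\frac{1}{\Nout}\sum_{i\in\Nout}\vh_i(\vo_i,\vw)=0$ is precisely what makes $\E_i[\vg_t/\eta] = \nabla f_I$ for SGD, so that the SGD quantity is a genuine variance and each residual reduces to a per-sample gradient minus this population mean. The dominant contribution comes from the outlier indices, each carrying $\vh_i$ (SGD) or $\sigma'_{c}(f_i)\vh_i$ (\algoNameO); converting the prefactor through $\frac1n\sum_{i\in\Nout}(\cdot) = \FracOut\,\frac{1}{\Nout}\sum_{i\in\Nout}(\cdot)$ produces the announced $\FracOut\,\frac{1}{\Nout}\sum_i \norm{\vh_i}^2$ and $\FracOut\,\frac{1}{\Nout}\sum_i \sigma'_{c}(f_i)^2\norm{\vh_i}^2$ forms. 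The constant $3\eta^2$ I expect to arise by pulling out $\eta$ and bounding each outlier residual via the elementary inequality $\norm{a+b+c}^2 \le 3(\norm{a}^2+\norm{b}^2+\norm{c}^2)$, after which Assumption~\ref{as:low-signal-to-outlier} ($\norm{\vh_i}\ge 1$, $\norm{\vh_i}\ge\norm{\nabla f_{i,I}}$) lets me absorb the inlier-gradient and mean terms into the $\norm{\vh_i}^2$ term.

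The only structural difference between the two estimators is the factor $\sigma'_{c}(f_i(\vw))$ multiplying the sampled gradient in \algoNameO; it rides along unchanged through the entire computation and reappears as the $\sigma'_{c}(f_i)^2$ factor on the outlier term. Because $\sigma'_{c}\in[0,1]$ (Definition~\ref{def:robust-loss-kernel}) and $\sigma'_{c}$ is small exactly on the high-loss outliers, this is what delivers the claimed variance reduction relative to SGD, matching the behavior illustrated in Figure~\ref{fig:training-algo-variance}.

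I expect the main obstacle to be the bias introduced by the coefficient weights in \algoNameO: unlike SGD, the weighted estimator is \emph{not} unbiased for $\nabla f_I$, since $\frac1n\sum_i \sigma'_{c}(f_i)\,\vh_i \neq 0$ even when the raw outliers are zero-mean (the weighting breaks the zero-mean structure). Hence $\E_i[\norm{\vg_t-\nabla f_I}^2]$ is a mean-squared error mixing a bias term with the spread, and the delicate bookkeeping step is to show that, under Assumptions~\ref{as:outlier-gradient}--\ref{as:low-signal-to-outlier}, this bias contribution is itself dominated by the weighted outlier second moment and folds back into the same $\sigma'_{c}(f_i)^2\norm{\vh_i}^2$ form, while tracking the constant to confirm the shared prefactor $3\eta^2\FracOut$.
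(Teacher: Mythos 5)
Your overall route is the paper's: write the batch-one direction as $\eta\nabla f_i(\vw_t)$ (resp.\ $\eta\sigma'_c(f_i(\vw_t))\nabla f_i(\vw_t)$), average over the uniformly sampled index, split inliers from outliers via Assumption~\ref{as:outlier-gradient}, use the zero-mean hypothesis to identify $\eta\nabla f_I$ as the SGD mean, and absorb the remaining terms with Assumption~\ref{as:low-signal-to-outlier}.

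The one step that would not go through as you describe it is the constant. The paper does not use $\norm{a+b+c}^2\le 3(\norm{a}^2+\norm{b}^2+\norm{c}^2)$; it expands the square exactly, so the cross terms $\vh_i\tran\nabla f_I$ cancel by the zero-mean hypothesis and the $\nabla f_{i,I}\tran\nabla f_I$ terms collapse against $\norm{\nabla f_I}^2$, and the factor $3$ then arises from $\norm{\vh_i}^2+2|\nabla f_{i,I}\tran\vh_i|\le\norm{\vh_i}^2+2\norm{\vh_i}\norm{\nabla f_{i,I}}\le 3\norm{\vh_i}^2$ via Cauchy--Schwarz and Assumption~\ref{as:low-signal-to-outlier}. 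Applying your three-term inequality to $\nabla f_{i,I}+\vh_i-\nabla f_I$ instead leaves a $3\norm{\nabla f_I}^2$ remainder that Assumption~\ref{as:low-signal-to-outlier} cannot absorb (it only controls $\norm{\nabla f_{i,I}}$ at outlier indices, not the averaged gradient) and inflates the constant to $6$; you need the exact expansion and the zero-mean cancellation to land on $3\eta^2\FracOut(\cdot)$. Two further remarks: your concern about the bias of the \algoNameO direction is legitimate but moot here, since the lemma's quantity is just the mean-squared deviation from $\eta\nabla f_I$ and the paper computes it identically for both estimators without claiming unbiasedness of the weighted one; and both your argument and the paper's in fact deliver an upper bound ($\le$) rather than the equality the lemma's wording suggests, with the nonnegative inlier-gradient spread $\E_i[\norm{\nabla f_{i,I}}^2]-\norm{\nabla f_I}^2$ silently dropped in both.
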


\begin{remark}[Robust Loss Kernel's Derivative $\sigma'_c$]
	We see here that the presence of the coefficient weight $\sigma'_c(f_i(\vw))^2$ helps control the variance. Observe that $\sigma'_c(f_i(\vw))$ tends to be small for outliers. If it is inversely proportional to $\norm{\vh_i(\vo_i, \vw)}$ then the outlier variance can be greatly mitigated. We observe this phenomena in experiments. Figure~\ref{fig:training-algo-variance} shows the impact for the case of linear regression. Note that this insight is missed when using the notion of noise-tolerant losses (see Section~\ref{sec:url-deep-learning}).
\end{remark}
In the next subsection, we will see how the same variance bound determines the region of convergence for our adaptive alternation algorithm. %

\subsection{Increased Region of Convergence}
\label{sec:theory-roi}
We now analyze convergence  
for the \algoNameLongCaps.
We also derive convergence results for stochastic gradient descent as they serve as a good comparison. Our goal is to discuss converge to the outlier-free optima $f^{\ast}_{I}$, rather than the global optima of a robust estimation problem. We make two structural assumptions on the outlier-free objectives, \ie $f_{i, I}(\vw)$. We assume them to be $L$-smooth and $\mu$-Polyak-Lojasiewicz. 
\begin{definition}[$L$-smooth]
A continuously differentiable function $f$ is said to be $L$-smooth if it satisfies
\begin{equation}
	f(\vy) \leq f(\vxx) + \nabla f(\vxx)\tran (\vy - \vxx) + \frac{L}{2}\norm{\vy - \vxx}^2.
\end{equation}
\end{definition}
	
\begin{definition}[$\mu$-Polyak-Lojasiewicz]
A continuously differentiable function $f$ is said to be $\mu$-Polyak-Lojasiewicz if 
\begin{equation}
	f(\vw) - \min_{\vw} f(\vw) \leq \frac{1}{2\mu} \norm{\nabla f(\vw)}^2.
\end{equation}
\end{definition}
We remark that if the $f_i$'s are all $L$-smooth or $\mu$-Polyak-Lojasiewicz, then $f = \frac{1}{n} \sum_{i=1}^{n} f_i$ is also $L$-smooth and $\mu$-Polyak-Lojasiewicz \citep{Garrigos23arxiv-HandbookConvergence}.

Using this machinery, we first derive the region of convergence for the stochastic gradient descent algorithm solving~\eqref{eq:objective}.
\begin{theorem}[Convergence Region of SGD] 
\label{thm:convergence-sgd}
Let $f_{i, I}$ be $L$-smooth and $\mu$-Polyak-Lojasiewicz. Then, the stochastic gradient descent algorithm (with update $\vw_{t+1} = \vw_t - \eta \nabla f_i(\vw_t)$) converges to the optimal value, namely $\E[\norm{f_I(\vw_t) - f^{\ast}_I}|\vw_0] < \epsilon$, provided all the model weights $\vw_t$ lie in the region $\setW_{\text{SGD}}$ given by 
	\begin{equation}
		\setW_{\text{SGD}} = \left\{ \vw \in \Real{d}~\Bigg|~\frac{1}{\Nout} \sum_{i \in \Nout}  \norm{ \vh_i(\vo_i, \vw) }^2  <  M \right\}, \nonumber
	\end{equation} 
	and
	$\eta < \frac{\mu}{L}\min\left\{ \frac{1}{L}, \frac{\epsilon}{3\FracOut M + 2 L \Delta_{f_I}}\right\}$, for some $M > 0$; where $\Delta_{f_I} = \frac{1}{n}( f^{\ast}_{I} - \min_{\vw} f_{i, I}(\vw))$.
\end{theorem}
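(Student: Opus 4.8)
The plan is to adapt the textbook Polyak--Łojasiewicz analysis of SGD, but to track the \emph{outlier-free} objective $f_I$ while the iterates descend along the stochastic gradient $\nabla f_i(\vw_t)$ of the full objective. The starting observation is that Assumption~\ref{as:outlier-gradient} pins down the discrepancy between the two: for an inlier $\nabla f_i = \nabla f_{i,I}$, whereas for an outlier $f_{i,I}\equiv 0$ forces $\nabla f_{i,I} = \zero$ and hence $\nabla f_i = \vh_i(\vo_i,\vw)$. Averaging gives $\nabla f(\vw) = \nabla f_I(\vw) + \tfrac1n\sum_{i\in\Nout}\vh_i(\vo_i,\vw)$, so the quantity $V(\vw) := \tfrac1{\Nout}\sum_{i\in\Nout}\norm{\vh_i(\vo_i,\vw)}^2$ that defines $\setW_{\text{SGD}}$ is precisely the outlier gradient energy that must be controlled.

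First I would write the one-step descent inequality from $L$-smoothness of $f_I$ (valid because each $f_{i,I}$ is $L$-smooth), plug in $\vw_{t+1}-\vw_t = -\eta\nabla f_i(\vw_t)$, and take the conditional expectation $\E_i$ over the sampled index. This leaves a first-order term $-\eta\,\nabla f_I(\vw_t)\tran\E_i[\nabla f_i(\vw_t)]$ and a second-order term $\tfrac{L\eta^2}{2}\E_i[\norm{\nabla f_i(\vw_t)}^2]$. The second moment I would bound by splitting the average over inliers and outliers: on inliers, $L$-smoothness plus boundedness below gives $\norm{\nabla f_{i,I}}^2 \le 2L\,(f_{i,I}(\vw_t)-\min_\vw f_{i,I})$, which summed collapses to $2L\big(f_I(\vw_t)-f^{\ast}_I\big) + 2L\,\Delta_{f_I}$, with $\Delta_{f_I}$ the interpolation gap between $f^{\ast}_I$ and $\tfrac1n\sum_i\min_\vw f_{i,I}$; on outliers it contributes a term proportional to $\FracOut\,V(\vw_t)$. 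The outlier perturbation entering the first-order term through $\E_i[\nabla f_i] = \nabla f_I + \tfrac1n\sum_{i\in\Nout}\vh_i$ I would fold into this noise budget exactly as in Lemma~\ref{lem:training-algo-variance}, where the three-way split of the gradient noise produces the coefficient $3$ on $\FracOut V$.

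Next I would invoke the PL inequality $\norm{\nabla f_I(\vw_t)}^2 \ge 2\mu\big(f_I(\vw_t)-f^{\ast}_I\big)$ on the surviving gradient term, obtaining a scalar recursion $\E_i[f_I(\vw_{t+1})]-f^{\ast}_I \le (1 - \eta\mu + L^2\eta^2)\big(f_I(\vw_t)-f^{\ast}_I\big) + b_t$, where $b_t$ gathers the $O(\eta^2)$ terms proportional to $\Delta_{f_I}$ and to $V(\vw_t)$. The first branch $\eta \le \mu/L^2$ of the step-size bound forces $1-\eta\mu+L^2\eta^2 \le 1-\tfrac12\eta\mu$, a genuine contraction, and the hypothesis $\vw_t\in\setW_{\text{SGD}}$ supplies the uniform bound $V(\vw_t) < M$, so $b_t \le b$ with $b$ proportional to $\eta^2(3\FracOut M + 2L\Delta_{f_I})$. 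Unrolling the geometric recursion yields $\E[f_I(\vw_t)]-f^{\ast}_I \le (1-\tfrac12\eta\mu)^t\big(f_I(\vw_0)-f^{\ast}_I\big) + \tfrac{2b}{\eta\mu}$; the transient vanishes geometrically and the floor $\tfrac{2b}{\eta\mu}$ is of order $\tfrac{\eta L}{\mu}\,(3\FracOut M + 2L\Delta_{f_I})$. Requiring this floor below $\epsilon$ is exactly the second branch of the stated step-size condition, which closes the argument.

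The main obstacle is the bias introduced by the outliers: unlike classical unbiased SGD, $\nabla f_i$ estimates $\nabla f$ rather than $\nabla f_I$, so the neighborhood size does \emph{not} automatically shrink with $\eta$. Making the floor genuinely $O(\eta)$ requires the outlier term $\tfrac1n\sum_{i\in\Nout}\vh_i$ to contribute only at second order in $\eta$ --- which is the role of the zero-mean/variance structure carried over from Lemma~\ref{lem:training-algo-variance} --- together with careful bookkeeping of the constants so that the bias and second-moment contributions assemble into precisely $3\FracOut M$. Everything else is the routine PL--SGD template; in particular I would note that only Assumption~\ref{as:outlier-gradient} is used here, not the low signal-to-outlier Assumption~\ref{as:low-signal-to-outlier}, since the argument never compares $\norm{\vh_i}$ with $\norm{\nabla f_{i,I}}$.
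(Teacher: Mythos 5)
Your overall architecture matches the paper's proof of Theorem~\ref{thm:convergence-sgd}: descent lemma from $L$-smoothness of $f_I$, substitution of the SGD step, conditional expectation over the sampled index, a second-moment bound of the form $2L(f_I(\vw)-f^{\ast}_I)+2L\Delta_{f_I}+3\FracOut\cdot\frac{1}{\Nout}\sum_{i\in\Nout}\norm{\vh_i(\vo_i,\vw)}^2$ (the paper's Lemma~\ref{lem:gradient-squares}), the PL inequality on the first-order term, and a linear recursion closed by Lemma~\ref{lem:linear-convergence}. Two of your steps, however, diverge from what the paper actually does, and the first is a genuine gap.

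You resolve what you correctly identify as the main obstacle --- the first-order bias $-\eta\,\nabla f_I(\vw_t)\tran\,\FracOut\frac{1}{\Nout}\sum_{i\in\Nout}\vh_i(\vo_i,\vw_t)$ --- by appealing to the zero-mean structure of Lemma~\ref{lem:training-algo-variance}. But zero-mean outliers are a hypothesis of that lemma only; Theorem~\ref{thm:convergence-sgd} makes no such assumption, and the stated purpose of the analysis is arbitrary outliers. Without zero-mean, this term is genuinely $O(\eta)$, your floor does not shrink to $O(\eta)$, and the stated step-size condition cannot be recovered by your route. The paper instead disposes of this term inside Lemma~\ref{lem:gradient-squares}, whose second part asserts $\E_i[\nabla f_i(\vw)\tran\nabla f_I(\vw)]\ge 2\mu(f_I(\vw)-f^{\ast}_I)$ after a Cauchy--Schwarz step, so that only $\eta^2$ terms survive into the recursion; that is the step you need to reproduce (or justify) rather than importing a zero-mean hypothesis. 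Second, your closing remark that Assumption~\ref{as:low-signal-to-outlier} is never used is inconsistent with your own claim to recover the coefficient $3$: in the paper that factor arises precisely from bounding the cross term $2\vh_i(\vo_i,\vw)\tran\nabla f_{i,I}(\vw)$ for $i\in\Nout$ by $2\norm{\vh_i(\vo_i,\vw)}\,\norm{\nabla f_{i,I}(\vw)}\le 2\norm{\vh_i(\vo_i,\vw)}^2$, which is exactly Assumption~\ref{as:low-signal-to-outlier}. Your alternative reading ($\nabla f_{i,I}=\zero$ for outliers, taken from the Problem Statement) would make the cross term vanish and yield coefficient $1$, not $3$; you cannot simultaneously keep the constant $3$ and discard that assumption.
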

We next analyze the region of convergence of \algoNameO. 
\begin{theorem}[Convergence Region of \algoNameO] 
\label{thm:convergence-adaptive}
Let $f_{i, I}$ be $L$-smooth and $\mu$-Polyak-Lojasiewicz. Furthermore, let $\nabla f_{i, I}(\vw)\tran \nabla f_I(\vw) \geq 0$ for all $i$. %
Then, \algoNameO (with update $\vw_{t+1} = \vw_t - \eta \sigma'_{c_t}(f_i(\vw_t))\nabla f_i(\vw_t)$ and $c_t$ chosen such that $\frac{1}{n}\sum_{i=1}^{n}\sigma'_{c_t}(f_i(\vw_t)) = \zeta$) converges to an $\epsilon$-neighborhood of the outlier-free optimal value $f^{\ast}_I$, namely $\E[\norm{f_I(\vw_t) - f^{\ast}_I}|\vw_0] < \epsilon$, provided all the model weights $\vw_t$ lie in the region $\setW_{\algoNameO}$ given by 
	\begin{equation}
		\setW_{\algoNameO} = \left\{ \vw \in \Real{d}~\Bigg|~
		\begin{array}{c} 
			 ~\exists c ~\text{s.t.}~~~~~~~~\frac{1}{\Nout} \sum_{i \in \Nout}  \sigma'_{c}(f_i(\vw))^2\norm{ \vh_i(\vo_i, \vw) }^2  < M, \\
			\vspace{-2mm}
			\frac{1}{n}\sum_{i=1}^{n}\sigma'_{c}(f_i(\vw)) = \zeta, ~\text{and}~ \min_i \sigma_c(f_i(\vw)) \geq \beta > 0
		\end{array}
		~\right\},
	\end{equation} 
	and $\eta < \frac{\mu \beta}{L}\min\left\{ \frac{1}{L}, \frac{\epsilon}{3\FracOut M + 2 L \Delta_{f_I} \zeta}\right\}$, for some $M > 0$; where $\Delta_{f_I} = \frac{1}{n}( f^{\ast}_{I} - \min_{\vw} f_{i, I}(\vw))$.
\end{theorem}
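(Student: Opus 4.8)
The plan is to mirror the stochastic gradient descent argument of Theorem~\ref{thm:convergence-sgd}, tracking how the coefficient weights $\sigma'_{c_t}(f_i(\vw_t))$ modify each term. Writing the \algoNameO update as $\vw_{t+1} = \vw_t - \eta\vg_t$ with $\vg_t = \sigma'_{c_t}(f_i(\vw_t))\nabla f_i(\vw_t)$ for a uniformly random index $i$, I would start from the $L$-smoothness descent inequality applied to the outlier-free objective $f_I$,
\[
f_I(\vw_{t+1}) \leq f_I(\vw_t) - \eta\,\nabla f_I(\vw_t)\tran\vg_t + \tfrac{L\eta^2}{2}\norm{\vg_t}^2,
\]
and take the conditional expectation $\E_i[\cdot]$ over the sampled index, so that $\E_i[\vg_t] = \frac1n\sum_{j=1}^n \sigma'_{c_t}(f_j(\vw_t))\nabla f_j(\vw_t)$. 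The two quantities to control are the expected cross term $\nabla f_I(\vw_t)\tran\E_i[\vg_t]$ (bounded below) and the expected second moment $\E_i[\norm{\vg_t}^2]$ (bounded above); the region $\setW_{\algoNameO}$ and the step-size condition are precisely what close these bounds.

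For the cross term I would split the sum into inliers and outliers. Since $f_{i,I}\equiv 0$ (hence $\nabla f_{i,I}=\zero$) on outliers, Assumption~\ref{as:outlier-gradient} gives $\nabla f_i = \nabla f_{i,I}$ on inliers, $\nabla f_i = \vh_i$ on outliers, and $\frac1n\sum_{i\in\Nin}\nabla f_{i,I}(\vw_t) = \nabla f_I(\vw_t)$. On the inlier block I use the alignment hypothesis $\nabla f_{i,I}(\vw_t)\tran\nabla f_I(\vw_t)\geq 0$ together with the region's lower bound on the coefficient weights $\sigma'_{c_t}(f_i(\vw_t))\geq\beta$ to pull $\beta$ out of a sum of nonnegative terms, yielding $\geq\beta\norm{\nabla f_I(\vw_t)}^2$. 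The outlier block $\frac1n\sum_{i\in\Nout}\sigma'_{c_t}(f_i)\,\nabla f_I\tran\vh_i$ carries no sign information (the outliers are arbitrary), so I bound its magnitude by Cauchy--Schwarz, $\bigl\|\frac1n\sum_{i\in\Nout}\sigma'_{c_t}(f_i)\vh_i\bigr\|\leq \FracOut\sqrt{\frac1{\Nout}\sum_{i\in\Nout}\sigma'_{c_t}(f_i)^2\norm{\vh_i}^2} < \FracOut\sqrt{M}$ using the region constraint, and then apply Young's inequality to trade it against $\tfrac{\beta}{2}\norm{\nabla f_I}^2$ plus a residual of order $\FracOut^2 M$.

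For the second moment, the outlier block is exactly $\FracOut\cdot\frac1{\Nout}\sum_{i\in\Nout}\sigma'_{c_t}(f_i)^2\norm{\vh_i}^2 < \FracOut M$ by the region constraint, while on the inlier block I use $0\leq\sigma'_{c_t}\leq 1$, the smoothness bound $\norm{\nabla f_{i,I}}^2\leq 2L(f_{i,I}(\vw_t)-\min_\vw f_{i,I})$, and the identity $\frac1n\sum_{i\in\Nin}(f_{i,I}(\vw_t)-\min_\vw f_{i,I}) = (f_I(\vw_t)-f^{\ast}_I)+\Delta_{f_I}$; keeping the weights inside and invoking the inlier-mass constraint $\frac1n\sum_{i=1}^n\sigma'_{c_t}(f_i)=\zeta$ converts the interpolation gap into the $\zeta\Delta_{f_I}$ residual. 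Assembling the pieces and invoking the Polyak--\L ojasiewicz inequality $\norm{\nabla f_I(\vw_t)}^2\geq 2\mu(f_I(\vw_t)-f^{\ast}_I)$ yields a one-step recursion of the form $\E_i[f_I(\vw_{t+1})-f^{\ast}_I]\leq(1-\mu\beta\eta+O(\eta^2))(f_I(\vw_t)-f^{\ast}_I)+\eta\,(\text{const}\cdot\FracOut M + \text{const}\cdot L\zeta\Delta_{f_I})$; the first part $\eta<\mu\beta/L^2$ of the step-size bound keeps the contraction factor in $(0,1)$, the second part $\eta<\frac{\mu\beta}{L}\cdot\frac{\epsilon}{3\FracOut M + 2L\Delta_{f_I}\zeta}$ forces the fixed-point neighborhood below $\epsilon$, and unrolling over $t$ gives the claim (the aggregate $3\FracOut M$ collecting the outlier contributions from the cross term and the second moment through the Young's-inequality constants).

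The main obstacle is the outlier cross term $\nabla f_I\tran\bigl(\frac1n\sum_{i\in\Nout}\sigma'_{c_t}(f_i)\vh_i\bigr)$: because no distributional (e.g.\ zero-mean) assumption is placed on the outliers, it cannot be cancelled in expectation as in Lemma~\ref{lem:training-algo-variance}, and must be absorbed purely through its weighted magnitude. This is exactly where the robust kernel earns its keep---since $\sigma'_{c_t}(f_i)$ is small on high-loss (outlier) samples, the weighted quantity $\frac1{\Nout}\sum_{i\in\Nout}\sigma'_{c_t}(f_i)^2\norm{\vh_i}^2$ appearing in $\setW_{\algoNameO}$ is far smaller than the unweighted $\frac1{\Nout}\sum_{i\in\Nout}\norm{\vh_i}^2$ of $\setW_{\text{SGD}}$, so the same threshold $M$ carves out a strictly larger admissible region. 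The remaining difficulty is bookkeeping: placing the factors $\beta$ (from the weight lower bound) and $\zeta$ (from the inlier-mass constraint) exactly where the step-size condition requires, and ensuring the constraint $\frac1n\sum_i\sigma'_{c_t}(f_i)=\zeta$ holds along the trajectory, which is guaranteed by restricting $\vw_t$ to $\setW_{\algoNameO}$ (equivalently, by the parameter update rule~\eqref{eq:gnc-parameter-update}).
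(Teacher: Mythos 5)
Your proposal follows essentially the same route as the paper's proof: the $L$-smoothness descent inequality applied to $f_I$, conditional expectation over the sampled index, a lower bound on the weighted cross term via the alignment hypothesis and the Polyak--\L{}ojasiewicz inequality, an upper bound on the weighted second moment via the region constraint, $L$-smoothness, and the $\zeta$-constraint (the paper packages both moment bounds as Lemma~\ref{lem:sigma-gradient-squares}), and finally a linear recursion in $\delta_t = \E[f_I(\vw_t)-f^{\ast}_I]$ unrolled as in Lemma~\ref{lem:linear-convergence}. The only substantive difference is your treatment of the signed outlier cross term $\nabla f_I\tran\bigl(\frac{1}{n}\sum_{i\in\Nout}\sigma'_{c_t}(f_i)\vh_i\bigr)$ via Cauchy--Schwarz and Young's inequality, absorbing it into $\tfrac{\beta}{2}\norm{\nabla f_I}^2$ plus an $O(\FracOut^2 M)$ residual; this is actually more careful than the paper's Lemmas~\ref{lem:gradient-squares} and~\ref{lem:sigma-gradient-squares}, which drop this term from the lower bound without justification, so your version perturbs the constants slightly but is otherwise equivalent (and arguably tighter).
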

The set $\setW_{\algoNameO}$ has two more constraints $\frac{1}{n}\sum_{i=1}^{n}\sigma'_{c}(f_i(\vw)) = \zeta$ and $\min_i \sigma_c(f_i(\vw)) \geq \beta > 0$. The first comes from the step to update the parameter $c$ in the algorithm and is always satisfied. The second is a technical assumption required for the proof to hold. This will hold true for all continuously differentiable $\sigma$. Therefore, the key constraint that determines $\setW_{\algoNameO}$ is 
\begin{equation}
\label{eq:roi-adaptive}
	\frac{1}{\Nout} \sum_{i \in \Nout}  \sigma'_{c}(f_i(\vw))^2\norm{ \vh_i(\vo_i, \vw) }^2  < M.
\end{equation}
Comparing this to the constraint that defines $\setW_{\text{SGD}}$ we see a multiplicative factor of $\sigma'_{c}(f_i(\vw))^2$ appear before the summation. %
\begin{remark}[Increased Region of Convergence]
Firstly, note that both SGD and \algoNameO converge to a neighborhood of the outlier-free optima  when there are no outliers; this is true because of the $L$-smoothness and $\mu$-Polyak-Lojasiewicz assumptions \citep{Garrigos23arxiv-HandbookConvergence}. 
	The presence of outliers shrinks the region of convergence for both algorithms. However, the region of convergence for \algoNameO, \ie $\setW_{\algoNameO}$, is larger than $\setW_{\text{SGD}}$. This is because the constraint~\eqref{eq:roi-adaptive} is weaker than the one that defines $\setW_{\text{SGD}}$. Thus, the use of the robust loss kernel $\sigma$ (and the coefficient weighting with $\sigma'$) widens the region of convergence. 
\end{remark}

\begin{remark}[Convergence and the Fraction of Outliers $\FracOut$]
	The robust statistics literature has investigated the notion of breakdown point, which is a fraction of outlier samples that the estimator can handle, after which the estimator can produce arbitrarily bad estimates (\cite{Huber81}). A similar notion could be investigated for robust training algorithms.
	However, we have found it hard to obtain an explicit relation between convergence and the fraction of outliers $\FracOut$ in the training data. 
	Our result instead shows how the robust loss kernel $\sigma_c$ diminishes the impact of outliers in determining the region of convergence (see $\setW_{\algoNameO}$).  
\end{remark}

The proof of Theorem~\ref{thm:convergence-adaptive} relies on deriving an iterative relation between $\delta_{t+1}$ and $\delta_t$, where $\delta_t = \E[f_I(\vw_t) - f^{\ast}_I|\vw_0]$. 
Extending this to \algoNameT, when $T > 1$, poses a challenge. 
When $T > 1$ %
the coefficient weights $u_{i, t}$ at iteration $t$ are determined by $\vw_s$ at iteration $s$, for all $t \in [s, s+T]$ and $s \in \{0, T, 2T, \ldots \}$. 
We show the following result for \algoNameT:

\begin{theorem}[Convergence Region of \algoNameT] 
	\label{thm:convergence-gnc}
Let $f_{i, I}$ be $L$-smooth and $\mu$-Polyak-Lojasiewicz. Furthermore, let $\nabla f_{i, I}(\vw)\tran \nabla f_I(\vw) \geq 0$ and let $\setR(\vw)$ denote the region where all past $T$ iterates lie (\ie $\vw_{t'} \in \setR(\vw_t)$ for all $t' \in [t-T, t]$), given $\vw_t = \vw$.
Then, \algoNameT (with update $\vw_{t+1} = \vw_t - \eta \sigma'_{c_t}(f_i(\vw_s))\nabla f_i(\vw_t)$, for all $t \in [s, T + s]$ and $s \in \{0, T, 2T, \ldots\}$, and $c_t$ chosen such that $\frac{1}{n}\sum_{i=1}^{n}\sigma'_{c_t}(f_i(\vw_s)) = \zeta$ for all $s$) converges to an $\epsilon$-neighborhood of the outlier-free optimal value $f^{\ast}_I$, namely $\E[\norm{f_I(\vw_t) - f^{\ast}_I}|\vw_0] < \epsilon$, provided all the model weights $\vw_t$ lie in the region $\setW_{\algoNameT}$ given by 
	\begin{equation}
		\setW_{\algoNameT} = \left\{ \vw \in \Real{d}~\Bigg|~
		\begin{array}{c}
		 	~\max_{(\vw', c) \in \setH(\vw)} \frac{1}{\Nout} \sum_{i \in \Nout}  \sigma'_{c}(f_i(\vw'))^2\norm{ \vh_i(\vo_i, \vw) }^2  < M \\
			\vspace{-2mm} 
			 \text{and}~~ \min_i \sigma_c(f_i(\vw)) \geq \beta > 0
		\end{array}
		~\right\},
	\end{equation} 
	for some $M > 0$, where $\setH(\vw) = \left\{ (\vw', c)~\big|~ \vw' \in \setR(\vw)~\text{and}~c~\text{s.t.} ~\frac{1}{\Nout}\sum_{i \in \Nout}  \sigma'_{c}(f_i(\vw')) = \zeta \right\}$, provided $\eta < \frac{\mu \beta}{L}\min\left\{ \frac{1}{L}, \frac{\epsilon}{3\FracOut M + 2 L \Delta_{f_I} \zeta}\right\}$ with $\Delta_{f_I} = \frac{1}{n}( f^{\ast}_{I} - \min_{\vw} f_{i, I}(\vw))$.
\end{theorem}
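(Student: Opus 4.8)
The plan is to reduce the $T>1$ case to the already-established single-step analysis of Theorem~\ref{thm:convergence-adaptive} by carefully tracking how the ``frozen'' coefficient weights affect the descent. The essential structural difference is that for $t \in [s, s+T]$, the coefficient weights $u_{i,t} = \sigma'_{c_t}(f_i(\vw_s))$ are evaluated at the stale iterate $\vw_s$ rather than the current $\vw_t$. I would begin exactly as in the $T=1$ proof: write $\delta_t = \E[f_I(\vw_t) - f^{\ast}_I \mid \vw_0]$, use $L$-smoothness of $f_I$ to expand $f_I(\vw_{t+1})$ around $\vw_t$, and take the conditional expectation over the random sample index $i$. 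The update is $\vw_{t+1} = \vw_t - \eta\, \sigma'_{c_t}(f_i(\vw_s))\nabla f_i(\vw_t)$, so the linear term produces an inner product $\nabla f_I(\vw_t)\tran \E_i[\sigma'_{c_t}(f_i(\vw_s))\nabla f_i(\vw_t)]$ and the quadratic term produces a variance-like contribution $\tfrac{L}{2}\eta^2 \E_i[\sigma'_{c_t}(f_i(\vw_s))^2 \norm{\nabla f_i(\vw_t)}^2]$.

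The key steps, in order, would be: (1) split each sum over inliers and outliers using Assumption~\ref{as:outlier-gradient} to write $\nabla f_i = \nabla f_{i,I} + \vh_i$ on the outlier set; (2) lower-bound the descent inner product using the assumption $\nabla f_{i,I}(\vw)\tran \nabla f_I(\vw) \ge 0$ together with the coefficient normalization $\tfrac{1}{n}\sum_i \sigma'_{c_s}(f_i(\vw_s)) = \zeta$ and the floor $\min_i \sigma_c(f_i(\vw)) \ge \beta$, which is what converts the raw gradient norm into the Polyak-Łojasiewicz quantity and supplies the factor $\mu\beta$ in the step-size bound; (3) bound the outlier contribution to both the linear and quadratic terms by the weighted outlier-gradient energy $\tfrac{1}{\Nout}\sum_{i\in\Nout}\sigma'_c(f_i(\vw'))^2\norm{\vh_i(\vo_i,\vw)}^2$, using Assumption~\ref{as:low-signal-to-outlier} ($\norm{\vh_i}\ge 1$ and $\norm{\vh_i}\ge\norm{\nabla f_{i,I}}$) to absorb the inlier gradients on outlier indices into the same energy term; and (4) apply the $\mu$-Polyak-Łojasiewicz inequality to obtain a contraction $\delta_{t+1} \le (1 - c_1\eta)\delta_t + c_2\eta^2$ and unroll it to show convergence to an $\epsilon$-neighborhood once the step-size satisfies the stated bound.

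The crucial new ingredient over the $T=1$ proof is handling the staleness of $\vw_s$ inside $\sigma'_{c_t}(f_i(\vw_s))$. Since the iterates $\vw_t$ move during a block $t\in[s,s+T]$, the coefficient weights are no longer the ``correct'' weights for $\vw_t$, so the outlier-energy term I need to control is $\tfrac{1}{\Nout}\sum_{i\in\Nout}\sigma'_c(f_i(\vw'))^2\norm{\vh_i(\vo_i,\vw)}^2$ with $\vw' = \vw_s$ generally different from $\vw = \vw_t$. The mechanism to dominate this is precisely the $\max_{(\vw',c)\in\setH(\vw)}$ in the definition of $\setW_{\algoNameT}$: I would replace the block-specific stale quantity by the supremum over all admissible past iterates $\vw' \in \setR(\vw)$ and all parameters $c$ consistent with the normalization constraint $\tfrac{1}{\Nout}\sum_{i\in\Nout}\sigma'_c(f_i(\vw'))=\zeta$ encoded in $\setH(\vw)$. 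This makes the per-step energy bound uniform across the block, so the one-step contraction from the $T=1$ analysis applies verbatim with $M$ replaced by this worst-case value, and the recursion can be unrolled identically.

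The main obstacle I expect is making step (3) rigorous under the decoupling of $\vw'$ and $\vw$: the outlier factors $\vh_i(\vo_i,\vw)$ are evaluated at the current iterate $\vw_t$ while the weights $\sigma'_c(f_i(\vw_s))$ are evaluated at the stale iterate, and I need to argue that taking the maximum over $\setH(\vw)$ legitimately upper-bounds every term that arises for each $t$ in the block. This requires showing that all past $T$ iterates indeed lie in $\setR(\vw_t)$ (which is how $\setR$ is defined) and that the constraint $c_s$ satisfies the normalization appearing in $\setH$; both are guaranteed by the parameter-update rule~\eqref{eq:gnc-parameter-update} and the definition of $\setR$, but stitching these together so that a single worst-case constant $M$ controls the entire block-recursion — rather than a $t$-dependent bound that could degrade the telescoping — is the delicate part of the argument.
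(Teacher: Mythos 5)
Your proposal is correct and follows essentially the same route as the paper: the paper's proof of Theorem~\ref{thm:convergence-gnc} simply reruns the one-step descent argument of Theorem~\ref{thm:convergence-adaptive} with the stale coefficients $\sigma'_{c_s}(f_i(\vw_s))$, bounds the resulting mixed term $\frac{1}{\Nout}\sum_{i\in\Nout}\sigma'_{c_s}(f_i(\vw_s))^2\norm{\vh_i(\vo_i,\vw_t)}^2$ by $M$ via the $\max_{(\vw',c)\in\setH(\vw)}$ condition in $\setW_{\algoNameT}$ (exactly the mechanism you identify for handling staleness), and unrolls the resulting linear recursion with Lemma~\ref{lem:linear-convergence}.
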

The \algoNameT requires the quantity 
\begin{equation}
	 \frac{1}{\Nout} \sum_{i \in \Nout}  \sigma'_{c}(f_i(\vw'))^2\norm{ \vh_i(\vo_i, \vw) }^2,
\end{equation}
to remain bounded, where $\vw'$ are the model weights of any of the previous $T$ iterations. 
This is the same condition as was required for \algoNameO.
The space $\setH(\vw)$ is going to be larger for larger $T$, which makes sense, as belief about the outliers computed $T$ iterations earlier is likely to be stale now and impact convergence.

\begin{remark}[Convergence in Robust Estimation]
	A line of prior work has investigated convergence of iteratively re-weighted least square type algorithms. \cite{Aftab15wacv-ConvergenceIteratively} were the first to observe that it is the concavity property of $\rho(\sqrt{r})$ that ensures that the loss decreases for the iteratively re-weighted least squares (here $\rho$ is the robust loss as in Section~\ref{sec:url-robust-estimation}). They argued for concavity of $\rho(\sqrt{r})$ to be a necessary property for every robust loss design. This property translates to concavity of the robust loss kernel $\sigma$ and is satisfied by Definition~\ref{def:robust-loss-kernel}. Recent work \citep{Peng23cvpr-ConvergenceIRLS} derived two new graduated non-convexity algorithms for robust estimation, and for the first time, proved that they converge to the local optima of the robust M-estimation objective, albeit perturbed by $\epsilon$. These convergence results however did not investigate convergence of the iterates to the outlier-free optima $f^{\ast}_I$. 
	Moreover, these works focus on the robust estimation problems and, therefore, do not consider the deep learning setup where the training is inherently stochastic due to finite batch sizes. %
\end{remark}

\section{Experiments}
\label{sec:expt}

We experimentally demonstrate our theoretical results. We show that 
the \algoNameLongCaps
achieves lower variance in gradient computation and leads to better outlier mitigation. %
We observe that the algorithm is able to retain performance even when the percentage of outliers $\FracOut$ is large. 
We demonstrate this in three applications: linear regression, image classification, and  neural scene rendering~\citep{Mildenhall20arxiv-nerf,Mueller22acm-instantngp,Tancik23siggraph-nerfstudio}. The first two experiments primarily show the general applicability of our training algorithms and validate the theoretical results. The third experiment shows that the algorithm can be applied to mitigate pixel-level outliers in novel view synthesis problems based on neural radiance fields. 

We implement three variations of the \algoNameT (Algorithm~\ref{algo:athree}): 
(i) 
\emph{Adaptive TL}: 
\algoNameO with truncated loss kernel, 
(ii) 
\emph{Adaptive GM}: 
\algoNameO with Geman McClure loss kernel, 
(iii) 
\emph{Adaptive-T GM}:
\algoNameT with Geman McClure loss kernel. See Table~\ref{tab:robust-loss-kernels} for all the robust loss kernels. 

\subsection{Linear Regression}
\label{sec:expt-linear-regression}

We first consider the simple problem of linear regression.
    Given $n=1000$ measurement pairs $(\vxx_i, y_i) \in \Real{k} \times \Real{}$, we estimate a vector $\hat{\vw} \in \Real{k}$ 
    that minimizes a mean squared error (MSE) loss $f(\vw) = \frac{1}{n} \sum_{i=1}^{n} (y_i - \vw^T \vxx_i)^2$.
    We generate the measurement pairs $(\vxx_i, y_i)$ by first 
    sampling each coordinate of $\vxx_i$ uniformly randomly from $(0, 1]$ and $\vw^\star$ from $\calN(0, 1)$,
    and compute $y_i = \vw^\star \vxx_i + \epsilon_i + o_i$, where $\epsilon_i \sim \calN(0, 0.1)$ is a noise term
    and $o_i$ is the outlier term sampled from $\calN(0, 5)$, if $i \in \Nout$, and is otherwise set to zero.
    We vary $\FracOut$ (the fraction of outliers, \ie $\Nout/n$) from 0\% to 90\% with a 10\% increment. We average over five Monte Carlo trials for each $\FracOut$.
    For all methods, step size $\eta$ is set to $7\times10^{-4}$ and number of iterations is fixed at $10^{4}$. We use batch size of one in training.

Figure~\ref{fig:toy-training-loss} plots the test accuracy (\ie root mean squared error (RMSE)) as a function of fraction of outliers $\FracOut$ in the training data. We observe that even though MSE is noise-tolerant (see Section~\ref{sec:url-deep-learning}), the  SGD algorithm does not converge. This is because the outliers tend to induce high-variance during each descent iteration. Figure~\ref{fig:training-algo-variance} shows a training instance and how the variance affects convergence. The Adaptive GM and Adaptive TL reduce this variance and show better convergence. Gradient descent converges to the outlier-free optima correctly. This shows that the notion of noise-tolerance is useful when one has low variance in the estimation of the gradients.

 \begin{figure}
\centering
\begin{subfigure}{.42\textwidth}
  \centering
  \includegraphics[trim={0 0 0 0},clip,width=\linewidth]{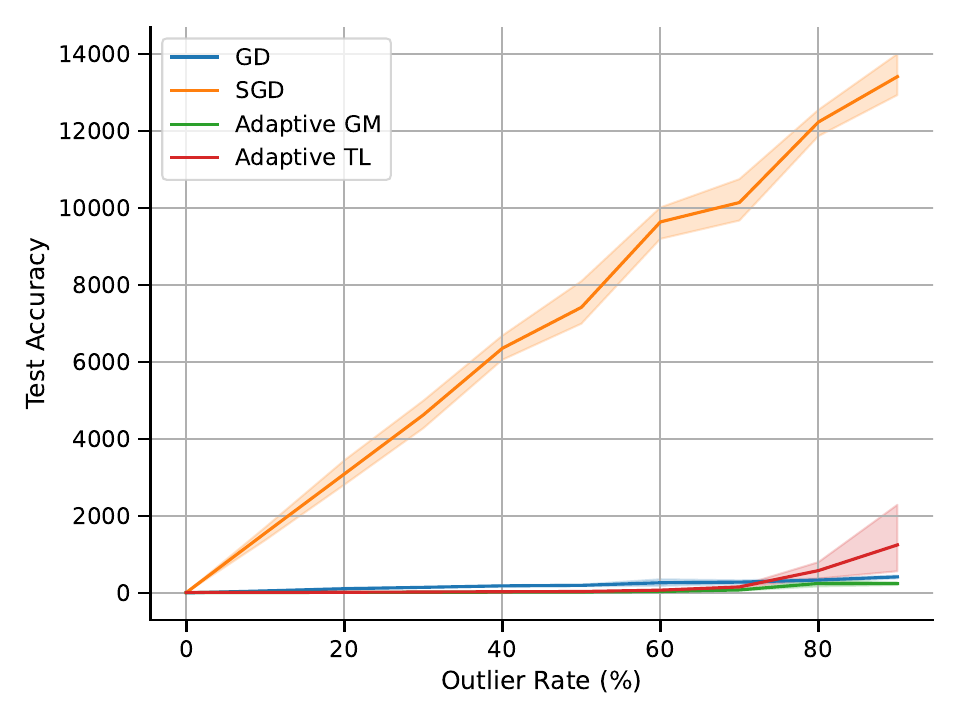}
  \vspace{-5mm}
  \caption{}
  \vspace{-5mm}
  \label{fig:toy-training-loss}
\end{subfigure}%
\hspace{5mm}
\begin{subfigure}{.43\textwidth}
  \centering
  \includegraphics[trim={10 10 20 10},clip,width=\linewidth]{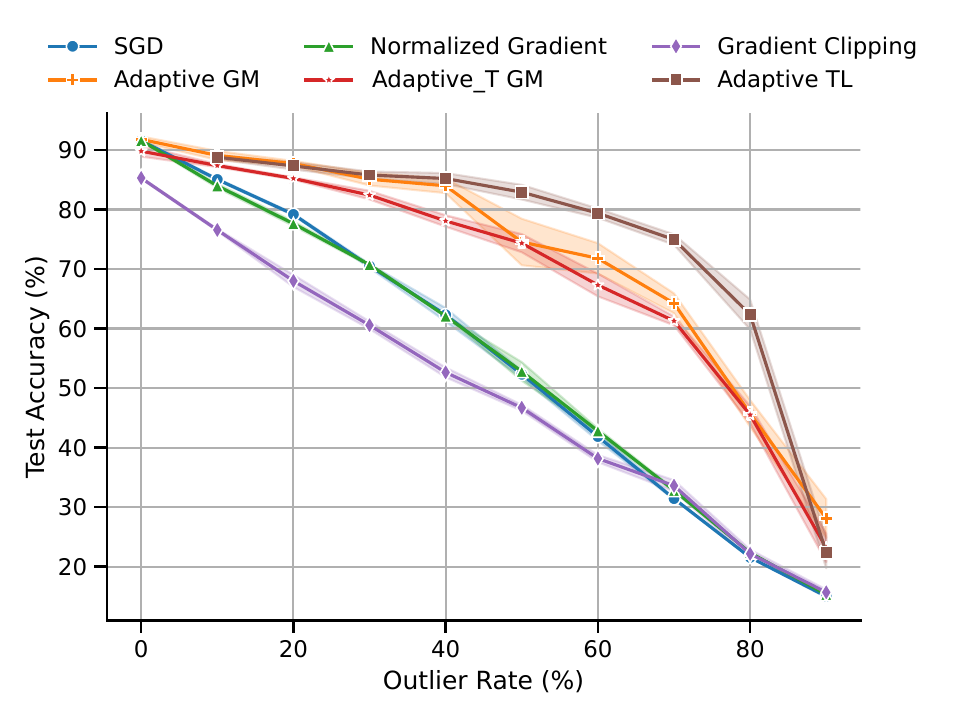}
  \caption{}
  \label{fig:cifar10-training-loss}
  \vspace{-2mm}
\end{subfigure}
\caption{(a) Test accuracy (\ie RMSE on test data) as a function of outlier fraction $\FracOut$ in the training data. The figure shows the gradient descent (GD) algorithm, stochastic gradient descent (SGD) algorithm, and two \algoNameLong{}s Adaptive GM and Adaptive TL. 
(b) Test classification accuracy as a function of outlier fraction $\FracOut$ in the training data. The figure shows SGD, Normalized Gradient Descent, Gradient Clipping, and the three \algoNameLong\!\!s Adaptive GM, Adaptive TL, and Adaptive-T GM.}%
\label{fig:training-loss}
\end{figure}

\subsection{Image Classification}
\label{sec:expt-classification}

We train a standard DLA-34 \citep{Yu18cvpr-DLA} network on the CIFAR10 datasets, with the standard train and test splits.
All methods are trained with a total of 500 epochs and the batch size of 128 and use cross-entropy loss.
To generate noisy labels, we adopt the standard symmetric noise model where sample labels are replaced following a uniform distribution of probability.
We vary the fraction of outliers $\FracOut$ in the training set from 0\% to 90\% with 10\% increment.
We implement SGD with momentum with fixed learning rate of 1e-3 and a momentum of 0.9, and use it as the gradient-based training algorithm in the implementation of the 
\algoNameLong.
We implement gradient clipping~\citep{Menon20iclr-clippinglabelnoise} and normalized gradient descent~\citep{Zhang20iclr-WhyGradient} for baseline comparisons. 
For all methods a weight decay of 5e-4 is applied during training.

Figure~\ref{fig:cifar10-training-loss} plots test accuracy as a function of the outlier ratio $\FracOut$. We observe that the Adaptive TL, Adaptive GM, and Adaptive-T GM show improved mitigation of outliers as opposed to simply training with the SGD algorithm. This validates our results in Section~\ref{sec:theory-roi} which argue that the 
\algoNameLongCaps has 
a larger region of convergence.

\subsection{Neural Radiance Field}
\label{sec:expt-nerf}
\begin{figure}
\centering
\begin{subfigure}{.4\textwidth}
  \centering
  \includegraphics[trim={38 18 25 15},clip,width=\linewidth]{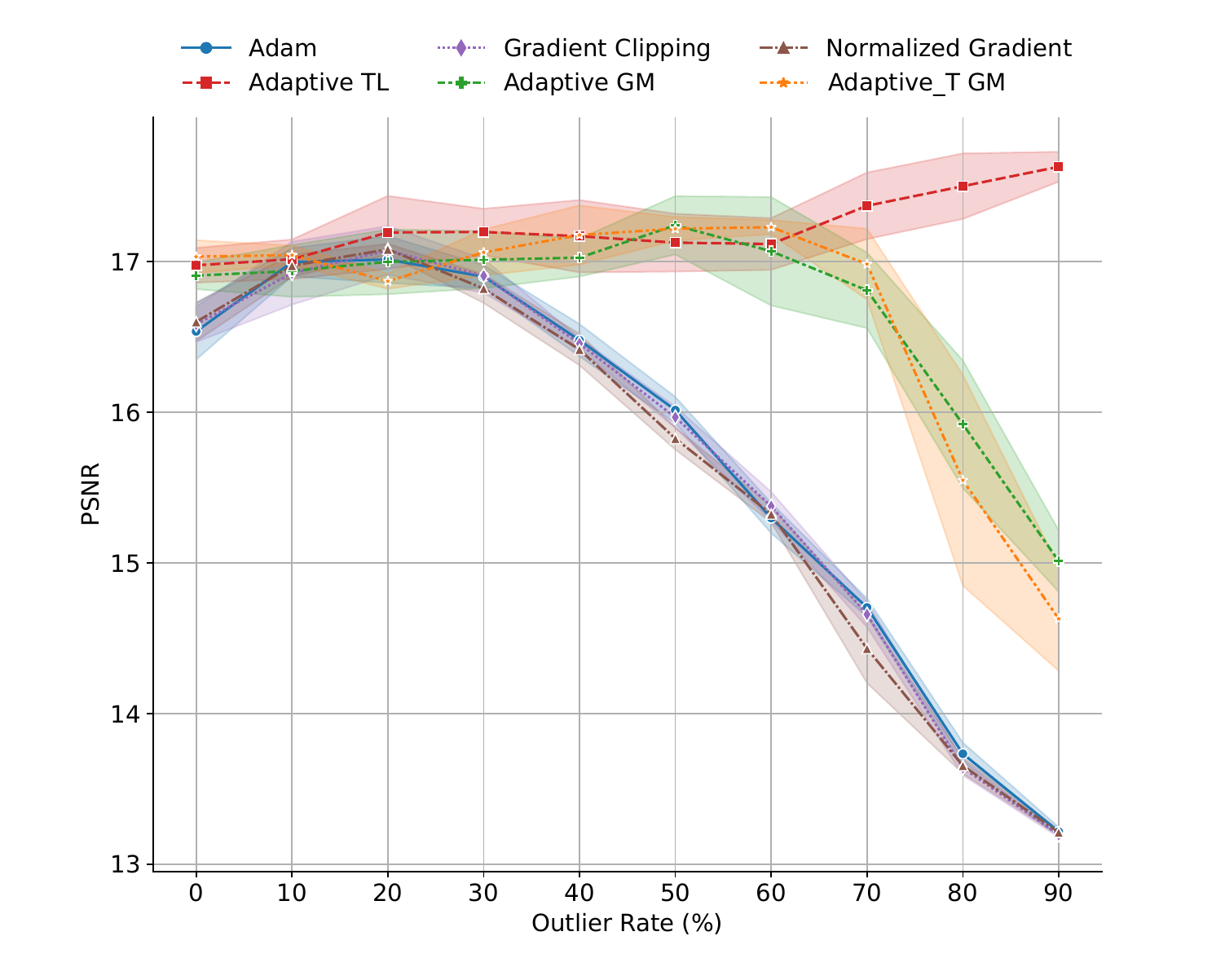}
  \caption{}
  \label{fig:nerf-psnr}
\end{subfigure}%
\begin{subfigure}{.38\textwidth}
  \centering
  \includegraphics[trim={38 18 80 15},clip,width=\linewidth]{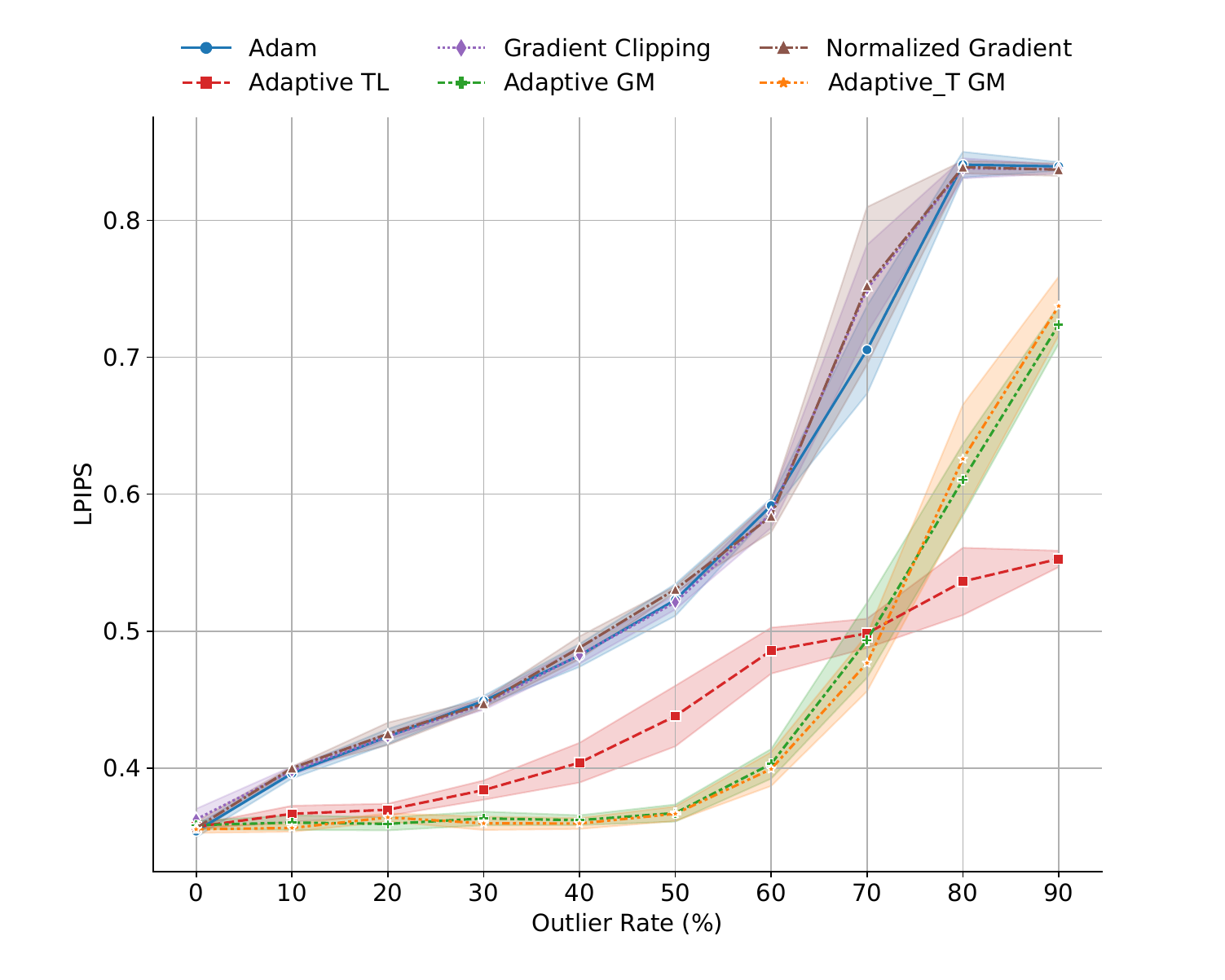}
  \caption{}
  \label{fig:nerf-lpips}
\end{subfigure}
\caption{Test accuracy (PSNR $\uparrow$ and LPIPS $\downarrow$) of the trained model as a function of \% outliers in the training data for various training algorithms: (i) Adam / SGD, the baseline approach proposed for training without outliers; (ii) Gradient Clipping, (iii) Normalized Gradient, (iv) Adaptive TL, (v) Adaptive GM, and (vi) Adaptive-T GM. 
\vspace{-7mm} 
}
\vspace{1mm}
\label{fig:nerf}
\end{figure}
\begin{wrapfigure}{r}{0.52\textwidth}
  \begin{center}
  \vspace{-8mm}
    \includegraphics[trim={7 10 30 10},clip,width=0.45\textwidth]{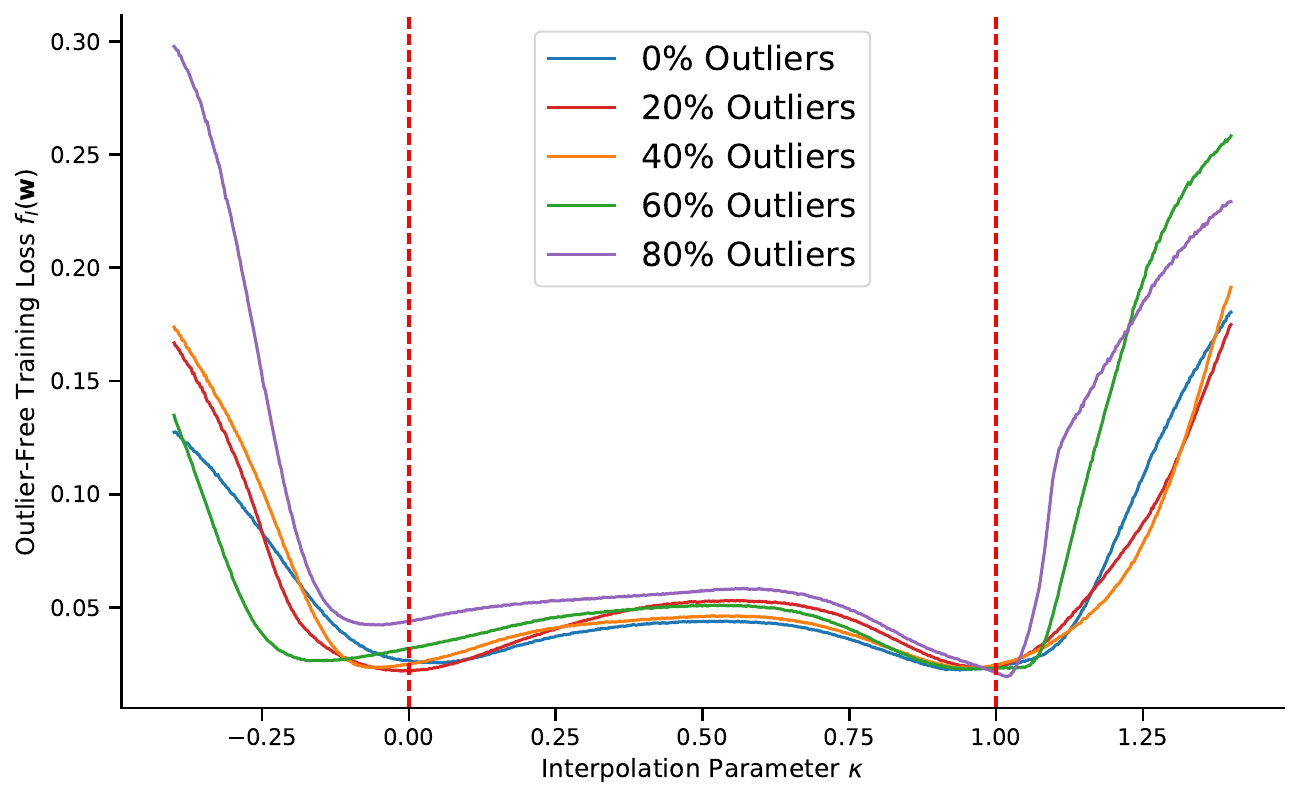}
  \end{center}
  \vspace{-1mm}
  \caption{Plot of the 1D training loss landscape as interpolated between the Adaptive TL model weight and the vanilla Adam model weights.}
  \vspace{-3mm}
  \label{fig:nerf-loss-landscape}
\end{wrapfigure}
We employ the open-source Nerfacto \citep{Tancik23siggraph-nerfstudio} model, 
a popular implicit scene reconstruction pipeline that combines Instant-NGP \citep{Mueller22acm-instantngp} with a 
camera-pose refinement stage.
We use the default model configuration parameters provided 
and an exponential decay scheduler with 2e5 steps with a final learning rate of 1e-4.
We simulate pixel-level noise by adding uniformly distributed noise to the camera ray originating from each pixel, which is selected with probability $\FracOut$.
We use the Adam optimizer with a learning rate of 1e-3.  
We implement gradient clipping~\citep{Menon20iclr-clippinglabelnoise} and normalized gradient descent~\citep{Zhang20iclr-WhyGradient} for baseline comparisons. 
We compare the methods with peak signal to noise ratio (PSNR) and learned perceptual image patch similarity (LPIPS) as in~\citep{Sabour23cvpr-robustnerf}. A higher PSNR value indicates better image quality, while a lower LPIPS score suggests greater perceptual similarity between the generated and the ground-truth images.

Figure~\ref{fig:nerf} plots two test accuracy metrics as a function of outlier rate $\FracOut$. 
We again observe 
that the Adaptive TL, Adaptive GM, and Adaptive-T GM show better robustness to outliers in the training data. Adaptive TL performs the best and shows good mitigation of outliers even when the training images have $90\%$ of the pixels degraded with outliers. To investigate a little more deeply
 the Adam convergence vis-a-vis our algorithms, we plot the 1D loss landscape in 
Figure~\ref{fig:nerf-loss-landscape}. 
The figure plots the 1D loss landscape as a function of an interpolation parameter $\kappa$ \citep{Li18nips-lossLandscapeNN}. %
The x-axis point $1$ is the optimal model weight the Adaptive TL training converges to, and the x-axis point $0$ is the optimal model weight the vanilla Adam converges to. We observe that the point to which Adam converges is a different local minima and is unstable in the presence of outliers (\ie we see the loss landscape wobbles as $\FracOut$ changes). On the other hand, the loss landscape near the model weight that Adaptive TL converges to remains the same, across the outlier rate $\FracOut$.  
Figure~\ref{fig:nerfacto-vis-results} shows views synthesized by two models: one trained with Adaptive TL and another trained with vanilla Adam, when we have an outlier rate of 80\% during training. We observe that the vanilla Adam is not able to recover any reasonable visual signal after training, while the Adaptive TL sees a visually good view synthesis.

\begin{figure}
\centering
\begin{subfigure}{.2\textwidth}
  \centering
  \includegraphics[width=\linewidth]{fig/nerf/tl/frame_00031.jpg}
  \label{fig:tl01}
\end{subfigure}
\begin{subfigure}{.2\textwidth}
  \centering
  \includegraphics[width=\linewidth]{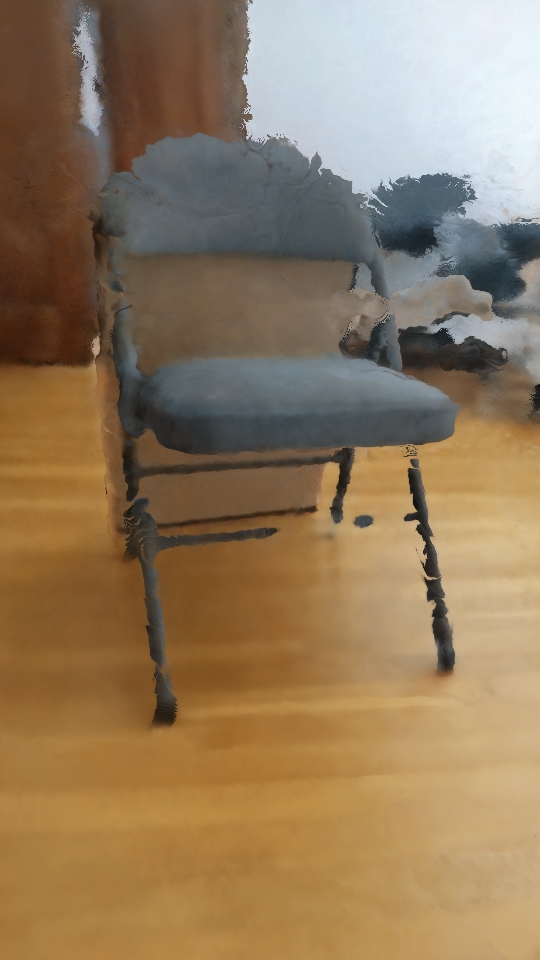}
  \label{fig:tl02}
\end{subfigure}
\begin{subfigure}{.2\textwidth}
  \centering
  \includegraphics[width=\linewidth]{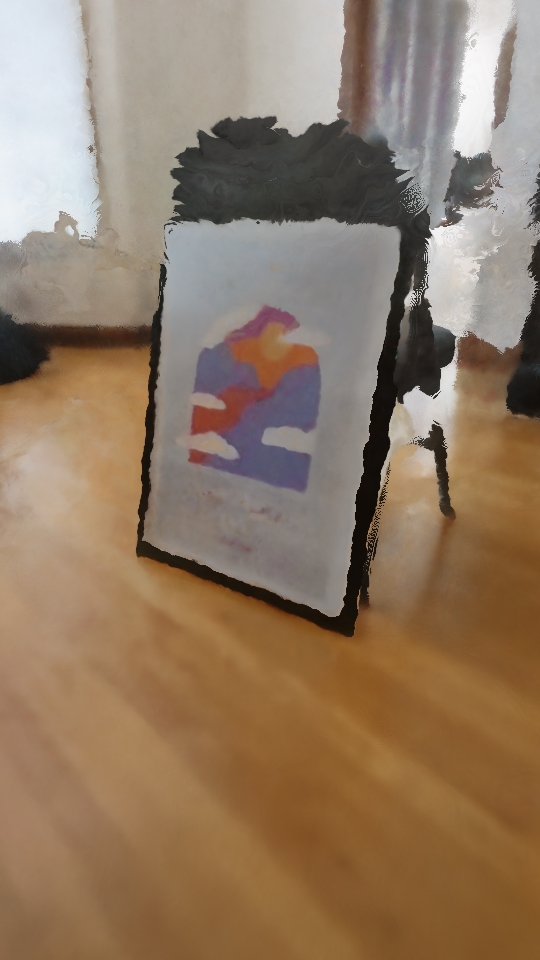}
  \label{fig:tl03}
\end{subfigure}
\begin{subfigure}{.2\textwidth}
  \centering
  \includegraphics[width=\linewidth]{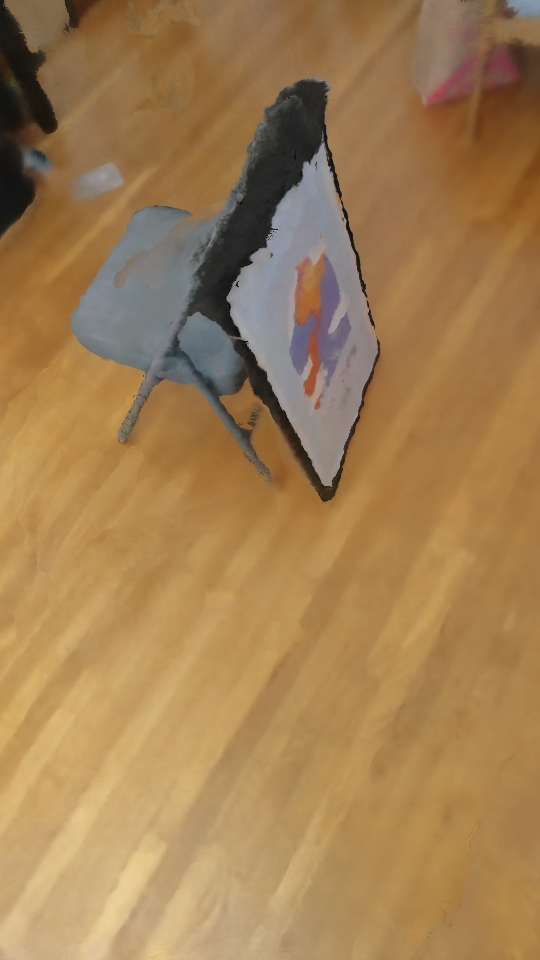}
  \label{fig:tl04}
\end{subfigure}
\begin{subfigure}{.2\textwidth}
  \centering
  \includegraphics[width=\linewidth]{fig/nerf/adam/frame_00031.jpg}
  \label{fig:tl01}
\end{subfigure}
\begin{subfigure}{.2\textwidth}
  \centering
  \includegraphics[width=\linewidth]{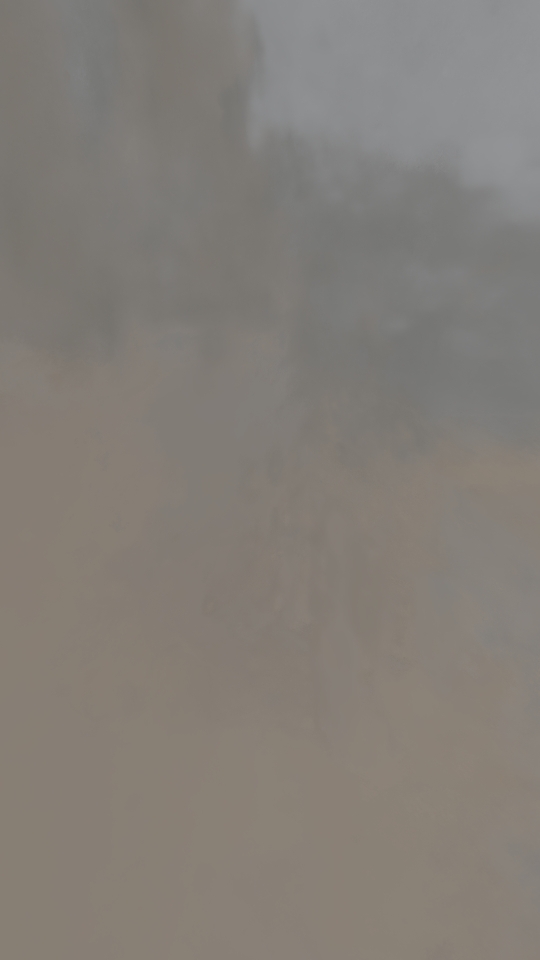}
  \label{fig:tl02}
\end{subfigure}
\begin{subfigure}{.2\textwidth}
  \centering
  \includegraphics[width=\linewidth]{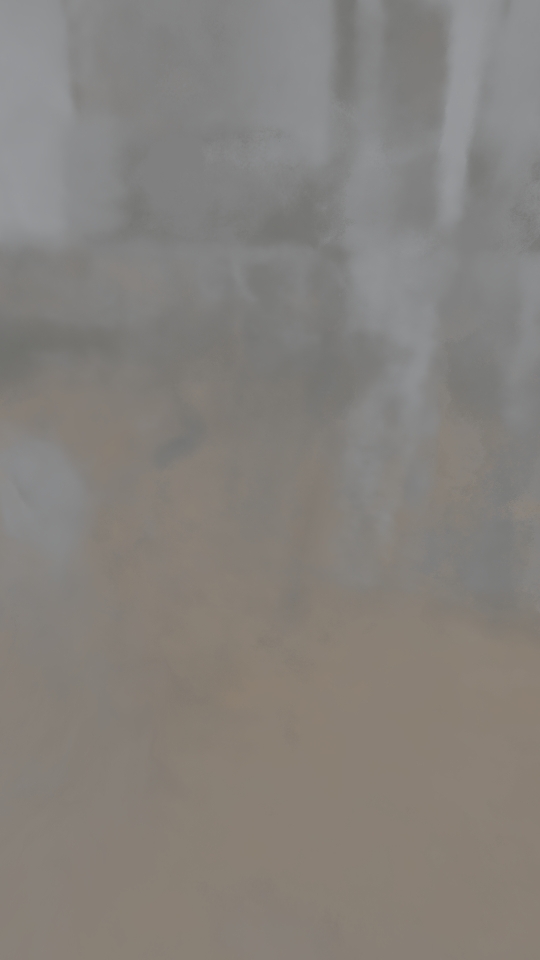}
  \label{fig:tl03}
\end{subfigure}
\begin{subfigure}{.2\textwidth}
  \centering
  \includegraphics[width=\linewidth]{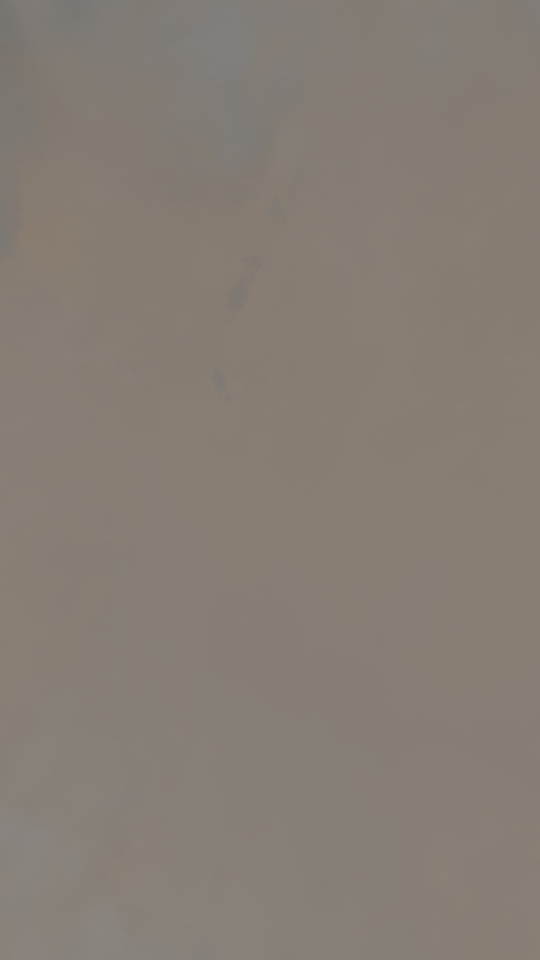}
  \label{fig:tl04}
\end{subfigure}
\begin{subfigure}{.2\textwidth}
  \centering
  \includegraphics[width=\linewidth]{fig/nerf/ground_truth/frame_00031.jpg}
  \label{fig:tl01}
\end{subfigure}
\begin{subfigure}{.2\textwidth}
  \centering
  \includegraphics[width=\linewidth]{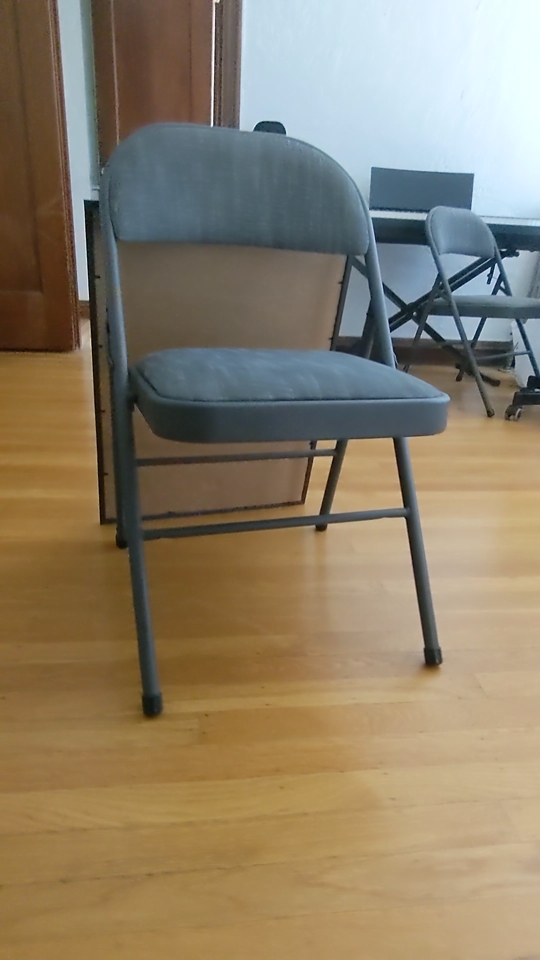}
  \label{fig:tl02}
\end{subfigure}
\begin{subfigure}{.2\textwidth}
  \centering
  \includegraphics[width=\linewidth]{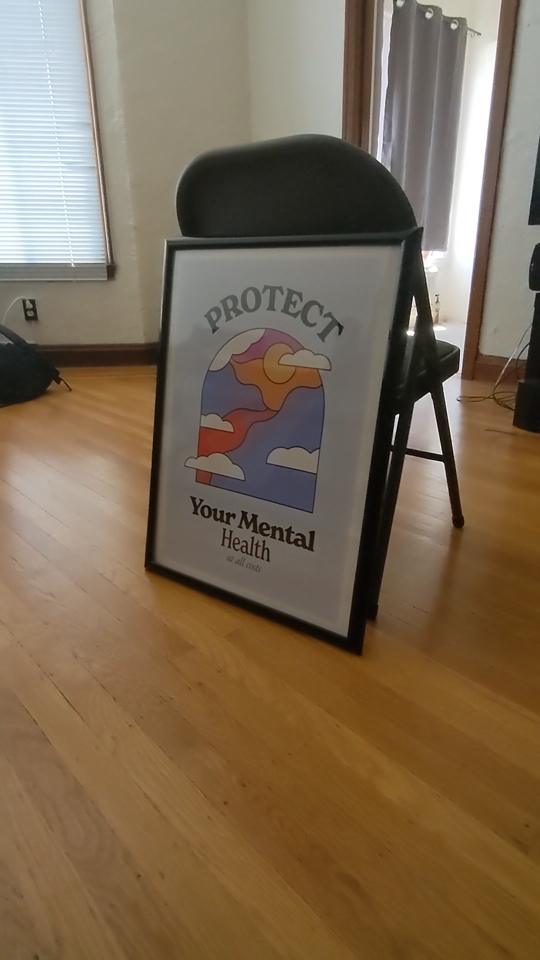}
  \label{fig:tl03}
\end{subfigure}
\begin{subfigure}{.2\textwidth}
  \centering
  \includegraphics[width=\linewidth]{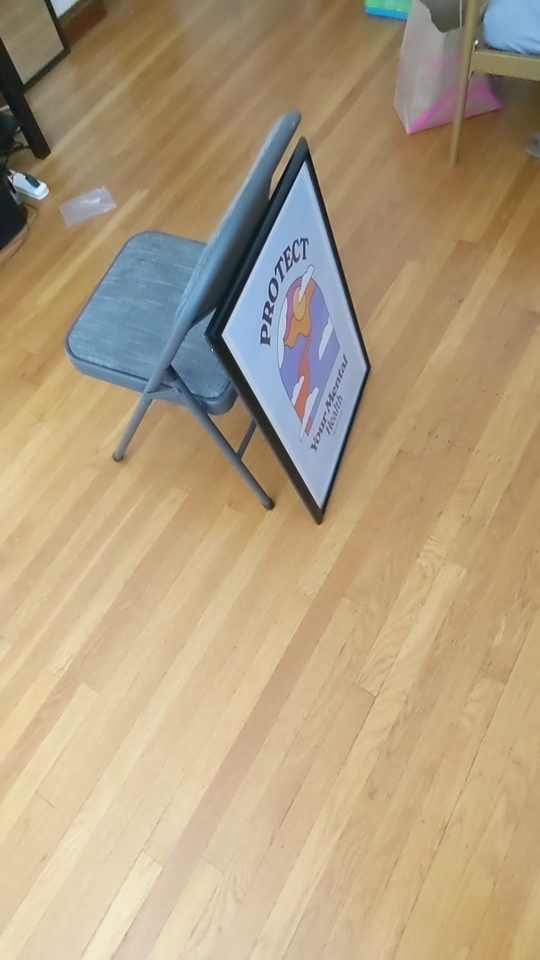}
  \label{fig:tl04}
\end{subfigure}
\caption{Nerfacto reconstruction results after $80\%$ of the training pixels have been perturbed by noise. The first row shows the result of training with our \algoNameLongCaps with Truncated Loss. The second row shows the result of running the original  Adam optimizer. The third row shows the ground truth images from the corresponding views.}
\label{fig:nerfacto-vis-results}
\end{figure}

\section{Related Work}
\label{sec:lit}

\subsection{Outlier Robust Training of Deep Learning Models}
\label{sec:lit-robust-dl}

Training image classification models in the presence of outliers in the training data has been well extensively investigated in the last decade. 
Existing methods include label correction methods, loss correction methods, refined training strategy, and robust loss function design. \cite{Algan21kbs-Imageclassification, Song23tnls-LearningNoisy} provide a detailed review on the topic of training multi-label classifiers in the presence of outliers in the training data.
 State-of-the-art approaches (\eg \cite{Li20iclr-dividemix}) use a combination of these approaches to attain best results. 
While most approaches remain specific to the task of image classification, some of them are generally applicable. Two such approaches include robust loss design and outlier-robust training algorithms.

Seminal works \citep{Ghosh15arxiv-MakingRisk, Ghosh17arxiv-RobustLoss} introduce the notion of \emph{noise-tolerant loss} (if $\vw^{\ast}_{\lambda}$ denotes the optimal model weights when minimizing a loss function $l$, then $l$ is said to be noise-tolerant to $\lambda$ fraction of outliers if $\vw^{\ast}_{\lambda} = \vw^{\ast}_{0}$). The paper goes on to prove that symmetric losses, such as a simple mean absolute error (MAE), are noise tolerant for multi-label classification provided $\lambda < 1 - 1/K$, where $K$ denotes the set of all label classes. The classical cross entropy (CE) loss is shown to be not noise tolerant.
Several works since then have investigated the design of robust and noise-tolerant losses. 
\cite{Zhang18nips-GeneralizedCross} %
propose the generalized cross entropy (GCE) loss that generalizes MAE and the CE loss, and is inspired by the negative Box-Cox transformation \citep{Box64jrss-AnalysisTransformations} and the generalized maximum likelihood framework \citep{Ferrari10astats-MaximumLqlikelihood}. %
\cite{Amid19nips-RobustBiTempered} replace the logarithms and exponentials in the cross-entropy loss with `tempered' versions %
\citep{Naudts02physica-Deformedexponentials}. The temperature parameters are tuned to achieve better outlier robustness. 
\cite{Wang19iccv-SymmetricCross} propose symmetric cross entropy loss, along the lines of symmetric KL divergence. 
\cite{Feng20ijcai-CanCross}  propose a loss that is a finite Taylor series expansion of the log likelihood loss.
\cite{Zhou23pami-AsymmetricLoss} propose an asymmetric loss and show how popular robust losses can be turned into noise-tolerant losses, under dominant clean label assumption (\ie the label noise is such that clean label remains dominant in the noise induced distribution).
\cite{Ma20icml-NormalizedLoss} show that any loss can be converted to noise-tolerant loss by applying a simple normalization. However, this changes the structure of the loss and can cause underfitting or divergence. They propose active-passive loss that combines two noise-tolerant loss functions that can boost each other. %
Curriculum and peer losses are proposes in \citep{Lyu20iclr-CURRICULUMLOSS} and \citep{Liu20icml-PeerLoss}, respectively. \cite{Xu19nips-L_DMINovel} propose determinant-based mutual information loss and show that it can successfully tackle instance-independent noise.

Outlier robustness is not only achieved by designing better robust losses, but also by developing better training strategies. 
\cite{Zhang17iclr-mixup} propose using convex combination of training samples to have the networks favor linear behaviors.
\cite{Elesedy23arxiv-uclip} maintain a buffer of clipped gradients and add them to the next iteration, show that their clipped updates are unbiased, and develop convergence guarantees under some assumptions. %
DivideMix~\citep{Li20iclr-dividemix} proposes a semi-supervised approach to refine noisy labels during training using a mixture model.
\cite{Menon20iclr-clippinglabelnoise} investigate the effect of gradient clipping on countering label noise in training classification networks.
\cite{Mai21icml-stability} develop quantitative results on the convergence of clipped stochastic gradient descent for non-smooth convex functions.
\cite{Ren18icml-LearningReweight} propose an iterative re-weighting scheme in training machine learning models in presence of outliers. It uses a small set of clean samples to evaluate and update the weights at each iteration.
Recent works have considered training neural radiance fields (NeRF) in the presence of distractors (\eg moving objects, lighting variations, shadows). RobustNeRF by \cite{Sabour23cvpr-robustnerf}, among other heuristics, uses median to trim outliers in loss computation during training.
\emph{While these training methods have shown promising results,
their convergence has not been analyzed.}

\subsection{Robust Estimation in Robot and Computer Vision}
\label{sec:lit-robust-cv}

Outlier-infested data is common in robot and computer vision, spanning estimation problems arising in robot localization and mapping, camera pose estimation, calibration, and 3D reconstruction.
For low-dimensional problems, RANSAC remains the go-to approach: RANSAC \citep{Fischler81} samples a small subset of measurements, and solves the problem using only those measurements (\ie using minimal solvers). Then, it identifies all the other measurements that are consistent with the solution, looking for large sets of measurements that ``agree'' with each other. RANSAC is fast for problems with a small minimal set and relatively low fractions of outliers, but is not guaranteed to converge to globally optimal solutions. %
More recently, certifiable outlier-robust methods~\citep{Yang20tro-teaser, Yang22pami-certifiablePerception} have shown how to 
frame several M-estimation problems in robotics and vision as a polynomial optimization problem, which can then be solved to certifiable optimality via standard semidefinite relaxations.  
While these methods yield certifiably optimal solutions, they tend to be computationally expensive. 

Graduated Non-Convexity (GNC)~\citep{Antonante21tro-outlierRobustEstimation, Yang20ral-GNC, Black96ijcv-unification, Peng23cvpr-ConvergenceIRLS} have emerged as a good balance between real-time computation and effective outlier mitigation for state estimation problems in robotics.  %
In it, the robust M-estimation problem is re-framed as an iterative re-weighted least squares. The weights indicates whether a measurement is an inlier or an outlier.  The duality result established in \cite{Black96ijcv-unification}  enables rephrasing M-estimation into a weighted least squares problem. 
This is popularly known as the Black-Rangarajan duality and is a common technique used to re-formulate and solve robust estimation problems. 

While GNC has shown promising results, a theoretical understanding about its convergence has been lacking in the robust estimation literature.
\cite{Aftab15wacv-ConvergenceIteratively} show that an iterative re-weighted least squares (IRLS) scheme, where the weights are updated according to the Black-Rangarajan duality, reduces the M-estimator loss, and can attain optimality, under very strict convexity conditions (\ie it requires the objective in~\eqref{eq:intro-m-est} to be convex). 
The IRLS scheme has been particularly studied in solving the Fermat-Weber problem. In it, the goal is to find a point that minimizes the \Lp distance from a given set of points. %
\cite{Brimberg93or-GlobalConvergence} investigate convergence of the IRLS procedure in this setting, whereas \cite{Aftab15pami-GeneralizedWeiszfeld} extend the IRLS scheme over Riemmanian manifolds (\eg \SOthree) and prove convergence for rotation averaging problems.
These works primarily tackle the case where the robust loss is assumed to be fixed.

Recent works have proposed parameterized (or \emph{adaptive}) robust losses to enable automatic tuning.  
\cite{MacTavish15crv-robustEstimation} was the earliest work in robot state estimation to show that adaptive robust losses improve outlier rejection.
\cite{Barron19cvpr-adaptRobustLoss} develops a general and adaptive robust loss function, that instantiates other well-known robust losses for different choices of the adaptive parameter. %
\cite{Chebrolu20ral-adaptiveCost} use this general robust loss and adapts its shape in training to better mitigate the outliers in robot state estimation problems.  
The GNC algorithm by \cite{Yang20ral-GNC, Antonante21tro-outlierRobustEstimation} has shown good practical performance, however, does not have any theoretical convergence guarantees. 
\cite{Peng23cvpr-ConvergenceIRLS} propose new GNC algorithms, by defining two new parameterized versions of the robust losses for \Lp and the truncated least squares loss. Unlike in \citep{Yang20ral-GNC}, they prove that their GNC algorithm converges to stationary points of the M-estimator, albeit perturbed by $\epsilon$. 
\cite{Shen19icml-LearningBad} propose iteratively training with a pre-defined fraction of `good' samples (\ie samples with the lowest loss). This algorithm can be thought of as using an adaptive truncated robust loss in each iteration, where the truncation threshold is adapted at each iteration. The paper also derives convergence to error bounds for a generalized linear model. 
\emph{While there is interest in developing better GNC algorithms with convergence properties, these results do not directly extend to the context of training deep learning models, where solvers use finite batch sizes. }

\subsection{Convergence Analysis of Training Algorithms in the Presence of Outliers}
\label{sec:lit-sgd-convergence}
Classical machine learning problems (\eg linear regression, principle component analysis, matrix decomposition)  have received significant attention, and many algorithms have been proposed to cope with outliers. 
Training deep learning models in the presence of outliers, however, remains challenging for at least two reasons. 
First, deep learning models are trained using batches of data, and therefore, any algorithm only has access to an estimate of the true gradient. A biased or an outlier gradient can significantly affect convergence. 
Second, the training loss can be non-convex and is hard to analyze without making certain assumptions.

Stochastic gradient descent is the most popular approach for training deep learning models. Several works have investigated its convergence behavior, brought to forth its limitations (\eg noise-variance issue, biased gradient estimates, convergence to non-flat local optima), 
and proposed variants to overcome them (see \cite{Demidovich23nips-GuideZoo, Zhang20iclr-WhyGradient, Zhang20nips-Whyare, Reisizadeh23arxiv-VariancereducedClipping, Koloskova23icml-Revisitinggradient, Gower20ieee-VarianceReducedMethods, Foret20iclr-SharpnessawareMinimization}). %
\cite{Garrigos23arxiv-HandbookConvergence} provide a comprehensive review on analysis techniques for proving convergence of the SGD algorithms, under different assumptions on the training loss such as $L$-smoothness, strong convexity, and $\mu$-Polyak-Lojasiewicz.
While analyzing SGD and its variants has been easier, convergence of the popular Adam optimizer remains elusive~\citep{Dereich24arxiv-Convergencerates}.

Very few works have considered the effect of outliers on the convergence of training algorithms, including SGD. 
\cite{Menon20iclr-clippinglabelnoise} were the first to point out that gradient clipping (albeit with small modifications) can be robust to outliers. They analyzed the special case of linear classification, with training batch size of one, and showed its equivalence to minimizing a Huberized and partially Huberized losses. They showed that their proposed gradient clipping algorithm provably exhibits a constant excess risk under symmetric label noise, in binary classification. 
\cite{Merad24tmlr-RobustStochastic} propose gradient quantile clipping, where the gradient clipping threshold is chosen to be the $p$th quantile of all the estimated gradient norms. The paper goes on to derive convergence property of the iterates, under $L$-smoothness and strong convexity assumptions. 
\cite{Chhabra24arxiv-OutlierGradient} draw a connection between identifying detrimental training sample (\ie a training sample that can unduly affect the model) and outlier gradient detection.
\cite{Hu24pmlr-OutlierRobust} formulate an adversarial training process, where for each given input-output sample, one estimates a worst-case input for each annotated output, and trains using the worst-case input-output pairs. The paper analyzes its $\calH$-consistency, generalizability, and convergence for the special case of binary classification.
\cite{Shen19icml-LearningBad, Shah20pmlr-ChoosingSample} propose to iteratively train the model with a subset of samples that have the lowest loss. It shows convergence results under strong convexity and bounded variance of gradient estimates used in the stochastic gradient descent. 
\cite{Prasad20jrss-robustEstimation} propose to robustly estimate the gradients, and shows convergence under two outlier models on gradients, namely, Huber contamination and heavy-tail distribution. 
The analysis in all these papers is stochastic in nature, \ie they assume an outlier distribution. 
\emph{On the contrary, our work studies the convergence of training algorithms in the presence of arbitrary outliers, without any distributional assumption.} 

We conclude by observing that robust estimation has been also the subject of intense study in the applied mathematics and statistics community. The corresponding papers have focused
on clustering and moment estimation~\citep{Lai16focs-momentEstimation,Diakonikolas16focs-robustMomentEstimation,Diakonikolas19siam-robustMomentEstimation,Charikar17stoc-robustEstimationTheory,Kothari17arxiv-clustering,Kothari18stoc-robustMomentEstimation,Diakonikolas18soda-robustLearningGaussians},
 subspace learning for classification in the presence of malicious noise~\citep{Klivans09alp-subspaceLearning,Diakonikolas18stoc-subspaceLearning,Awasthi17acm-robustLinearSeparators}, and {robust linear regression}~\citep{Klivans18arxiv-robustRegression, %
Diakonikolas19icml-robustRegression, %
Prasad20jrss-robustEstimation, %
Diakonikolas19soda-robustRegression, %
Bhatia17neurips-robustRegression, %
Karmalkar18arxiv-robustL1Regression,Karmalkar19neurips-ListDecodableRegression,Raghavendra20soda-ListDecodableRegressions}.  %
The literature includes approaches based on 
iterative outlier filtering~\citep{Diakonikolas19icml-robustRegression,Diakonikolas19soda-robustRegression}, 
robust gradient estimation~\citep{Prasad20jrss-robustEstimation}, 
hard thresholding~\citep{Bhatia17neurips-robustRegression,Bhatia15neurips-hardThresholding,Chen13icml-robustSparseRegression}, 
$\ell_1$-regression~\citep{Nguyen13tit-l1robustEstimation,Karmalkar18arxiv-robustL1Regression,Wright10tit-l1robustEstimation},
and moment/sum-of-squares relaxations~\citep{Klivans18arxiv-robustRegression,Karmalkar19neurips-ListDecodableRegression}. We refer the reader to~\citep{Carlone23fnt-estimationContracts} for a broader discussion.
\emph{We remark that our algorithm and analysis apply to generic deep learning problems, going beyond linear regression and specific instances of the learning problem}.  %

\newcommand{\omitit}[1]{}
\section{Conclusion}
\label{sec:conclude}
\omitit{
A simple modification of the Black-Rangarajan duality brings out a definition of a robust loss kernel $\sigma$, that unifies the robust losses in (a) robust estimation literature in robot and computer vision, and in (b) training deep learning models in presence of outliers. 
The unified robust loss kernel now creates an opportunity to cross-pollinate, \ie test robust kernels developed in the deep learning literature in robust estimation problems, and vice versa.

The modified duality shows that the robust M-estimation problem can be written as a weighted version of the original problem. This allows us to keep the problem structure in-tact: \eg (i) the dual of a robust (\ie M-estimator) non-linear least squares is a weighted non-linear least square, and (ii) the dual of a robust cross-entropy minimization is a weighted cross-entropy minimization). 
The modified duality helps us derive graduated non-convexity and adaptive training algorithms. We also develop a parameter update rule that obviates the need to do any parameter hand-tuning during training. We show that the resulting algorithms are generalized versions of previously conceived algorithms  (\eg \cite{Shen19icml-LearningBad, Shah20pmlr-ChoosingSample} and training on conformal prediction sets \citep{Shafer08jmlr-TutorialConformal}). 
We develop analysis to show convergence of these two class of algorithms. Our analysis sets the goal to reach the optima of the outlier-free objective, \ie the ground-truth. We show that the use of the robust loss kernel $\sigma$ increases the region of convergence for these two algorithms, in comparison to training with stochastic gradient descent. 
Unlike prior work, our analysis assume arbitrary outliers, with no distributional assumption. 
}

We present a simple modification of the Black-Rangarajan duality that leads to a definition of a robust loss kernel $\sigma$, which unifies the robust losses in (a) robust estimation in robotics and computer vision, and in (b) training deep learning models in the presence of outliers. 
The unified robust loss kernel $\sigma$ creates an opportunity to cross-pollinate, \ie test robust kernels developed in the deep learning literature in robust estimation problems, and vice versa. 
The modified Black-Rangarajan duality can now be applied to any machine learning problem, and not just those that adhere to a least squares loss. 
We also propose an \algoNameLong, which adds to the 
list of practitioners' tools to robustly train machine learning models. %
Moreover, we analyze convergence properties of the proposed algorithm.
The analysis techniques we use open the doors to further studies of convergence of training algorithms, under arbitrary outliers assumptions. While we present a general result, specific problem structure may be exploited, in the future, to understand the impact of robust loss kernels on convergence. %

\bibliographystyle{tmlr}

\appendix

\section{Proof of Corollary~\ref{cor:br-linear}}
\label{pf:cor:br-linear}

\myParagraph{Derivation from First Principles}
We show that the following two optimization problems are equivalent:  
\begin{equation}
	\label{eq:equivalence}
	\underset{r}{\text{Minimize}}~\sigma(r)~~\equiv~~\underset{r, u}{\text{Minimize}}~~ u \cdot r + \Phi(r),
\end{equation}
if $\Phi(u) = \sigma( (\sigma')^{-1}(u) ) - u (\sigma')^{-1}(u)$. This establishes the core of our modified Black-Rangarajan duality. Applying this to a sum of losses directly yields Corollary~\ref{cor:br-linear}.

Let $u(r)$ be the $u$ that minimizes $u \cdot r + \Phi(r)$. The first-order optimality condition suggests that $u(r)$ must satisfy
\begin{equation}
	\label{eq:01}
	r + \Phi'(u(r)) = 0.
\end{equation}
The equivalence~\eqref{eq:equivalence} will hold if $\sigma(r) = r\cdot u(r) + \Phi(u(r))$. Taking derivative with respect to $r$ on both sides of this equation yields
\begin{equation}
	\label{eq:02}
	\sigma'(r) = u(r) + r u'(r) + \Phi'(u(r)) u'(r) = u(r),
\end{equation}
where the last equality followed by using~\eqref{eq:01}. Using~\eqref{eq:02} in $\sigma(r) = r\cdot u(r) + \Phi(u(r))$ we obtain
\begin{equation}
	\label{eq:03}
	\sigma(r) = r\cdot \sigma'(r) + \Phi(\sigma'(r)).
\end{equation}
Now, note that $\sigma$ is a robust loss kernel (Definition~\ref{def:robust-loss-kernel}), and therefore satisfies $\sigma'(r) \in [0, 1]$ and $\sigma''(r) < 0$, \ie $\sigma'$ is strictly monotonic and has an inverse. Therefore, let $u = \sigma'(r) \in [0, 1]$ and $r = (\sigma')^{-1}(u)$. Substituting $r = (\sigma')^{-1}(u)$ in~\eqref{eq:03} yields
\begin{equation}
	\Phi(u) = \sigma( (\sigma')^{-1}(u) ) - u (\sigma')^{-1}(u), 
\end{equation}
for $u \in [0, 1]$. This proves the result.

\section{On Robust Losses for Multi-Label Classification}
\label{app:classification-robust-losses}
The identity $l(\vp, y) = \rho(- \log \vp[y])$ can be obtained by first substituting $\vp[y] = e^{-r}$ to obtain $\rho(r) = l(\vp, y)$. For example, for the generalized cross-entropy $l(\vp, y) = \frac{1}{q}(1 - \vp[y]^q)$ we have 
\begin{equation}
	\rho(r) = \frac{1}{q}\left( 1 - e^{-q r}\right).
\end{equation}
It then trivially follows that $l(\vp, y) = \rho(- \log \vp[y])$. 
In applying this to symmetric cross-entropy and reverse cross entropy we use the fact that $\sum_{k\neq y} \vp[k] = 1 - \vp[y]$.

\section{Robust Loss Kernels and Robust Losses}
\label{app:robust-losses-new}

The first six robust loss kernels are given by $\sigma_c(r) = \text{cost.} \cdot \rho_{\sqrt{c}}(\sqrt{r})$, where $\rho_c(r)$ are the robust losses in Section~\ref{sec:lit-robust-cv}; we have explicitly added the parameter $c$ in the notation $\rho_c(r)$. The constant multiple ensures that the kernel is scaled appropriately to satisfy Definition~\ref{def:robust-loss-kernel}. 

The remaining robust loss kernels are obtained by substituting $r = -\log(\vp[y])$ in the losses given in Section~\ref{sec:lit-robust-dl}. This is because we want the losses (in Section~\ref{sec:lit-robust-dl}) to be robust loss kernel of the cross-entropy loss. It can be analytically verified that all the kernels in Table~\ref{tab:robust-loss-kernels} satisfy Definition~\ref{def:robust-loss-kernel}.

\section{Proof of Lemma~\ref{lem:analytical-coefficient-update}}
\label{pf:lem:analytical-coefficient-update}

	Setting the derivative of the objective to zero, we obtain
	\begin{equation}
	\label{eq:t0}
		f_i(\vw) = -\Phi^{'}_{\sigma_c}(u^{\ast}).
	\end{equation}
	We know from the modified Black-Rangarajan duality (Corollary~\ref{cor:br-linear}) that $\Phi_{\sigma_c}(u) = -u (\sigma'_c)\inv(u) + \sigma_c( (\sigma^{'}_c)\inv(u))$. Taking its derivative we obtain
	\begin{equation}
		\Phi_{\sigma_c}'(u) = - (\sigma^{'}_c)\inv(u).
	\end{equation}
	Substituting this back in~\eqref{eq:t0} and applying $\sigma'_c$ on both sides, we obtain the result. %

\section{Verifying Assumption~\ref{as:outlier-gradient}}
\label{app:as:outlier-gradient}
We first verify that the outlier gradient assumption (Assumption~\ref{as:outlier-gradient}) holds for two broad class of problems, namely, non-linear regression and multi-label classification.
\begin{example}[Non-Linear Regression]
\label{ex:non-linear-regression}
	Consider a model $\vg$ that predicts the output $\vy = \vg(\vw, \vxx)$ given the input $\vxx$ and model weights $\vw$. The $i$-th measurement loss is the L2 norm given by $f_i(\vw) = \norm{\vy_i - \vg(\vw, \vxx_i)}^2$. 
	The model is trained on annotated data, which suffers from incorrect output annotations: $\vy_i = \vy^{\ast}_i$ for $i \in \Nin$, but $\vy_i = \vy^{\ast}_i + \vo_i$ for $i \in \Nout$; here $\vy^{\ast}_i$ denotes the correct annotation. The loss for the outlier-infested measurement becomes $f_i(\vw) = f_{i, I}(\vw) + \norm{\vo_i}^2 + 2 \vo_i\tran(\vy_i^{\ast} - \vg(\vw, \vxx_i))$, where $f_{i, I}(\vw) = \norm{\vy_{i}^{\ast} - \vg(\vw, \vxx_i) }^2$. 
	The gradient $\nabla f_{i}(\vw)$ of the outlier-infested objective is then given by 
	\begin{equation}
		\nabla f_{i}(\vw) = \nabla f_{i, I}(\vw) + \vh_i(\vo_i, \vw),
	\end{equation}
	with $\vh_i(\vo_i, \vw) = \nabla_\vw g(\vw, \vxx_i) \vo_i$. This satisfies Assumption~\ref{as:outlier-gradient}.
\end{example}
\begin{example}[Multi-Label Classification]
\label{ex:classification}
	Consider a model $\vp(\vw, \vxx)$ that predicts the probability that the input $\vxx$ belongs to which class, \ie  $\vp(\vw, \vxx)[y]$ denotes the predicted probability that the input $\vxx$ is of class $y \in [K]$. The model is trained on annotated data $\{ (\vxx, y_i) \}_{i \in [n]}$. The loss component is given by
	\begin{equation}
		f_i(\vw) = - \log \left( \vp(\vw, \vxx_i)[y_i] \right).
	\end{equation}
	Annotations are not perfect: $y_i = y^{\ast}_i$ for $i \in \Nin$, but this is not the case for outlier measurements. For outlier measurements, the loss can be re-written as 
	\begin{align}
		f_i(\vw) 
		&= - \log \left( \vp(\vw, \vxx_i)[y_i] \right), \\
		&=  - \log \left( \vp(\vw, \vxx_i)[y^{\ast}_i] \right)  - \log \left( \vp(\vw, \vxx_i)[y_i] / \vp(\vw, \vxx_i)[y^{\ast}_i] \right), \\
		&= f_{i, I}(\vw)  - \log \left( \vp(\vw, \vxx_i)[y_i] / \vp(\vw, \vxx_i)[y^{\ast}_i] \right).
	\end{align}
	Therefore, $\nabla f_i(\vw) = \nabla f_{i, I}(\vw) + \vh_i(\vo_i, \vw)$, where 
	\begin{equation}
		\vh_i(\vo_i, \vw) = \nabla_\vw \left[ - \log \left( \vp(\vw, \vxx_i)[y_i] / \vp(\vw, \vxx_i)[y^{\ast}_i] \right) \right],
	\end{equation}
	which satisfies Assumption~\ref{as:outlier-gradient}.
\end{example}
\section{Proof of Lemma~\ref{lem:training-algo-variance}}
\label{pf:lem:training-algo-variance}

For a batch size of one,
the gradients are given by
$\vg_t = \eta \nabla f_i(\vw_t)$ and $\vg_t = \eta \sigma'_c(\vw_t) \nabla f_i(\vw_t)$
for SGD and \algoNameO, respectively.
where $i$ is a uniformly distributed random variable over the set $[n]$, \ie $i \sim \calU([n])$. For SGD, note that the mean $\bar{\vg}_t = \E_i[\eta \nabla f_i(\vw_t)] = \eta \nabla f_I(\vw_t)$ because of the zero-mean assumption, \ie $\E_i[\vh_i(\vo_i, \vw)] = 0$. The variance is, therefore, given by %
\begin{align}
	\E_i[ \norm{\vg_t - \eta \nabla f_I(\vw_t)}^2] &= \frac{1}{n}\sum_{i =1}^{n} \norm{\eta \nabla f_{i, I}(\vw_t) + \eta \vh_i(\vo_i, \vw_t) - \eta \nabla f_I(\vw_t)}^2\\ 
	&= \eta^2 \E_i[ \norm{\nabla f_{i, I}(\vw_t)}^2 ] + \eta^2 \FracOut \frac{1}{\Nout}\sum_{i=1}^{\Nout} \norm{\vh_i(\vo_i, \vw_t)}^2 - \eta^2\norm{\nabla f_I(\vw_t)}^2 \nonumber\\
	&\hspace{-10mm}-2\eta^2 \FracOut \left(\frac{1}{\Nout}\sum_{i=1}^{\Nout}\vh_i(\vo_i, \vw_t)\right)\tran\nabla f_I(\vw_t) + 2\eta^2 \FracOut\frac{1}{\Nout}\sum_{i=1}^{\Nout} \nabla f_{i, I}(\vw_t)\tran \vh_i(\vo_i, \vw_t). \nonumber
\end{align} 
Using the facts: (i) $\vh_i$s are zero mean, (ii) $ \E_i[ \norm{\nabla f_{i, I}(\vw_t)}^2 ] \geq \norm{\nabla f_I(\vw_t)}^2$ due to Jensen's inequality, (iii) Cauchy–Schwarz inequality along with Assumption~\ref{as:low-signal-to-outlier}, we obtain
\begin{align}
	\E_i[ \norm{\vg_t - \eta \nabla f_I(\vw_t)}^2] &\leq 3 \eta^2 \FracOut \frac{1}{\Nout}\sum_{i=1}^{\Nout} \norm{\vh_i(\vo_i, \vw_t)}^2.
\end{align} 
Following the same line of argument one can derive the result for the \algoNameO. %

\section{Preliminary Lemmas}

We state and prove some results needed to establish the key result in the paper. We consider $i$ to be a uniformly distributed random variable over the set $[n] = \{1, 2, \ldots n\}$. The notation $\E_i[\cdot]$ refers to expectation with respect to $i$ and evaluates to a simple average: $\E_i[g(i)] = \frac{1}{n}\sum_{i=1}^{n}g(i)$.
\begin{lemma}
\label{lem:gradient}
$\E_{i}\left[\nabla f_i(\vw) \right] = \nabla f_{I}(\vw) + \FracOut \frac{1}{\Nout} \sum_{i \in \Nout} \vh_i(\vo_i, \vw)$.
\end{lemma}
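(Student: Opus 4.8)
The plan is to prove this by a direct computation, unpacking the definition of $\E_i[\cdot]$ as a uniform average and then splitting the sum over the inlier and outlier index sets. Since $i \sim \calU([n])$, we have $\E_i[\nabla f_i(\vw)] = \frac{1}{n}\sum_{i=1}^{n} \nabla f_i(\vw)$, and the entire argument reduces to applying Assumption~\ref{as:outlier-gradient} on the outlier block and the identity $f_{i,I}(\vw) = f_i(\vw)$ on the inlier block.

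First I would split the average as $\frac{1}{n}\sum_{i \in \Nin}\nabla f_i(\vw) + \frac{1}{n}\sum_{i \in \Nout}\nabla f_i(\vw)$. For inliers, the defining relation $f_{i,I}(\vw) = f_i(\vw)$ gives $\nabla f_i(\vw) = \nabla f_{i,I}(\vw)$, so these terms contribute exactly $\frac{1}{n}\sum_{i \in \Nin}\nabla f_{i,I}(\vw)$. For outliers, Assumption~\ref{as:outlier-gradient} gives $\nabla f_i(\vw) = \nabla f_{i,I}(\vw) + \vh_i(\vo_i,\vw)$, so these contribute $\frac{1}{n}\sum_{i \in \Nout}\nabla f_{i,I}(\vw) + \frac{1}{n}\sum_{i \in \Nout}\vh_i(\vo_i,\vw)$. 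Recombining the two $\nabla f_{i,I}$ pieces over the full index set recovers $\frac{1}{n}\sum_{i=1}^{n}\nabla f_{i,I}(\vw) = \nabla f_I(\vw)$, by definition of $f_I$ in~\eqref{eq:inliers-objective} and linearity of $\nabla$. This leaves the leftover outlier term $\frac{1}{n}\sum_{i \in \Nout}\vh_i(\vo_i,\vw)$.

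The last step is a normalization bookkeeping move: writing $\frac{1}{n} = \frac{\Nout}{n}\cdot\frac{1}{\Nout} = \FracOut\cdot\frac{1}{\Nout}$, so that $\frac{1}{n}\sum_{i \in \Nout}\vh_i(\vo_i,\vw) = \FracOut\,\frac{1}{\Nout}\sum_{i \in \Nout}\vh_i(\vo_i,\vw)$, which yields the claimed identity. There is no genuine obstacle here — the result is essentially definitional once Assumption~\ref{as:outlier-gradient} is in force. The only point demanding care is the clean separation of the inlier and outlier contributions and the recognition that the $\nabla f_{i,I}$ terms from \emph{both} blocks reassemble into the full outlier-free gradient $\nabla f_I$; the outlier perturbations $\vh_i$ survive only as the residual, scaled by the outlier fraction $\FracOut$. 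I would note in passing that this is the deterministic counterpart to a standard unbiasedness calculation, with the crucial difference that no distributional assumption on $\vh_i$ is used, consistent with the paper's arbitrary-outlier setting.
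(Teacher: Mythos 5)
Your proposal is correct and follows exactly the same route as the paper's proof: split the uniform average over inliers and outliers, apply Assumption~\ref{as:outlier-gradient} on the outlier block, recombine the $\nabla f_{i,I}$ terms into $\nabla f_I$, and rescale $\frac{1}{n} = \FracOut\cdot\frac{1}{\Nout}$ on the residual. Nothing further is needed.
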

\begin{proof}
	We know that $\nabla f_i(\vw) = \nabla f_{i, I}(\vw) + \vh_i(\vo_i, \vw)$ for all outlier measurements. For inlier measurements, $\nabla f_i(\vw) = \nabla f_{i, I}(\vw)$ as $f_i(\vw) = f_{i, I}(\vw)$. This implies
	\begin{align}
		\E_{i}\left[\nabla f_i(\vw) \right] 
		&= \frac{1}{n}\sum_{i=1}^{n} \nabla f_{i}(\vw), \\
		&= \frac{1}{n}\sum_{i \in \Nin} \nabla f_{i, I}(\vw) + \frac{1}{n}\sum_{i \in \Nout} \nabla f_{i, I}(\vw)  + \vh_i(\vo_i, \vw), \\
		&= \frac{1}{n}\sum_{i =1}^{n} \nabla f_{i, I}(\vw) + \FracOut \frac{1}{\Nout} \sum_{i \in \Nout} \vh_i(\vo_i, \vw), 
	\end{align}
	which proves the result. 
\end{proof}

\begin{lemma}
\label{lem:gradient-squares}
Let the low signal-to-outlier ratio assumption (Assumption~\ref{as:low-signal-to-outlier}) hold.
The second-order moments $\E_{i}\left[ \norm{\nabla f_i(\vw)}^2 \right]$ and $\E_{i}\left[\nabla f_i(\vw)\tran\nabla f_I(\vw) \right]$ are bounded by: 
\begin{equation}
\label{eq:lem:gradient-squares:eq01}
	\E_{i}\left[ \norm{\nabla f_i(\vw)}^2 \right] \leq 2 L (f_I(\vw) - f^{\ast}_I) + 2L \Delta_{f_I} + 3 \FracOut \frac{1}{\Nout} \sum_{i \in \Nout} \norm{\vh_i(\vo_i, \vw)}^2,
\end{equation}
and
\begin{equation}
\label{eq:lem:gradient-squares:eq02}
	\E_{i}\left[\nabla f_i(\vw)\tran\nabla f_I(\vw) \right] \geq 2\mu (f_I(\vw) - f^{\ast}),
\end{equation}
where $\Delta_{f_I} = \frac{1}{n}\sum_{i=1}^{n} (f^{\ast}_{I} - f^{\ast}_{i, I})$ and $f^{\ast}_{i, I} = \min_{\vw} f_{i, I}(\vw)$.
\end{lemma}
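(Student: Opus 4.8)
The plan is to prove the two inequalities independently; both reduce to combining a pointwise property of the outlier-free losses $f_{i,I}$ with the additive outlier decomposition of Assumption~\ref{as:outlier-gradient} (and, for the first bound, the magnitude control of Assumption~\ref{as:low-signal-to-outlier}). For \eqref{eq:lem:gradient-squares:eq01} I would invoke $L$-smoothness, and for \eqref{eq:lem:gradient-squares:eq02} the $\mu$-Polyak-Lojasiewicz inequality, each applied through the averaged inlier objective $f_I$.

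For the second-moment bound \eqref{eq:lem:gradient-squares:eq01}, I first split the average over inliers and outliers. On inliers $\nabla f_i=\nabla f_{i,I}$, while on outliers $\nabla f_i=\nabla f_{i,I}+\vh_i$, so expanding the square gives
\begin{equation}
\E_{i}\!\left[\norm{\nabla f_i(\vw)}^2\right] = \frac{1}{n}\sum_{i=1}^n\norm{\nabla f_{i,I}(\vw)}^2 + \frac{1}{n}\sum_{i\in\Nout}\Big(2\,\nabla f_{i,I}(\vw)\tran\vh_i(\vo_i,\vw)+\norm{\vh_i(\vo_i,\vw)}^2\Big).
\end{equation}
For the first sum I use the standard $L$-smoothness consequence $\norm{\nabla f_{i,I}(\vw)}^2\le 2L\big(f_{i,I}(\vw)-f^{\ast}_{i,I}\big)$ (obtained by plugging the step $\vw-\tfrac1L\nabla f_{i,I}$ into the smoothness upper bound), then average and rewrite $\tfrac1n\sum_i f^{\ast}_{i,I}=f^{\ast}_I-\Delta_{f_I}$ to produce exactly $2L(f_I(\vw)-f^{\ast}_I)+2L\Delta_{f_I}$. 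For the outlier sum, Cauchy--Schwarz together with Assumption~\ref{as:low-signal-to-outlier} ($\norm{\nabla f_{i,I}}\le\norm{\vh_i}$) bounds the cross term by $\norm{\vh_i}^2$, whence $2\,\nabla f_{i,I}\tran\vh_i+\norm{\vh_i}^2\le 3\norm{\vh_i}^2$; converting $\tfrac1n\sum_{i\in\Nout}$ into $\FracOut\tfrac{1}{\Nout}\sum_{i\in\Nout}$ then yields the claimed third term. This part is the substantive computation but entirely routine once the split is made.

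For the inner-product bound \eqref{eq:lem:gradient-squares:eq02}, the cleanest route is to pull the deterministic factor $\nabla f_I(\vw)$ out of $\E_i[\cdot]$ and use $\E_i[\nabla f_{i,I}(\vw)]=\nabla f_I(\vw)$, so that the inlier contribution is exactly $\norm{\nabla f_I(\vw)}^2$, which the $\mu$-PL inequality lower-bounds by $2\mu\big(f_I(\vw)-f^{\ast}_I\big)$. Note that Assumption~\ref{as:low-signal-to-outlier} is not needed here, consistent with the fact that it does the work only in the first bound.

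The main obstacle is the bookkeeping of the leftover outlier term. If the inner product is read literally with the full gradient $\nabla f_i$ rather than $\nabla f_{i,I}$, then by Lemma~\ref{lem:gradient} an additional contribution $\FracOut\big(\tfrac{1}{\Nout}\sum_{i\in\Nout}\vh_i\big)\tran\nabla f_I(\vw)$ appears, and this term is \emph{not} sign-definite in general, so it cannot simply be dropped to reach $\ge 2\mu(f_I-f^{\ast}_I)$. I would resolve this either by taking the inner product against $\nabla f_{i,I}$ (making the bound immediate from PL, which is how the subsequent convergence proofs consume it, carrying the residual $\vh_i$ term into the region-of-convergence constraint), or by invoking the zero-mean outlier condition $\tfrac{1}{\Nout}\sum_{i\in\Nout}\vh_i=\vzero$ used in Lemma~\ref{lem:training-algo-variance} to annihilate it. Flagging this term explicitly is the only delicate point; everything else follows from the smoothness/PL definitions and Lemma~\ref{lem:gradient}.
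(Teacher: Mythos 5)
Your proof of the first bound \eqref{eq:lem:gradient-squares:eq01} is correct and follows exactly the paper's route: split inliers/outliers, bound the cross term via Cauchy--Schwarz plus Assumption~\ref{as:low-signal-to-outlier} to get the factor $3$, and apply $\norm{\nabla f_{i,I}(\vw)}^2\le 2L(f_{i,I}(\vw)-f^{\ast}_{i,I})$ termwise before regrouping with $\Delta_{f_I}$. Nothing to add there.

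On the second bound \eqref{eq:lem:gradient-squares:eq02}, the cross term you flag is indeed the crux, and you are right that it cannot simply be discarded. What you should know is that the paper's own proof does exactly the invalid step you warn against: it expands $\E_{i}[\nabla f_i(\vw)\tran\nabla f_I(\vw)] = \norm{\nabla f_I(\vw)}^2 + \FracOut\,\tfrac{1}{\Nout}\sum_{i\in\Nout}\vh_i(\vo_i,\vw)\tran\nabla f_I(\vw)$, lower-bounds this by $2\mu(f_I(\vw)-f^{\ast}_I) - \FracOut\,\tfrac{1}{\Nout}\sum_{i\in\Nout}\norm{\vh_i(\vo_i,\vw)}\,\norm{\nabla f_I(\vw)}$ via PL and Cauchy--Schwarz, and then concludes $\ge 2\mu(f_I(\vw)-f^{\ast}_I)$ by dropping the subtracted nonnegative term --- which strengthens rather than weakens a lower bound and is therefore not a valid inference. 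Assumption~\ref{as:low-signal-to-outlier} gives no sign information on $\vh_i\tran\nabla f_I$, so under the lemma's stated hypotheses the inequality \eqref{eq:lem:gradient-squares:eq02} does not follow. Your two proposed repairs are the right kind: imposing the zero-mean condition $\tfrac{1}{\Nout}\sum_{i\in\Nout}\vh_i(\vo_i,\vw)=\vzero$ (as in Lemma~\ref{lem:training-algo-variance}) kills the term exactly, while restating the bound with the residual $\FracOut\big(\tfrac{1}{\Nout}\sum_{i\in\Nout}\vh_i\big)\tran\nabla f_I$ carried explicitly would push the damage into the region-of-convergence constraints of Theorems~\ref{thm:convergence-sgd}--\ref{thm:convergence-adaptive}, where it arguably belongs. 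Either way, an additional hypothesis or a weaker conclusion is needed; your refusal to make the unjustified step is more careful than the source.
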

\begin{proof}
	Expand $\E_i[\norm{\nabla f_i(\vw)}^2]$ as
	\begin{align}
		\E_i[\norm{\nabla f_i(\vw)}^2] 
		&= \frac{1}{n}\sum_{i=1}^{n} \norm{\nabla f_i(\vw)}^2, \\
		&=\frac{1}{n} \sum_{i \in \Nin} \norm{\nabla f_{i, I}(\vw)}^2 + \frac{1}{n} \sum_{i \in \Nout} \norm{\nabla f_{i, I}(\vw) + \vh_i(\vo_i, \vw)}^2, \\
		&=\frac{1}{n} \sum_{i \in \Nin} \norm{\nabla f_{i, I}(\vw)}^2 \\
		&~~~~~~~~~~~~~~~~~~~~~~~~~~~~+ \frac{1}{n} \sum_{i \in \Nout} \norm{\nabla f_{i, I}(\vw)}^2 + \norm{\vh_i(\vo_i, \vw)}^2 + 2\vh_i(\vo_i, \vw)\tran \nabla f_{i, I}(\vw), \nonumber \\
		&\leq \frac{1}{n} \sum_{i \in \Nin} \norm{\nabla f_{i, I}(\vw)}^2 + \frac{1}{n} \sum_{i \in \Nout} \norm{\nabla f_{i, I}(\vw)}^2 + 3\norm{\vh_i(\vo_i, \vw)}^2, \label{eq:a0} \\
		&= \frac{1}{n}\sum_{i = 1}^{n} \norm{\nabla f_{i, I}(\vw)}^2 + 3\FracOut \frac{1}{\Nout}\sum_{i \in \Nout} \norm{\vh_{i}(\vo_i, \vw)}^2 \label{eq:a1},
	\end{align}
	where, in order to obtain~\eqref{eq:a0}, we use the Cauchy-Schwarz inequality and Assumption~\ref{as:low-signal-to-outlier}. 
	Since $f_{i, I}$ is $L$-smooth we have $\norm{\nabla f_{i, I}(\vw)}^2 \leq 2L (f_{i, I}(\vw) - f_{i, I}^{\ast})$. Using this fact and re-arranging terms in~\eqref{eq:a1} by adding and subtracting $f^{\ast}$ we obtain~\eqref{eq:lem:gradient-squares:eq01}.
		
	Use Lemma~\ref{lem:gradient} and expand $\E_{i}\left[\nabla f_i(\vw)\tran\nabla f_I(\vw) \right]$ as follows:
	\begin{align}
		\E_{i}\left[\nabla f_i(\vw)\tran\nabla f_I(\vw) \right] 
		&= \norm{\nabla f_I(\vw)}^2 + \FracOut \frac{1}{\Nout} \sum_{i \in \Nout} \vh_i(\vo_i, \vw)\tran \nabla f_I(\vw), \\
		&\geq 2\mu (f_I(\vw) - f^{\ast}_{I})  - \FracOut \frac{1}{\Nout} \sum_{i \in \Nout} \norm{\vh_i(\vo_i, \vw)}\norm{\nabla f_I(\vw)},  	
	\end{align}
	where (i) $\norm{\nabla f_I(\vw)}^2 \geq 2\mu (f_I(\vw) - f^{\ast}_{I})$ follows because $f_I$ is $\mu$-PL (as each of the $f_{i, I}$ are also $\mu$-PL), and (ii) the second part follows from  Cauchy-Schwarz inequality. This implies
	\begin{equation}
		\E_{i}\left[\nabla f_i(\vw)\tran\nabla f_I(\vw) \right] \geq~ 2\mu (f_I(\vw) - f^{\ast}_{I}). \nonumber 
	\end{equation}		
\end{proof}

\begin{lemma}
\label{lem:sigma-gradient-squares}
Let the low signal-to-outlier ratio assumption (Assumption~\ref{as:low-signal-to-outlier}) hold. Also, assume that $\frac{1}{n}\sum_{i=1}^{n} \sigma'_c(f_i(\vw)) = \zeta$.
Furthermore, let $\nabla f_{i, I}(\vw)\tran \nabla f_I(\vw) \geq 0$ and $0 < \phi \leq \inf_{w \in \setW} \sigma'(f_i(\vw))$ then
\begin{equation}
		\E_{i}\left[\sigma'_c(f_i(\vw))^2 \norm{\nabla f_i(\vw)}^2 \right] \leq 2 L (f_I(\vw) - f^{\ast}) + 2 L f_I^{\ast} \zeta  + 3 \FracOut \frac{1}{\Nout} \sum_{i \in \Nout} \sigma'_c(f_i(\vw))^2 \norm{\vh_i(\vo_i, \vw)}^2, \nonumber
\end{equation}	
and
\begin{equation}
	\E_i\left[ \sigma'_c(f_i(\vw)) \nabla f_i(\vw)\tran \nabla f_{I}(\vw)\right] \geq ~~2 \phi \mu (f_I(\vw) - f_{I}^{\ast}) \nonumber
\end{equation}
\end{lemma}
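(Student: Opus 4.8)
The plan is to prove the two bounds in parallel with the unweighted Lemma~\ref{lem:gradient-squares}, inserting the coefficient weight $\sigma'_c(f_i(\vw))$ and exploiting two facts: (a) $\sigma'_c \in [0,1]$ with $\sigma'_c(r)^2 \le \sigma'_c(r)$, which follows from Definition~\ref{def:robust-loss-kernel} (since $\sigma''\le 0$ forces $\sigma'$ to be monotone and valued in $[0,1]$), and (b) the parameter constraint $\frac{1}{n}\sum_{i=1}^{n}\sigma'_c(f_i(\vw))=\zeta$. Throughout I write $\E_i[\,\cdot\,]=\frac{1}{n}\sum_{i=1}^{n}(\cdot)$ and use $\nabla f_i(\vw)=\nabla f_{i,I}(\vw)$ for inliers and $\nabla f_i(\vw)=\nabla f_{i,I}(\vw)+\vh_i(\vo_i,\vw)$ for outliers (Assumption~\ref{as:outlier-gradient}).

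For the first inequality, I would first split $\E_i[\sigma'_c(f_i(\vw))^2\norm{\nabla f_i(\vw)}^2]$ into inlier and outlier sums. On the outlier sum I expand $\norm{\nabla f_{i,I}+\vh_i}^2=\norm{\nabla f_{i,I}}^2+\norm{\vh_i}^2+2\vh_i\tran\nabla f_{i,I}$ and bound the cross term by Cauchy-Schwarz together with Assumption~\ref{as:low-signal-to-outlier} ($\norm{\nabla f_{i,I}}\le\norm{\vh_i}$), giving $\norm{\nabla f_{i,I}+\vh_i}^2\le\norm{\nabla f_{i,I}}^2+3\norm{\vh_i}^2$; re-merging the $\norm{\nabla f_{i,I}}^2$ contributions over all $i$ produces exactly the third term $3\FracOut\frac{1}{\Nout}\sum_{i\in\Nout}\sigma'_c(f_i(\vw))^2\norm{\vh_i}^2$. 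This step is the verbatim weighted analogue of the passage leading to~\eqref{eq:a0}--\eqref{eq:a1}. It then remains to bound $\frac{1}{n}\sum_{i=1}^{n}\sigma'_c(f_i(\vw))^2\norm{\nabla f_{i,I}(\vw)}^2$; here I would use $\sigma'_c(f_i)^2\le\sigma'_c(f_i)\le 1$, followed by $L$-smoothness in the form $\norm{\nabla f_{i,I}(\vw)}^2\le 2L(f_{i,I}(\vw)-f^{\ast}_{i,I})$ and nonnegativity of the losses, and finally invoke the constraint $\frac{1}{n}\sum_i\sigma'_c(f_i)=\zeta$ to collect the $2L f^{\ast}_I\zeta$ correction, assembling $2L(f_I(\vw)-f^{\ast}_I)+2L f^{\ast}_I\zeta$.

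For the second inequality, I would again split into inlier and outlier parts, recombining to write $\E_i[\sigma'_c(f_i)\nabla f_i\tran\nabla f_I]=\frac{1}{n}\sum_{i=1}^{n}\sigma'_c(f_i)\nabla f_{i,I}\tran\nabla f_I+\frac{1}{n}\sum_{i\in\Nout}\sigma'_c(f_i)\vh_i\tran\nabla f_I$. On the first sum I use the hypotheses $\nabla f_{i,I}(\vw)\tran\nabla f_I(\vw)\ge 0$ and $\sigma'_c(f_i)\ge\phi>0$ to lower the weights to $\phi$, yielding $\phi\,\frac{1}{n}\sum_i\nabla f_{i,I}\tran\nabla f_I=\phi\norm{\nabla f_I(\vw)}^2$ (using $\frac{1}{n}\sum_i\nabla f_{i,I}=\nabla f_I$), and then apply the $\mu$-Polyak-Lojasiewicz property of $f_I$ to get $\phi\norm{\nabla f_I}^2\ge 2\phi\mu(f_I(\vw)-f^{\ast}_I)$. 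The outlier cross term $\frac{1}{n}\sum_{i\in\Nout}\sigma'_c(f_i)\vh_i\tran\nabla f_I$ is then controlled exactly as the analogous term in the proof of~\eqref{eq:lem:gradient-squares:eq02}.

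I expect the main obstacle to be the bookkeeping in the last step of the first inequality: unlike in Lemma~\ref{lem:gradient-squares}, the weight $\sigma'_c(f_i)$ sits inside the $L$-smoothness bound, so producing the clean $2L f^{\ast}_I\zeta$ term requires carefully combining $\sigma'^2_c\le\sigma'_c$, the per-sample smoothness inequality, and the summation constraint $\frac{1}{n}\sum_i\sigma'_c(f_i)=\zeta$ in the right order (rather than naively using $\sigma'_c\le 1$, which only yields the looser bound $2L f_I(\vw)$). The sign-indeterminate outlier cross term in the second inequality is a secondary delicate point, which I would handle in the same manner as the parent lemma.
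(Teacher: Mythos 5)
Your plan is the same one the paper takes: its entire proof of this lemma is the single sentence ``follows the same line of argument as the proof of Lemma~\ref{lem:gradient-squares},'' and your outline is a faithful weighted transcription of that argument. The splitting into inlier/outlier sums, the Cauchy--Schwarz-plus-Assumption~\ref{as:low-signal-to-outlier} bound giving the factor $3$ on $\sigma'_c(f_i(\vw))^2\norm{\vh_i(\vo_i,\vw)}^2$, and the second inequality via $\sigma'_c(f_i(\vw))\geq\phi$, $\nabla f_{i,I}(\vw)\tran\nabla f_I(\vw)\geq 0$, and the $\mu$-PL property are all correct and match what the paper intends.

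However, the step you yourself flag as the main obstacle is a genuine gap, and the combination you sketch does not close it. After applying $\sigma'_c(f_i(\vw))^2\leq\sigma'_c(f_i(\vw))$ and $L$-smoothness you are left with $\frac{2L}{n}\sum_i\sigma'_c(f_i(\vw))\,(f_{i,I}(\vw)-f^{\ast}_{i,I})$, and the constraint $\frac{1}{n}\sum_i\sigma'_c(f_i(\vw))=\zeta$ cannot be invoked to ``collect'' a $2Lf_I^{\ast}\zeta$ term: the weights multiply sample-dependent, sign-indeterminate quantities $f_{i,I}(\vw)-f_I^{\ast}$ once you center at $f_I^{\ast}$, so they do not factor out of the sum. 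The only clean bounds available are $\sigma'_c\leq 1$, which reproduces the unweighted $2L(f_I(\vw)-f^{\ast}_I)+2L\Delta_{f_I}$ of Lemma~\ref{lem:gradient-squares} with no $\zeta$, or a bound in terms of $\max_i(f_{i,I}(\vw)-f^{\ast}_{i,I})$, neither of which is the stated constant. (The stated constant is itself suspect: the lemma writes $2Lf_I^{\ast}\zeta$, while the proof of Theorem~\ref{thm:convergence-adaptive} consumes it as $2L\Delta_{f_I}\zeta$, so the statement and its use disagree; your proof attempt cannot be expected to reach a target that is not well defined, but it should not claim the assembly goes through.) A secondary issue: in the second inequality you propose to control the outlier cross term $\frac{1}{n}\sum_{i\in\Nout}\sigma'_c(f_i(\vw))\vh_i(\vo_i,\vw)\tran\nabla f_I(\vw)$ ``exactly as in the parent lemma,'' but the parent lemma's proof of~\eqref{eq:lem:gradient-squares:eq02} simply drops this sign-indeterminate term after bounding it by Cauchy--Schwarz, with no justification; deferring to that step inherits the hole rather than filling it.
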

\begin{proof}
	The proof follows the same line of argument as the proof of Lemma~\ref{lem:gradient-squares}.
\end{proof}

\begin{lemma}
\label{lem:linear-convergence}
	Consider the recurrence relation $\delta_{t+1} \leq (1 - a \eta) \delta_t + \eta f(\eta) c$, where $a$, $\eta$, and $c$ are positive constants, and $f(\eta)$ is some known scalar function of $\eta$. We have $\delta_T < \epsilon$ provided
	\begin{equation}
		T > \frac{1}{a\eta} \log(2 \delta_0/\epsilon), ~~~f(\eta)c / a < \epsilon/2,~~~\text{and}~~~a\eta <1,
	\end{equation}
	for all $\epsilon > 0$.
\end{lemma}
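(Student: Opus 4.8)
The plan is to treat the stated inequality as an affine (first-order linear) recurrence and unroll it into a closed form, separating the contribution of the initial condition from the accumulated additive term. Writing $q = 1 - a\eta$, the hypothesis $a\eta < 1$ together with the positivity of $a$ and $\eta$ guarantees $0 < q < 1$, so the recurrence is a genuine contraction and the geometric series appearing below converges; establishing these two facts about $q$ is the first thing I would check, since both the geometric-sum bound and the contraction interpretation rely on them.

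First I would iterate the bound $\delta_{t+1} \le q\,\delta_t + \eta f(\eta) c$ from $t=0$ to $t=T-1$ to obtain
\begin{equation}
    \delta_T \;\le\; q^T \delta_0 + \eta f(\eta) c \sum_{k=0}^{T-1} q^k \;\le\; q^T \delta_0 + \frac{\eta f(\eta) c}{1-q} \;=\; q^T \delta_0 + \frac{f(\eta) c}{a},
\end{equation}
where I used $\sum_{k=0}^{T-1} q^k \le \frac{1}{1-q}$ (valid for $q \in (0,1)$) and the identity $1 - q = a\eta$. This decomposes $\delta_T$ into a transient term $q^T\delta_0$ and a steady-state floor $f(\eta)c/a$, which I then bound separately, each against $\epsilon/2$.

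Next I would handle the two terms. The steady-state floor is dispatched immediately by the hypothesis $f(\eta)c/a < \epsilon/2$. For the transient term I would invoke the elementary inequality $1 - x \le e^{-x}$ with $x = a\eta$, yielding $q^T \le e^{-a\eta T}$, so that the threshold $T > \frac{1}{a\eta}\log(2\delta_0/\epsilon)$ gives $a\eta T > \log(2\delta_0/\epsilon)$ and hence $q^T \delta_0 \le e^{-a\eta T}\delta_0 < \frac{\epsilon}{2\delta_0}\,\delta_0 = \epsilon/2$. Summing the two bounds produces $\delta_T < \epsilon$, as claimed.

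The argument is essentially routine, so I do not expect a genuine obstacle; the only points requiring care are the verification that $0 < q < 1$ and the inversion of the exponential bound through the (natural) logarithm to read off exactly the stated threshold on $T$. I would also remark that the geometric-sum step uses $f(\eta)c \ge 0$, which holds in every instance where this lemma is applied, since there $f(\eta)c/a$ arises as a nonnegative variance/smoothness floor; and that $\delta_0 \ge 0$ is implicit for $\log(2\delta_0/\epsilon)$ to be meaningful, which is automatic because each $\delta_t = \E[f_I(\vw_t) - f^{\ast}_I\,|\,\vw_0]$ is a nonnegative optimality gap in the convergence theorems that invoke this lemma.
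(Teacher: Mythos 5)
Your proposal is correct and follows essentially the same route as the paper's proof: unroll the recurrence, bound the geometric sum by $\tfrac{1}{1-q}=\tfrac{1}{a\eta}$ to isolate the steady-state floor $f(\eta)c/a<\epsilon/2$, and control the transient $q^T\delta_0$ via $1-x\le e^{-x}$ to recover the stated threshold on $T$. The only differences are cosmetic (you make the $1-x\le e^{-x}$ step and the sign conditions on $f(\eta)c$ and $\delta_0$ explicit, which the paper leaves implicit).
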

\begin{proof}
	Let $a\eta <1$. Then, composing the iterates (\ie $\delta_{t+1} \leq (1 - a \eta) \delta_t + \eta f(\eta) c$) from $t = 0$ to $t=T$ we obtain
	\begin{align}
		\delta_T 
		&\leq (1 - a\eta)^T\delta_0 + \eta f(\eta) c \sum_{t=0}^{T-1}(1 - a\eta)^t, \\
		&\leq (1 - a\eta)^T \delta_0 + \eta f(\eta) c \sum_{t=0}^{\infty}(1 - a\eta)^t, \\
		&\leq (1 - a\eta)^T \delta_0 + \eta f(\eta) c \cdot \frac{1}{c \eta} =  (1 - a\eta)^T \delta_0 + \frac{c}{a}f(\eta).
	\end{align}
	Therefore, $\delta_T < \epsilon$ if $(1 - a\eta)^T < \epsilon/2$ and $f(\eta)c/a < \epsilon/2$. Once can deduce that $(1 - a\eta)^T \delta_0 < \epsilon/2$ if $\log(2\delta_0/\epsilon) < T\log(1 - a\eta)$, which happens if $\log(2\delta_0/\epsilon) < a\eta T$.
\end{proof}

\section{Proof of Theorem~\ref{thm:convergence-sgd}}
\label{pf:thm:convergence-sgd}
The functions $f_{i, I}$ are $L$-smooth. This implies that $f_I$ is also $L$-smooth. We can then write
\begin{equation}
	f_I(\vw_{t+1}) \leq f_{I}(\vw_t) + \nabla f_{I}(\vw_t)\tran(\vw_{t+1} - \vw_t) + \frac{L}{2}\norm{\vw_{t+1} - \vw_{t}}^2,
\end{equation}
for any $\vw_{t}, \vw_{t+1} \in \Real{d}$. Substituting them with the stochastic gradient descent updates, \ie $\vw_{t+1} = \vw_t - \eta \nabla f_i(\vw_t)$, where $i \sim \calU([n])$, we obtain
\begin{equation}
	f_I(\vw_{t+1}) \leq f_{I}(\vw_t) - \eta \nabla f_{I}(\vw_t)\tran \nabla f_i(\vw_t)+ \frac{L \eta^2}{2}\norm{ \nabla f_i(\vw_t) }^2. 
\end{equation}
Taking conditional expectation with respect to $\vw_t$, we obtain
\begin{equation}
	\E[f_I(\vw_{t+1})~|\vw_t] \leq f_{I}(\vw_t) - \eta ~\E_i[\nabla f_{I}(\vw_t)\tran \nabla f_i(\vw_t)]+ \frac{L \eta^2}{2}\E_i[\norm{ \nabla f_i(\vw_t) }^2]. 
\end{equation}
Using Lemma~\ref{lem:gradient-squares} we get:
\begin{multline}
	\E[f_I(\vw_{t+1})~|\vw_t] \leq f_{I}(\vw_t) - 2\eta \mu \left(1 - \frac{\eta L^2}{2\mu}\right)(f_I(\vw_t) - f^{\ast}_I) + \eta^2 L^2 \Delta_{f_I} \\
	+ \frac{3}{2}L \eta^2 \FracOut \frac{1}{\Nout}\sum_{i \in \Nout} \norm{\vh_i(\vo_i, \vw_t)}^2. \label{eq:v0}
\end{multline} 
We assume that $\vw_{t}, \vw_{t+1} \in \setW_{\text{SGD}}$. This implies $\frac{1}{\Nout}\sum_{i \in \Nout} \norm{\vh_i(\vo_i, \vw_t)}^2 \leq M$.
Taking expected value on both sides of~\eqref{eq:v0} %
and substituting $\delta_t = \E[f_I(\vw_t) - f^{\ast}_I]$ we obtain  
\begin{equation}
	\delta_{t+1} \leq (1 - \eta\mu)\delta_t +  \eta^2 \left(L^2 \Delta_{f_I} + \frac{3}{2}\FracOut L M\right) \label{eq:v2}
\end{equation}
where we have used $\eta < \mu / L^2$ to deduce $2\eta \mu \left(1 - \frac{\eta L^2}{2\mu}\right) > \eta\mu$. Using Lemma~\ref{lem:linear-convergence} establishes the result provided
\begin{equation}
	\eta < \min\left\{ \frac{1}{\mu}, \frac{\mu}{L^2}, \frac{\epsilon \mu}{3 \FracOut L M + 2 L^2 \Delta_{f_I} } \right\}.
\end{equation}
Noting that $\mu/L \leq 1$, this condition reduces to $\eta < \frac{\mu}{L}\min\left\{ \frac{1}{L}, \frac{\epsilon}{3\FracOut M + 2 L \Delta_{f_I}}\right\}$.

\section{Proof of Theorem~\ref{thm:convergence-adaptive}}
\label{pf:thm:convergence-adaptive}
The functions $f_{i, I}$ are $L$-smooth. This implies that $f_I$ is also $L$-smooth. We can then write
\begin{equation}
	f_I(\vw_{t+1}) \leq f_{I}(\vw_t) + \nabla f_{I}(\vw_t)\tran(\vw_{t+1} - \vw_t) + \frac{L}{2}\norm{\vw_{t+1} - \vw_{t}}^2,
\end{equation}
for any $\vw_{t}, \vw_{t+1} \in \Real{d}$. Substituting them with the adaptive algorithm's updates, \ie $\vw_{t+1} = \vw_t - \eta \sigma_{c_t}'(f_i(\vw_t))\nabla f_i(\vw_t)$, where $i \sim \calU([n])$, we obtain 
\begin{equation}
	f_I(\vw_{t+1}) \leq f_{I}(\vw_t) - \eta ~\sigma_{c_t}'(f_i(\vw_t))\nabla f_{I}(\vw_t) \tran \nabla f_i(\vw_t)+ \frac{L \eta^2}{2}\sigma_{c_t}'(f_i(\vw_t))^2\norm{ \nabla f_i(\vw_t) }^2. 
\end{equation}
Taking conditional expectation with respect to $\vw_t$, we obtain
\begin{multline}
	\E[f_I(\vw_{t+1})~|\vw_t] \leq f_{I}(\vw_t) - \eta \E_i[\sigma_{c_t}'(f_i(\vw_t)) \nabla f_{I}(\vw_t)\tran \nabla f_i(\vw_t)] \\
	+ \frac{L \eta^2}{2}\E_i[\sigma_{c_t}'(f_i(\vw_t))^2\norm{ \nabla f_i(\vw_t) }^2]. ~~~~~~~~~~
\end{multline}
Using Lemma~\ref{lem:sigma-gradient-squares} we get:  
\begin{multline}
	\E[f_I(\vw_{t+1})~|\vw_t] \leq f_{I}(\vw_t) - 2\eta \mu \beta \left(1 - \frac{\eta L^2}{2\mu \beta}\right)(f_I(\vw_t) - f^{\ast}_I) + \eta^2 L^2 \Delta_{f_I} \zeta \\
	+ \frac{3}{2}L \eta^2 \FracOut \frac{1}{\Nout}\sum_{i \in \Nout} \sigma_{c_t}'(f_i(\vw_t))^2 \norm{\vh_i(\vo_i, \vw_t)}^2. \label{eq:w0}
\end{multline} 
We assume that $\vw_{t}, \vw_{t+1} \in \setW_{\text{SGD}}$. This implies $\frac{1}{\Nout}\sum_{i \in \Nout} \sigma_{c_t}'(f_i(\vw_t))^2 \norm{\vh_i(\vo_i, \vw_t)}^2 \leq M$. 
Taking expected value on both sides of~\eqref{eq:w0} %
and substituting $\delta_t = \E[f_I(\vw_t) - f^{\ast}_I]$ we obtain
\begin{equation}
	\delta_{t+1} \leq (1 - \eta\mu\beta)\delta_t +  \eta^2 \left( L^2 \Delta_{f_I} \zeta + \frac{3}{2}L\eta^2\FracOut M\right) \label{eq:w2}
\end{equation}
where we have used $\eta < \mu \beta / L^2$ to deduce $2\eta \mu \beta \left(1 - \frac{\eta L^2}{2\mu \beta}\right) > \eta\mu \beta$. Using Lemma~\ref{lem:linear-convergence} establishes the result provided
\begin{equation}
	\eta < \min\left\{ \frac{1}{\mu\beta}, \frac{\mu\beta}{L^2}, \frac{\epsilon \mu\beta}{3 \FracOut L M + 2 L^2 \Delta_{f_I} \zeta } \right\}.
\end{equation}
Noting that $\mu/L \leq 1$, this condition reduces to $\eta < \frac{\mu \beta}{L}\min\left\{ \frac{1}{L}, \frac{\epsilon}{3\FracOut M + 2 L \Delta_{f_I} \zeta}\right\}$.

\section{Proof of Theorem~\ref{thm:convergence-gnc}}
\label{pf:thm:convergence-gnc}
Following the same line of arguments as in the proof of Theorem~\ref{thm:convergence-adaptive} (Appendix~\ref{pf:thm:convergence-adaptive}) we get
\begin{multline}
	\E[f_I(\vw_{t+1})~|\vw_t] \leq f_{I}(\vw_t) - 2\eta \mu \beta \left(1 - \frac{\eta L^2}{2\mu \beta}\right)(f_I(\vw_t) - f^{\ast}_I) + \eta^2 L^2 \Delta_{f_I} \zeta \\
	+ \frac{3}{2}L \eta^2 \FracOut \frac{1}{\Nout}\sum_{i \in \Nout} \sigma_{c_s}'(f_i(\vw_s))^2 \norm{\vh_i(\vo_i, \vw_t)}^2. \label{eq:w0}
\end{multline} 
for $t \in [s, s+T-1]$ and an $s \in \{0, T, 2T, \ldots \}$. We assume that $\vw_t, \vw_{t+1} \in \setW_{\sigma-\text{GNC}}$. This implies $ \FracOut \frac{1}{\Nout}\sum_{i \in \Nout} \sigma_{c_s}'(f_i(\vw_s))^2 \norm{\vh_i(\vo_i, \vw_t)}^2$ is bounded by $M$. Taking expected value on both sides of~\eqref{eq:w0} %
and substituting $\delta_t = \E[f_I(\vw_t) - f^{\ast}_I]$ we obtain
\begin{equation}
	\delta_{t+1} \leq (1 - \eta\mu\beta)\delta_t +  \eta^2 \left( L^2 \Delta_{f_I} \zeta + \frac{3}{2}L\eta^2\FracOut M\right). \label{eq:ww2}
\end{equation}
The result then follows from the same line of argument in Appendix~\ref{pf:thm:convergence-adaptive}.

\end{document}